\newcommand{\red}{}
\def \Method {APO}
\definecolor{Gray}{rgb}{0.8, 0.9, 1}
\definecolor{LightGray}{gray}{0.9}
\definecolor{codepurple}{rgb}{0.58,0,0.82}
\definecolor{VioletRed}{rgb}{0.915686,0.025490,0.364706}
\def \algname {APO}
\definecolor{LightCyan}{rgb}{0.8, 0.9, 1}
\definecolor{Gray}{gray}{0.86}
\titlespacing*{\section}{0pt}{*0.3}{*0.3}
\titlespacing*{\subsection}{0pt}{*0.3}{*0.3}
\titlespacing*{\subsubsection}{0pt}{*0.3}{*0.3}
\title{\huge Accelerated Preference Optimization for Large Language Model Alignment}
\author
{
	Jiafan He\thanks{Department of Computer Science, University of California, Los Angeles, CA 90095, USA; e-mail: {\tt jiafanhe19@ucla.edu}} 
	~~~and~~~
    Huizhuo Yuan\thanks{Department of Computer Science, University of California, Los Angeles, CA 90095, USA; e-mail: {\tt hzyuan@cs.ucla.edu}}
	~~~and~~~
	Quanquan Gu\thanks{Department of Computer Science, University of California, Los Angeles, CA 90095, USA; e-mail: {\tt qgu@cs.ucla.edu}}
}
\date{}
\begin{document}

\maketitle

\begin{abstract}
  Reinforcement Learning from Human Feedback (RLHF) has emerged as a pivotal tool for aligning large language models (LLMs) with human preferences. Direct Preference Optimization (DPO), one of the most popular approaches, formulates RLHF as a policy optimization problem without explicitly estimating the reward function. It overcomes the stability and efficiency issues of two-step approaches, which typically involve first estimating the reward function and then optimizing the policy via proximal policy optimization (PPO). Since RLHF is essentially an optimization problem, and it is well-known that momentum techniques can accelerate optimization both theoretically and empirically, a natural question arises: Can RLHF be accelerated by momentum? This paper answers this question in the affirmative. In detail, we first show that the iterative preference optimization method can be viewed as a proximal point method. Based on this observation, we propose a general Accelerated Preference Optimization (APO) framework, which unifies many existing preference optimization algorithms and employs Nesterov's momentum technique to speed up the alignment of LLMs. Theoretically, we demonstrate that APO can achieve a faster convergence rate than the standard iterative preference optimization methods, including DPO and Self-Play Preference Optimization (SPPO). Empirically, we show the superiority of APO over DPO, iterative DPO, and other strong baselines for RLHF on the AlpacaEval 2.0 benchmark.
\end{abstract}

\section{Introduction}

Large Language Models (LLMs) have emerged as a pivotal technique in the era of artificial general intelligence and recently demonstrated impressive capabilities in tasks such as text generation \citep{bubeck2023sparks,anil2023palm,touvron2023llama}, coding \citep{chen2021evaluating,austin2021program}, and problem solving \citep{cobbe2021training,wei2022chain}. A key element contributing to these achievements is the alignment of LLMs with human preference data, utilizing reinforcement learning from human feedback (RLHF) \citep{ziegler2019fine, christiano2017deep, ouyang2022training, bai2022training, munos2023nash}.

The standard RLHF method \citep{ouyang2022training} involves three main steps: feedback collection, reward modeling, and policy optimization. Specifically, the LLM receives human-generated prompts and produces several possible responses. Subsequently, human preferences for these responses are collected and used to train a reward model that matches these preferences. Finally, the policy optimization process updates the large language model using optimization algorithms such as Proximal Policy Optimization (PPO) \citep{schulman2017proximal} to produce responses with higher preference based on the trained reward model.

Recently, \citet{rafailov2023direct} introduced the Direct Preference Optimization (DPO) method based on the Bradley-Terry (BT) model~\citep{bradley1952rank}. This method skips the reward modeling process and replaces it with a reparameterized reward with respect to the policy, allowing for direct optimization of the LLM. It simplifies the implementation, while maintaining a comparable or even better performance than standard RLHF. {\red{Later, several works \citep{munos2023nash, azar2024general,wu2024self,rosset2024direct} considered the general preference model rather than the BT model, and designed algorithms (Nash-MD, IPO, SPPO, and DNO) that more flexibly represent human preferences while maintaining a simple implementation.}}
Nonetheless, both the standard RLHF process and its variations rely on the policy optimization process. In fact, for general optimization problems, it is well-known that momentum techniques \citep{polyak1964some,nesterov2013introductory,bubeck2015convex} can accelerate the convergence of the optimization algorithm in both theory and practice. Therefore, a natural question arises:
\begin{center}
Can reinforcement learning from human feedback be accelerated?
\end{center}
In this paper, we affirmatively answer this question. In details, following \citet{xu2023some,yuan2024self,chen2024self,wu2024self}, we consider the iterative preference optimization framework. In this framework, a series of models is constructed, with each model being improved based on the previous one using preference optimization algorithm such as DPO and SPPO. We demonstrate that the policy optimization problem under this framework resembles the proximal point method \citep{bregman1967relaxation,censor1992proximal,kiwiel1997proximal}. Based on this observation, we adapt Nesterov's acceleration method \citep{Nesterov1983AMF,nesterov2008accelerating,nesterov2013introductory,lin2018catalyst}, %\todoq{should cite nesterov's multiple papers}, 
and introduce a variant of the iterative preference optimization framework, named Accelerated Preference Optimization (\algname). At the core of APO is an extrapolation step after each policy update. The contributions of our work are highlighted as follows:

\begin{itemize}[leftmargin=*]
\item  \red{We propose a general preference optimization framework, \algname, based on Nesterov's momentum to accelerate preference optimization. Our theoretical analysis shows that iterative DPO achieves an $\tilde O(1/t)$ sub-optimality gap from the optimal policy under the Bradley-Terry (BT) model. As a comparison, our algorithm \algname\, achieves a smaller $\tilde O\big((1-\alpha)/t\big)$ sub-optimality gap, where $\alpha$ is the extrapolation parameter in the momentum. To the best of our knowledge, our work provides the first convergence analysis of vanilla iterative DPO and the first accelerated preference optimization algorithm with provable guarantees.
\item With an additional minimal sub-optimality gap assumption in the BT model, we prove that \algname \ will convergence to the optimal policy in total variation (TV) distance at the rate of $\exp\big(-O(t/(1-\alpha))\big)$,  improving the $\exp\big(-O(t)\big)$ rate of iterative DPO.  In addition, we extend our results to the general preference model and show that \algname\ with the SPPO loss function can also accelerate SPPO under a similar minimal sub-optimality gap assumption.}
%\todoh{First? what is DNO and chenglu doing?}
%\todoj{DNO is not iterative DPO, they require a explicit oracle to the reward in the algorithm; Chenglu do not update the reference policy}
\item Empirically, we verify the performance of \algname\,  when applied to DPO method on fine-tuning \texttt{Mistral-7B-Instruct-v0.2}. In the AlpacaEval 2.0 evaluation tasks, \algname\, with 3 iterations achieves a length-controlled win rate of 31.73\%, demonstrating a 1.78\% improvement over iterative DPO and a 5.34\% improvement over Snorkel's \texttt{Mistral-PairRM-DPO}. Moreover, \algname\, with 2 iterations obtains a win rate of 37.53\%, matching iterative DPO's 37.65\% with 3 iterations, with noticeably shorter response lengths. This is consistent with our theoretical analysis. In addition, the evaluation on five general instruction-following tasks from MT-Bench shows an average score of 9.57 out of 10, further demonstrating \algname's superior performance.
%We apply our method to train \texttt{Mistral-7B-Instruct-v0.2} on several benchmarks, including AlpacaEval, MT-Bench, and the Open LLM Leaderboard. Our experiments show that the model trained with \algname outperforms both DPO and iterative DPO and improves the base model’s average score from to. 
\end{itemize} 
\paragraph{Notation.} 
% \textcolor{red}{ 
For any positive integer $n$, we employ $[n]$ to denote the set $\{0,\dots, n\}$. For two sequences $\{a_n\}$ and $\{b_n\}$, we write $a_n=O(b_n)$ if there exists an absolute constant $C$ such that $a_n\leq Cb_n$. We use $\tilde O(\cdot)$ to further hide the logarithmic factors.

%\section{Related Work}

\section{Preliminaries}
In the setting of RLHF, we assume a finite context set $\cX$, and possible response set $\cY$. For any prompts $x\in\cX$, a policy $\pi: \cX \rightarrow \Delta(\cY)$ maps the prompt $x$ to the  discrete distributions
over the response set $\cY$. For a given context $x\in \cX$ collected from distribution $\rho$, we generate two responses $y_1,y_2$ with a reference policy $\mu$ and receive preferences from either humans or more advanced language models between these two responses $(y^w \succ y^l)$, where $y^w$ and $y^l$  represent the preferred and dispreferred generated responses in $\{y_1,y_2\}$. Following \citet{christiano2017deep,ouyang2022training,rafailov2023direct}, we assume the existence of a latent reward model $r^*(x,y)$, and the preference distribution satisfies the Bradley-Terry (BT) model \citep{bradley1952rank}:
\begin{align}
    \PP(y_1 \succ y_2|x)= \frac{\exp\big(r^*(x,y_1)\big)}{\exp\big(r^*(x,y_1)\big)+\exp\big(r^*(x,y_2)\big)}. \label{eq:BT-model}
\end{align}
 Under this assumption, the standard RLHF first estimates the reward model by minimizing the following negative log-likelihood of BT model:
\begin{align}
    \mathcal{L}(r)=-\EE_{(x,y^w,y^l)\sim \cD}\big[\log \sigma\big(r(x,y^w)-r(x,y^l)\big)\big],\label{eq:reward-model}
\end{align}
where $x$ is generated from distribution $\rho$, $\{y^w,y^l\}$ are collected with reference policy $\mu$ and $\sigma(z)=1/(1+\exp(-z))$ is the Sigmoid function. After the reward modeling phase, the LLM (i.e., the policy) is fine-tuned with the learned reward $r(x,y)$, which aims to maximize the expected reward with KL-regularization:
\begin{align}
    \pi \leftarrow \arg\max_{\pi\in \Pi}\EE_{x\sim \rho,y\sim \pi(\cdot|x)}[r(x,y)]-\beta \EE_{x\sim \rho}\big[\text{KL}\big( \pi(\cdot|x)\| \pi_{\text{ref}}(\cdot|x)\big)\big],\label{eq:ppo}
\end{align}
where $\Pi$ denotes the policy class, $\rho$ is the distribution of prompts, and the KL regularization with parameter $\beta>0$ is used to control the deviation of the learned policy $\pi$ from the reference policy $\pi_{\text{ref}}$. In detail, the optimization problem is usually solved with the PPO method \citep{schulman2017proximal}.

Later, \citet{rafailov2023direct} identified the following closed-form solution to the optimization problem in \eqref{eq:ppo}:
\begin{align}
    \pi(\cdot|x)=\frac{1}{Z(x)}\cdot \pi_{\text{ref}}(\cdot|x)\cdot \exp\bigg(\frac{r(x,\cdot)}{\beta}\bigg),\notag
\end{align}
where $Z(x)=\sum_{y} \pi_{\text{ref}}(x)\cdot \exp\big(r(x,y)/\beta\big)$ is the partition function. By reparameterizing the reward function by the policy and substituting it into the negative log-likelihood in \eqref{eq:reward-model}, \citet{rafailov2023direct} proposed the Direct Preference Optimization (DPO) method as follows:
\begin{align}
    \pi\leftarrow \arg \max_{\pi\in \Pi}\EE_{(x,y^w,y^l)\sim \cD}\bigg[ \log \sigma \bigg(\beta\log \frac{\pi(y^w|x)}{\pi_{\text{ref}}(y^w|x)}-\beta\log \frac{\pi(y^l|x)}{\pi_{\text{ref}}(y^l|x)}\bigg) \bigg],\notag
\end{align}
which avoids the explicit learning of a reward model. Here, for a finite dataset $\cD=\{(x_i,y_{i}^w,y_{i}^l)_{i=1}^{N}\}$ and a function $f: \cX\times \cY\times \cY \rightarrow \RR$, we denote the empirical expectation of function $f$ with respect to the dataset $\cD$ by $\EE_{(x,y^w,y^l)\sim \cD}\big[f(x,y^w,y^l)\big]=\sum_{i=1}^N f(x_i,y_i^w,y_i^l)/N$.

\section{Accelerated Preference Optimization}
%\subsection{}

In this section, we present a general framework for language model alignment, namely accelerated preference optimization (APO), which is built upon the iterative preference optimization framework.

\subsection{Iterative Preference Optimization Framework}
Under the iterative Preference Optimization framework \citep{xu2023some,yuan2024self,chen2024self,wu2024self}, the algorithm progressively updates the policy ${\pi}_t$, aiming to converge to the optimal policy. In detail, for each iteration $t\in[T]$, it designates the reference policy as the policy generated from the previous iteration, denoted by ${\pi_t}$. It estimates the reward model by minimizing the expected loss function $l(r, x, y^w, y^l)$ over the dataset $\mathcal{D}_t$:
\begin{align}
    r_t(\cdot,\cdot)&\leftarrow \arg \max_{r(\cdot,\cdot)}\EE_{(x,y^w,y^l)\sim \cD_t}\big[\ell(r,x,y^w,y^l,\pi_t)\big].\label{eq:0010}
\end{align}
Then, it updates the reference policy by solving the following KL-regularized optimization problem:
\begin{align}
    \hat{\pi}_{t+1} &\leftarrow \arg\max_{\pi\in \Pi} \EE_{x\sim \rho, y\sim \pi(\cdot|x)}[r_t(x, y)]  - \beta \EE_{x\sim \rho} \big[\text{KL} \big( \pi(\cdot|x) \| \pi_{t}(\cdot|x)\big)\big]. \label{eq:0011}
\end{align}
According to \citet{rafailov2023direct}, for each iteration $t \in [T]$, the optimization problem \eqref{eq:0011} has the following closed-form solution:
\begin{align*}
    \hat{\pi}_{t+1}(y|x) &\propto \pi_t(y|x) \cdot \exp\bigg(\frac{r_t(x, y)}{\beta}\bigg).
\end{align*}
Thus, we can reparamterize the reward function for each policy $\pi$ as follows:
\begin{align*}
    r_{\pi}(x, y) &= \beta \log \frac{\pi(y|x)}{\pi_{t}(y|x)}.
\end{align*}
With this reparameterized reward function, the previous two-step optimization process in \eqref{eq:0010} and~\eqref{eq:0011} can be integrated into the following one-step direct preference optimization:
\begin{align}
    \hat{\pi}_{t+1} &\leftarrow \arg \min_{r_\pi \in \mathcal{R}_t} \EE_{(x, y^w, y^l) \sim \mathcal{D}_t} \big[\ell(r_\pi, x, y^w, y^l,\pi_t)\big], \label{eq:0009}
\end{align}
where $\mathcal{D}_t$ represents the data collected at iteration $t$ using the reference policy $\pi_{t}$.

For the vanilla iterative preference optimization framework, the updated policy $\hat{\pi}_{t+1}$ is directly used as the reference policy in the next iteration, where $\pi_{t+1}=\hat{\pi}_{t+1}$.
In this situation, the iterative optimization of the policy resembles the (Bregman) Proximal Point Method \citep{bregman1967relaxation, censor1992proximal, kiwiel1997proximal}, which iteratively minimizes the following proximal subproblem:
\begin{align}
    {\pi}_{t+1}
    &\leftarrow \arg \min_{\pi \in \Pi}\Big\{f_t(\pi)=L_t(\pi)+\beta D(\pi,\pi_t)\Big\},\label{eq:0012}
\end{align}
for a given regularization parameter $\beta$ and Bregman divergence $D(\cdot,\cdot)$. In this reduction, the expected reward $-\EE_{x \sim \rho, y \sim \pi(\cdot|x)}[r_t(x,y)]$ corresponds to the target function $L_t(\pi)$ in the proximal function $f_t(\pi)$, and the Bregman divergence $D(\pi,\pi_t)$ is chosen as the Kullback–Leibler (KL) divergence to the behavior policy $\pi_t$: $\EE_{x \sim \rho} \big[\text{KL} \big(\pi(\cdot|x) | \pi_t(\cdot|x)\big)\big].$

\noindent\textbf{The Choice of Loss Function}
By choosing different loss function $\ell(r_\pi, x, y^w, y^l, \pi_t)$ in \eqref{eq:0009}, we can recover many existing iterative preference optimization algorithms. In detail, the loss function depends on the preference model $\PP$, and we provide several concrete examples of the loss function and the corresponding preference optimization algorithms as follows.

\begin{example}[DPO]\label{example:dpo}
If we choose the loss function in \eqref{eq:0009} as:
    \begin{align*}
        \ell_{\text{DPO}}(r_\pi,x,y^w,y^l,\pi_t)&=-\log \sigma \big(  r_\pi(x, y^w) - r_\pi(x, y^l)\big),
    \end{align*}
 it recovers DPO \citep{rafailov2023direct}.
    %For the Direct Preference Optimization (DPO) \citep{rafailov2023direct} algorithm, the preference distribution follows the Bradley-Terry (BT) model and the preference optimization process satisfied
%\begin{align*}
    %{\pi}_{t+1}&= \arg \min_{\pi\in \Pi}\EE_{x \sim \rho, (y^w,y^l)\sim \pi_t}-\bigg[ \log \sigma \bigg(\beta\log \frac{\pi(y^w|x)}{\pi_{t}(y^w|x)}-\beta\log \frac{\pi(y^l|x)}{\pi_{t}(y^l|x)}\bigg) \bigg]\notag\\
    %&=\arg \min_{r_{\pi\in \mathcal{R}_t}}\EE_{x \sim \rho, (y^w,y^l)\sim \pi_t}\Big[ \log \sigma \big(r_{\pi}(x,y^w)-r_{\pi}(x,y^l)\big) \Big].
%\end{align*}
    %Therefore, our algorithm reduce to the DPO learning process, if we set the loss function $\ell_{\text{DPO}}$ as following, 
    %\begin{align*}
     %   \ell_{\text{DPO}}(r_\pi,x,y^w,y^l,\pi_t)&=-\log \sigma \big(  r_\pi(x, y^w) - r_\pi(x, y^l)\big).
    %\end{align*}
\end{example}
%\begin{example}
% if we set the learning rate $\beta$ in Algorithm \ref{algo:dpo} to be the inverse of the learning rate $\eta$ in the SPPO algorithm, the Rebel loss can be written as following least square loss:
%    \begin{align*}
%        \ell_{\text{Rebel}}(r_\pi,x,y^w,y^l,\pi_t)=\Big(\big(r_{\pi}(x,y^w)-r_{\pi}(x,y^l)\big)-\big(r^*(x,y^w)-r^*(x,y^l)\big)\Big)^2.
%    \end{align*}
%\end{example}
\begin{example}[SPPO]\label{example:sppo}
If we choose the loss function in \eqref{eq:0009} as:
\begin{align*}
        \ell_{\text{SPPO}}(r_\pi,x,y^w,y^l,\pi_t)&=\frac{1}{2} \big(r_{\pi}(x,y^w)-1 + \log Z_{\pi_t}(x)\big)^2+\frac{1}{2}\big(r_{\pi}(x,y^l)+\log Z_{\pi_t}(x)\big)^2,
    \end{align*}
    where $Z_{\pi_t}(x)= \sum_{{y}\in \cY} \pi_t({y}|{x}) \exp\big(\eta \mathbb{P}({y} \succ \pi_t | {x})\big)$ represents the partition function for behavior policy $\pi_t$, it recovers the Self-Play Preference Optimization (SPPO) algorithm \citep{wu2024self}. (See Appendix \ref{sec:sppo-loss} for a more detailed discussion.)
%For the Self-Play Preference Optimization (SPPO) \citep{wu2024self} algorithm, the preference optimization process is aligned with our Algorithm \ref{algo:dpo} using the SPPO loss function 
%(See Appendix \ref{sec:sppo-loss} for more discussion):
%    \begin{align*}
%        \ell_{\text{SPPO}}(r_\pi,x,y^w,y^l,\pi_t)&=\frac{1}{2} \big(r_{\pi}(x,y^w)-1 + \log Z_{\pi_t}(x)\big)^2+\frac{1}{2}\big(r_{\pi}(x,y^l)+\log Z_{\pi_t}(x)\big)^2,
%    \end{align*}
%    where $Z_{\pi_t}(x)= \sum_{{y}\in \cY} \pi_t({y}|{x}) \exp\big(\eta \mathbb{P}({y} \succ \pi_t | {x})\big)$ represents the partition function for behavior policy $\pi_t$
\end{example}
%\begin{example}
%    The DNO loss can be written as the following cross-entropy loss:
%    \begin{align*}
%        \ell_{\text{DNO}}(r_\pi,x,y^w,y^l,\pi_t)&=-\Big(\EE_{y\sim \pi_t}\big[\PP(y^w \succ y|x)-\PP(y^l \succ y|x)\big]\Big)\log \sigma\big(r_{\pi}(x,y^w)-r_{\pi}(x,y^l)\big)\notag\\
%        &\qquad -\Big(\EE_{y\sim \pi_t}\big[\PP(y^l \succ y|x)-\PP(y^w \succ y|x)\big]\Big)\log \sigma\big(r_{\pi}(x,y^l)-r_{\pi}(x,y^w)\big)
%    \end{align*}
%\end{example}
\begin{example}[IPO]
If we choose the loss function in \eqref{eq:0009} as
    \begin{align*}
        \ell_{\text{IPO}}(r_{\pi},x,y^w,y^l,\pi_t)=\big(r_{\pi}(x,y^w)-r_{\pi}(x,y^l)-\tau^{-1}\big)^2,
    \end{align*}
where $\tau$ is a regularization parameter, it recovers the Identity Preference Optimization (IPO) algorithm \citep{azar2024general}.
%For the Identity Preference Optimization (IPO) algorithm \citep{azar2024general}, our algorithm shares the same optimization target if we use the following loss function:
\end{example}
\begin{algorithm}[t]
    \caption{\algname\, (Accelerated Preference Optimization)}\label{algo:dpo}
    \begin{algorithmic}[1]
    \STATE \textbf{input:} Reference policy $\pi_{\text{ref}}$, learning rate $\beta$, Nesterov’s extrapolation parameter $\alpha$, number of iterations $T$
    \STATE Initialize $\pi_{0}=\hat{\pi}_0=\pi_{\text{ref}}$
    \FOR{iteration $t=0,\ldots, T$}
        \STATE Collect the dataset $|\cD_t|=N$ with prompts distribution $x\sim \rho$ and current reference policy $\pi_t$\label{line:1}
        \STATE Set the reparameterized reward function class as following:
        \begin{align*}
            \mathcal{R}_t \leftarrow \bigg\{r_{\pi}(x,y) = \beta \log \frac{ \pi(y|x)}{ \pi_{t}(y|x)}\Big|\pi \in \Pi\bigg\}
        \end{align*}\label{line:reward}
        \STATE Update the policy $\hat{\pi}_{t+1}$ as following:
        \begin{align}
            \hat{\pi}_{t+1} \leftarrow \arg \min_{\pi \in \Pi} \EE_{(x,y^w,y^l)\sim \cD_t}\big[\ell (r_\pi,x,y^w,y^l,\pi_t)\big]\label{eq:single-update}
        \end{align} 
        \STATE Compute the policy $\pi_{t+1}$ with an extrapolation step
        \begin{align}
           \pi_{t+1}(y|x)\propto \hat{\pi}_{t+1}(y|x)\cdot \big(\hat{\pi}_{t+1}(y|x)/\hat{\pi}_{t}(y|x)\big)^{\alpha}\label{eq:single-momentum}
        \end{align}\label{line:5}
    \ENDFOR
    \STATE \textbf{output:} final policy $\hat{\pi}_{T+1}$
    \end{algorithmic}
\end{algorithm}
\subsection{Accelerated Preference Optimization}
So far, we have demonstrated that the iterative preference optimization framework resembles the proximal point method. 
For standard optimization problems, it is well known that Nesterov’s momentum method \citep{Nesterov1983AMF,nesterov2008accelerating,nesterov2013introductory} can accelerate the optimization algorithm both theoretically and empirically. In particular, \citet{lin2018catalyst} proposed a framework called Catalyst, which extends Nesterov’s momentum method to the proximal point method and has shown that it can accelerate it provably. In the Catalyst method, after solving the proximal operator 
\begin{align*}
    x_{t+1}= \arg\min_{x}\Big\{f_t(x)=f(x)+\kappa D(x,y_{t})\Big\},
\end{align*}
where $f(x)$ is the target function and $D(x,y_{t})$ is the Bregman divergence, an extrapolation step is introduced as follows:
\begin{align*}
    y_{t+1}=x_{t+1}+\alpha_t(x_{t+1}-x_t),
\end{align*}
where $\alpha_t$ is the Nesterov's extrapolation parameter and $x_{t+1}-x_{t}$ denotes the momentum from the previous update. 
Following the above idea, we introduce an extrapolation step after solving \eqref{eq:0009} to obtain $\pi_{t+1}$:
\begin{align}
    \log \pi_{t+1}(y|x)= \log \hat{\pi}_{t+1} +\alpha \big(\log \hat{\pi}_{t+1}-\log \hat{\pi}_{t}\big),\label{eq:1000}
\end{align}
where $\alpha>0$ is the fixed Nesterov's extrapolation parameter. After normalizing the policy $\pi_{t+1}(y|x)$, we 
obtain
\begin{align*}
    \pi_{t+1}(y|x)&\propto \hat{\pi}_{t+1}(y|x)\cdot \big(\hat{\pi}_{t+1}(y|x)/\hat{\pi}_{t}(y|x)\big)^{\alpha}\notag\\
    &=\frac{1}{Z'_{t}(x)}\cdot \hat{\pi}_{t+1}(y|x)\cdot \big(\hat{\pi}_{t+1}(y|x)/\hat{\pi}_{t}(y|x)\big)^{\alpha},
\end{align*}
where $Z'_t(x)=\sum_{y} \hat{\pi}_{t+1}(y|x)\cdot \big(\hat{\pi}_{t+1}(y|x)/\hat{\pi}_{t}(y|x)\big)^{\alpha}$ represents the partition function.

Putting together all the key components discussed above, we present the APO framework in Algorithm \ref{algo:dpo}.

We notice that the extrapolation step in \eqref{eq:1000} is similar to the model extrapolation technique introduced by \citet{zheng2024weak}, which aims to develop a stronger model by extrapolating from the aligned model and the SFT model. However, there are several notable differences between APO and  ExPO. First, the extrapolation step in ExPO is based on the strong assumption that a medium-aligned model can be interpolated between a weaker model and a stronger model--an assumption that lacks theoretical support. In contrast, APO is based on the observation that iterative preference optimization resembles the (Bregman) Proximal Point method and the extrapolation step follows the Nesterov's momentum technique \citep{Nesterov1983AMF, lin2018catalyst}, which has provable guarantee. 
Regarding algorithm design, \algname\, is an iterative algorithm for policy optimization. For the final iteration $T$, similar to the Catalyst algorithm, APO outputs the policy $\hat{\pi}_{T+1}$ before the extrapolation step. In contrast, ExPO is an one-shot algorithm which only has a single iteration and outputs the final policy after an extrapolation step. In this sense, ExPO can be seen as a special case of APO with only one iteration. 

\section{Theoretical Analysis}\label{section:theorey}
%\begin{assumption}\label{assumption:strong}
%\textbf{Strong Stochastic Transitivity (SST):} For any prompt $x$ and responses $y_1, y_2, y_3$, if the preference model satisfies $\PP(y_1 \succ y_2 | x) \geq \frac{1}{2}$ and $\PP(y_2 \succ y_3 | x) \geq \frac{1}{2}$, then we have
%\begin{align*}
%\PP(y_1 \succ y_3 | x) \geq \max\big\{\PP(y_1 \succ y_2 | x), \PP(y_2 \succ y_3 | x)\big\}.
%\end{align*}
%\end{assumption}
%\begin{assumption}\label{assumption:minimal-gap}
%   \textbf{Positive sub-optimality gap:} For any prompt $x$, we have
%\begin{align*}
%\PP(y^*_x \succ y | x) \geq \frac{1}{2}+\Delta, \forall y\ne y^*_x.
%\end{align*}
%\end{assumption}
%For the Bradley-Terry (BT) model, the Assumption \ref{assumption:minimal-gap} implies that a sub-optimality gap b
In this section, we provide a theoretical analysis of APO in Algorithm \ref{algo:dpo}. 

We begin with the following theorem, which outlines the optimization dynamics of the policy $\hat{\pi}_{t+1}$ over different iterations $t\in [T]$.
\begin{theorem}\label{theorem:constant-alpha}
    Suppose that $\hat{\pi}_{t+1}(y|x)\cdot \big(\hat{\pi}_{t+1}(y|x)/\hat{\pi}_{t}(y|x)\big)^{\alpha}$  belongs to the policy class $\Pi$ for each iteration $t\in[T]$. Then, the updated policy $\hat{\pi}_{t}$ in Algorithm \ref{algo:dpo} satisfies
    \begin{align*}
    \hat{\pi}_{t+1}(y|x) &= \frac{1}{Z_t(x)}\cdot  \pi_{\text{ref}}(y|x)\cdot \exp\Bigg(\frac{1}{\beta} \cdot \sum_{i=0}^t \bigg(\frac{1}{1-\alpha}-\frac{\alpha^{t+1-i}}{1-\alpha}\bigg)\cdot r_i(x,y)\Bigg),
\end{align*}
where $r_t(x,y)= \beta \log \hat{\pi}_{t+1}(y|x)-\beta \log \pi_{t}(y|x)$  represents the reparameterized reward at iteration $t$, and $Z_t(x)=\sum_{y} \pi_{\text{ref}}(y|x)\cdot \exp\big(\sum_{i=0}^t (1/(1-\alpha)-\alpha^{t+1-i}/(1-\alpha))\cdot r_i(x,y)/\beta\big)$ is the partition function.
\end{theorem}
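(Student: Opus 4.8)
The plan is to unroll the two updates performed at each iteration of Algorithm~\ref{algo:dpo} — the policy update \eqref{eq:single-update} and the extrapolation step \eqref{eq:single-momentum} — at the level of \emph{log}-policies, which turns them into a linear recursion solvable in closed form; all $x$-dependent normalization constants are collected into a single partition function $Z_t(x)$ only at the very end. Throughout I write $\equiv$ for equality of functions of $y$ up to an additive term depending on $x$ alone.

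First, by the definition $r_t(x,y)=\beta\log\hat\pi_{t+1}(y|x)-\beta\log\pi_t(y|x)$ stated in the theorem, we have the exact identity $\log\hat\pi_{t+1}(\cdot|x)=\log\pi_t(\cdot|x)+\beta^{-1}r_t(x,\cdot)$ for every $t\in[T]$. The extrapolation step \eqref{eq:single-momentum} gives $\log\pi_{t+1}(\cdot|x)\equiv(1+\alpha)\log\hat\pi_{t+1}(\cdot|x)-\alpha\log\hat\pi_{t}(\cdot|x)$; with the initialization $\pi_0=\hat\pi_0=\pi_{\text{ref}}$ this also holds at $t=0$ once we adopt the convention $\hat\pi_{-1}:=\hat\pi_0=\pi_{\text{ref}}$, since then $(1+\alpha)\log\hat\pi_0-\alpha\log\hat\pi_{-1}=\log\pi_{\text{ref}}\equiv\log\pi_0$. (The hypothesis that $\hat\pi_{t+1}(\hat\pi_{t+1}/\hat\pi_t)^{\alpha}\in\Pi$ is exactly what makes these extrapolated policies admissible reference policies, so that the reparameterization and hence the recursion below can be iterated for all $t\in[T]$.)

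Substituting the extrapolation identity into the reparameterization identity yields the second-order linear recursion
\begin{align*}
\log\hat\pi_{t+1}(\cdot|x)\equiv(1+\alpha)\log\hat\pi_{t}(\cdot|x)-\alpha\log\hat\pi_{t-1}(\cdot|x)+\beta^{-1}r_t(x,\cdot),\qquad t=0,\dots,T,
\end{align*}
with $\hat\pi_{-1}=\hat\pi_0=\pi_{\text{ref}}$. Introducing the first differences $D_t(\cdot|x):=\log\hat\pi_{t+1}(\cdot|x)-\log\hat\pi_t(\cdot|x)$ (the ``momentum''), the recursion collapses to the first-order one $D_t\equiv\alpha D_{t-1}+\beta^{-1}r_t(x,\cdot)$ with vanishing initial momentum $D_{-1}\equiv0$, whence $D_t\equiv\beta^{-1}\sum_{j=0}^{t}\alpha^{\,t-j}r_j(x,\cdot)$. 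Telescoping $\log\hat\pi_{t+1}=\log\pi_{\text{ref}}+\sum_{s=0}^{t}D_s$ and exchanging the order of summation gives
\begin{align*}
\log\hat\pi_{t+1}(\cdot|x)\equiv\log\pi_{\text{ref}}(\cdot|x)+\frac1\beta\sum_{i=0}^{t}\Big(\sum_{k=0}^{t-i}\alpha^{k}\Big)r_i(x,\cdot)=\log\pi_{\text{ref}}(\cdot|x)+\frac1\beta\sum_{i=0}^{t}\Big(\frac1{1-\alpha}-\frac{\alpha^{\,t+1-i}}{1-\alpha}\Big)r_i(x,\cdot).
\end{align*}
Exponentiating and dividing by $\sum_{y}\pi_{\text{ref}}(y|x)\exp\big(\beta^{-1}\sum_{i=0}^{t}(1/(1-\alpha)-\alpha^{t+1-i}/(1-\alpha))r_i(x,y)\big)=Z_t(x)$ removes the hidden $x$-term and produces exactly the claimed expression. (Equivalently, one may skip the difference substitution and verify the displayed closed form directly by induction on $t$, using the base case $\log\hat\pi_1=\log\pi_{\text{ref}}+\beta^{-1}r_0$ and the coefficient identity $(1+\alpha)\tfrac{1-\alpha^{t-i}}{1-\alpha}-\alpha\tfrac{1-\alpha^{t-1-i}}{1-\alpha}=\tfrac{1-\alpha^{t+1-i}}{1-\alpha}$ in the inductive step.)

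I do not expect a genuine obstacle here: once the recursion is written down, the rest is a routine geometric-series computation. The only two points requiring care are (i) the bookkeeping of the $x$-dependent normalization factors — these never touch the $y$-dependence, so deferring them to the final partition function $Z_t(x)$ is legitimate — and (ii) the boundary behavior, where the identity $\pi_0=\hat\pi_0=\pi_{\text{ref}}$ must be translated into the vanishing initial momentum $D_{-1}\equiv0$ (equivalently the convention $\hat\pi_{-1}:=\hat\pi_0$), which is what makes the $t=0$ instance consistent with the general recursion.
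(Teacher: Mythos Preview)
Your proof is correct and takes essentially the same route as the paper: both derive the first-order recursion on the log-policy differences (your $D_t$, the paper's $l_t=\beta D_t$), solve it as a geometric sum, and telescope. The only cosmetic difference is that you work modulo $x$-dependent additive terms via $\equiv$ from the outset, whereas the paper carries the normalization constants $Z'_t(x)$ explicitly through the recursion and absorbs them at the end; your treatment of the boundary via $\hat\pi_{-1}:=\hat\pi_0$ is also cleaner than the paper's (which has a harmless typo writing $\hat\pi_1$ where $\hat\pi_0$ is meant).
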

Theorem \ref{theorem:constant-alpha} illustrates how the policy $\hat{\pi}_{t+1}$ evolves with respect to the reparameterized reward $r_t(x,y)$, which is highly dependent on the choice of the loss function $\ell$ in Algorithm \ref{algo:dpo}.

For the Bradley-Terry (BT) model with the loss function $\ell_{\text{DPO}}$ in Example \ref{example:dpo}, Theorem 1 in \citet{rafailov2023direct} demonstrates that all reward functions compatible with the BT model can be expressed by the reparameterized reward. In addition, we introduce the following two assumptions, which are required by our analysis.
%At the beginning, we summarize our definition and assumptions as follows, and similar assumptions are used in \citet{rosset2024direct} to provide an analysis of the statistical error from the reparameterized reward.
%\begin{definition}[Feasible Policy Class]\label{def:space}
%    For each iteration $t\in[T]$, let $\Pi_t\subseteq \Pi$ represent the feasible policy class, which includes all potential policies ${\pi}_t$ generated by Algorithm \ref{algo:dpo} during iteration $t$, based on various possible samplings from the data collection process.
%\end{definition}
\begin{assumption}[Realizability]\label{assump: realizability}
    For each iteration $t\in[T]$ and each policy $\pi\in \Pi$, the following updated policy  belongs to the policy class $\Pi$:
    \begin{align*}
        \hat{\pi}(\cdot|x)=\frac{1}{Z_{\pi}(x)} \cdot\pi(\cdot|x)\cdot \exp\bigg(\frac{r^*(x,\cdot)}{\beta}\bigg)\in \Pi,
    \end{align*}
    where $Z_{\pi}(x)=\sum_{y} \pi(x)\cdot \exp\big(r^*(x,y)/\beta\big)$ is the partition function.
\end{assumption}
\begin{assumption}[Boundedness]\label{assump: boundedness}
    For each iteration $t\in[T]$ and each policy $\pi,\pi_t \in \Pi$, we have
    \begin{align*}
      \beta  \log \frac{\pi(y|x)}{\pi_t(y|x)} \in [-R,R],
    \end{align*}
    for all $x\in \cX,u\in \cY$.
\end{assumption}
Similar assumptions have been used in \citet{rosset2024direct} to provide an analysis of the statistical error for the reparameterized reward. Equipped with these assumptions, we have the following performance guarantee for APO.
\begin{theorem}[\algname \ with $\ell_{\text{DPO}}$]\label{theorem:2}
    For the Bradley-Terry (BT) model with loss function $\ell_{\text{DPO}}$, under Assumptions \ref{assump: realizability} and \ref{assump: boundedness}, with probability at least $1-\delta$, the sub-optimality gap between $\hat{\pi}_{T+1}$ and the optimal policy $\pi^*(x)=\arg\max_{y\in \cY} r^*(x,y)$ is bounded by
    \begin{align*}
        &\EE_{x\sim \rho, y\sim \pi^*(\cdot|x)}\big[r^*(x,y)\big]-\EE_{x\sim \rho,y\sim \hat{\pi}_{T+1}(\cdot|x)}\big[r^*(x,y)\big]\notag\\
        &\leq \tilde O\bigg(\frac{(1-\alpha)\beta}{T}\bigg)+O\Bigg(\sqrt{\frac{(T+1)\sum_{t=0}^T \kappa_t\cdot \log \big(T|\Pi|/\delta\big)}{N\beta^2(1-\alpha)^2}}\Bigg),
    \end{align*}
    where the coverage coefficient $\kappa_t$ is defined as:
    \begin{align}
        \kappa_t=\max_{(x,y)\in \cX\times \cY}\frac{\hat{\pi}_{T+1}(y|x)\pi^*_{T+1}(y|x)}{\pi^2_t(y|x)}.\label{eq:coefficient}
    \end{align}
\end{theorem}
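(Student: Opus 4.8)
The plan is to combine the closed-form characterization of $\hat\pi_{t+1}$ from Theorem \ref{theorem:constant-alpha} with a statistical-error bound on the reparameterized rewards $r_t$, and then telescope. First I would rewrite the sub-optimality gap in terms of the accumulated reward. By Theorem \ref{theorem:constant-alpha}, writing $w_{t,i} = \tfrac{1}{1-\alpha} - \tfrac{\alpha^{t+1-i}}{1-\alpha}$, we have $\beta\log\frac{\hat\pi_{T+1}(y|x)}{\pi_{\mathrm{ref}}(y|x)} = \sum_{i=0}^T w_{T,i}\, r_i(x,y) - \beta\log Z_T(x)$. The first step is therefore a standard KL-to-value conversion: for any comparator policy $\pi$ (in particular $\pi^*$), the quantity $\EE_{x,y\sim\pi}[\beta\log\frac{\hat\pi_{T+1}}{\pi_{\mathrm{ref}}}] - \EE_{x,y\sim\hat\pi_{T+1}}[\beta\log\frac{\hat\pi_{T+1}}{\pi_{\mathrm{ref}}}]$ equals a difference of KL terms, $\beta\,\mathrm{KL}(\pi\|\pi_{\mathrm{ref}}) - \beta\,\mathrm{KL}(\pi\|\hat\pi_{T+1})$, which is bounded above by $\beta\log|\cY|$ (or an appropriate boundedness constant); dividing through by the effective weight $\sum_i w_{T,i} \approx (T+1)/(1-\alpha)$ produces the $\tilde O\big((1-\alpha)\beta/T\big)$ leading term.

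The second ingredient is controlling how far each $r_t$ is from the true reward $r^*$. Here I would invoke Assumptions \ref{assump: realizability} and \ref{assump: boundedness} and run the usual MLE/log-likelihood concentration argument for DPO (as in \citet{rosset2024direct}): since $\hat\pi_{t+1}$ minimizes the empirical $\ell_{\mathrm{DPO}}$ loss over $\cD_t$ (with $N$ samples drawn from $\pi_t$), realizability guarantees the induced reward gap $r_t(x,y^w)-r_t(x,y^l)$ approximates $r^*(x,y^w)-r^*(x,y^l)$ in an $L^2(\pi_t\times\pi_t)$ sense up to $\tilde O(\sqrt{\log(|\Pi|/\delta)/N})$, after using boundedness $R$ to pass between log-loss and squared error. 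Because the differences of rewards are what the BT model identifies, I would then transfer this $\pi_t$-weighted error to a bound at the evaluation distribution using the coverage coefficient $\kappa_t$ defined in \eqref{eq:coefficient} — a change-of-measure step that introduces the $\sum_t\kappa_t$ factor. A union bound over $t\in[T]$ gives the $\log(T|\Pi|/\delta)$, and summing the per-iteration errors weighted by $w_{T,i}$, then renormalizing by $\big(\sum_i w_{T,i}\big)$, yields the second term $O\big(\sqrt{(T+1)\sum_t\kappa_t\log(T|\Pi|/\delta)/(N\beta^2(1-\alpha)^2)}\big)$ after a Cauchy–Schwarz over the $T+1$ summands.

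The main obstacle I anticipate is the change-of-measure / reward-identifiability step: DPO only pins down reward \emph{differences} under the sampling distribution $\pi_t\times\pi_t$, whereas the target is $\EE_{x,y\sim\pi^*}[r^*(x,y)] - \EE_{x,y\sim\hat\pi_{T+1}}[r^*(x,y)]$, a single-response functional at a different distribution. Bridging these requires carefully exploiting the closed form — noting that both $\hat\pi_{T+1}$ and $\pi^*$ are themselves exponential tilts of $\pi_{\mathrm{ref}}$ by (weighted sums of) the rewards, so the unknown additive constants in each $r_t$ cancel against the partition functions — and then the per-$(x,y)$ ratio $\hat\pi_{T+1}(y|x)\pi^*_{T+1}(y|x)/\pi_t^2(y|x)$ is exactly the density-ratio penalty one pays, which is why $\kappa_t$ appears as a product over two policies divided by $\pi_t^2$. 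The remaining work — verifying the weights $w_{T,i}$ are nonnegative and sum to the claimed value, choosing where the $1/(1-\alpha)^2$ lands, and the Cauchy–Schwarz bookkeeping — is routine.
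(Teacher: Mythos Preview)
Your high-level ingredients are right (closed form from Theorem~\ref{theorem:constant-alpha}, MLE concentration for DPO, change of measure, Cauchy--Schwarz over iterations), but the decomposition you outline is not the paper's, and one step does not line up with the specific $\kappa_t$ in the statement.

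The paper does \emph{not} run a KL-to-value/variational argument on $\hat\pi_{T+1}$ with comparator $\pi^*$. Instead it introduces the auxiliary policy
\[
\pi^*_{T+1}(y\mid x)\ \propto\ \pi_{\mathrm{ref}}(y\mid x)\exp\Big(\tfrac{1}{\beta}\textstyle\sum_{i=0}^T w_{T,i}\, r^*(x,y)\Big),
\]
i.e.\ the same closed form as $\hat\pi_{T+1}$ in Theorem~\ref{theorem:constant-alpha} but with the true reward $r^*$ replacing each $r_i$, and splits the sub-optimality into (i) $\EE_{\pi^*}[r^*]-\EE_{\pi^*_{T+1}}[r^*]$ and (ii) $\EE_{\pi^*_{T+1}}[r^*]-\EE_{\hat\pi_{T+1}}[r^*]$. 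Term (i) is pure optimization: since $\pi^*_{T+1}\propto\pi_{\mathrm{ref}}\exp(\gamma r^*)$ with $\gamma\asymp (T{+}1)/((1-\alpha)\beta)$, a convexity argument on $z\mapsto z\log z$ gives $\tilde O(1/\gamma)$. Term (ii) is bounded by $2\,\mathrm{D}_{\mathrm{TV}}(\hat\pi_{T+1},\pi^*_{T+1})$ and handled by Lemma~\ref{thm:statistic-error}; that lemma is where the stated $\kappa_t$ originates, because by construction $\log\frac{\hat\pi_{T+1}}{\pi^*_{T+1}}$ equals (up to a constant) $\tfrac{1}{\beta}\sum_i w_{T,i}(r_i-r^*)$, so the TV distance is controlled by an expectation over $y\sim\pi^*_{T+1},\ y'\sim\hat\pi_{T+1}$, and the change of measure to $\pi_t\times\pi_t$ produces exactly the ratio $\hat\pi_{T+1}\cdot\pi^*_{T+1}/\pi_t^2$.

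Your variational route---bounding $\EE_{\pi^*}[R]-\EE_{\hat\pi_{T+1}}[R]$ with $R=\sum_i w_{T,i}r_i$ by $\beta\,\mathrm{KL}(\pi^*\|\pi_{\mathrm{ref}})$, then correcting by $\sum_i w_{T,i}(r^*-r_i)$---does give the first term and is arguably cleaner than the paper's convexity computation. But the residual it leaves is $\EE_{y\sim\pi^*,\,y'\sim\hat\pi_{T+1}}[\cdot]$, so the change-of-measure penalty is $\max_{y'}\hat\pi_{T+1}(y'\mid x)/\big(\pi_t(y^*_x\mid x)\,\pi_t(y'\mid x)\big)$, involving the \emph{deterministic} optimal policy $\pi^*$, not $\pi^*_{T+1}$. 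Your closing assertion that the density ratio ``is exactly'' $\hat\pi_{T+1}\pi^*_{T+1}/\pi_t^2$ is therefore not supported by the argument you sketched: $\pi^*_{T+1}$ never enters your decomposition (and your parenthetical that ``both $\hat\pi_{T+1}$ and $\pi^*$ are exponential tilts'' conflates $\pi^*$ with $\pi^*_{T+1}$). To recover the $\kappa_t$ actually stated in the theorem you must route through the auxiliary $\pi^*_{T+1}$ explicitly, as the paper does.
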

\begin{remark}
The sub-optimality gap in Theorem \ref{theorem:2} consists of two terms: the optimization error $\tilde{O}\big((1-\alpha)\beta/T\big)$ and the statistical error $\tilde{O}\big(\sqrt{T/(N \beta^2(1-\alpha)^2)}\big)$, and there exists a tradeoff between these two errors. Specifically, a larger extrapolation parameter $\alpha \rightarrow 1$ will decrease the optimization error but increase the statistical error. Moreover, when the dataset size is sufficiently large ($N \rightarrow \infty$), Theorem \ref{theorem:2} suggests a $\tilde{O}\big((1-\alpha)\beta/T\big)$ sub-optimality gap for Algorithm \ref{algo:dpo} with the loss function $\ell_{\text{DPO}}$. Compared with the standard iterative DPO method with $\alpha = 0$, Algorithm~\ref{algo:dpo} improves the sub-optimality gap by a factor of $1/(1-\alpha)$ thanks to the Nesterov's momentum technique.
\end{remark}
Theorem \ref{theorem:2} only provides theoretical guarantee on the sub-optimality gap for the policy $\hat{\pi}_{T+1}$. In order to derive the convergence of  policy $\hat{\pi}_{T+1}$ to the optimal policy $\pi^*$, we need the following minimal sub-optimality gap assumption.

\begin{assumption}[Minimal sub-optimality gap]\label{assumption:gap}
For each prompt $x\in\cX$, let $y^*_{x}=\arg\max_{y\in \cY} r^*(x,y)$ be the optimal response. The sub-optimality gap between the optimal response and any other responses is strictly positive: $\min_{y\ne y^*_x} r^*(x,y^*_x)-r^*(x,y)\ge \Delta >0 $
\end{assumption}
\begin{remark}\label{remark:gap}
For a general Bradley-Terry (BT) model without the minimal sub-optimality gap, there may exist multiple responses that share the same maximum reward, i.e., $r(x,y_1) = r(x,y_2) = \max_{y \in \cY} r^*(x,y)$. In this case, the optimal policy is not unique, and it is impossible to show that the policy $\hat{\pi}_{T+1}$ converges to a specific optimal policy.
\end{remark}
\begin{theorem}[\algname \ with $\ell_{\text{DPO}}$]\label{corollary:1}
For the Bradley-Terry (BT) model with loss function $\ell_{\text{DPO}}$, under Assumptions \ref{assump: realizability}, \ref{assump: boundedness} and \ref{assumption:gap}, with probability at least $1-\delta$, the TV-distance between $\hat{\pi}_{T+1}$ and the optimal policy $\pi^*(x)=\arg\max_{y\in \cY} r^*(x,y)$ is bounded by
\begin{align*}
     &\EE_{x\sim \rho }\Big[\text{D}_{\text{TV}} \big( \hat{\pi}_{T+1}(\cdot|x),\pi^*(\cdot|x) \big)\Big]\notag\\
    &\leq \exp \Bigg(- O\bigg(\frac{T\Delta}{(1-\alpha)}\cdot \frac{1}{\beta}\bigg)\Bigg)+O\Bigg(\sqrt{\frac{(T+1)\sum_{t=0}^T \kappa_t\cdot \log \big(T|\Pi|/\delta\big)}{N\beta^2(1-\alpha)^2}}\Bigg),
\end{align*}
where the coverage coefficient $\kappa_t$ is defined in \eqref{eq:coefficient}.
\end{theorem}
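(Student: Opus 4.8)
The plan is to derive the exponential rate \emph{directly} from the closed form in Theorem~\ref{theorem:constant-alpha}, reusing the statistical part of the proof of Theorem~\ref{theorem:2} essentially verbatim and only redoing the optimization part under Assumption~\ref{assumption:gap}. It is worth noting first why the cheap route does not work: combining Theorem~\ref{theorem:2} with the identity $\EE_{y\sim\pi^*(\cdot|x)}[r^*(x,y)]-\EE_{y\sim\hat{\pi}_{T+1}(\cdot|x)}[r^*(x,y)]=\sum_{y\ne y^*_x}\hat{\pi}_{T+1}(y|x)\,(r^*(x,y^*_x)-r^*(x,y))\ge \Delta\cdot \text{D}_{\text{TV}}(\hat{\pi}_{T+1}(\cdot|x),\pi^*(\cdot|x))$, valid under Assumption~\ref{assumption:gap} since $\pi^*(\cdot|x)$ is the point mass on $y^*_x$, only yields the \emph{polynomial} bound $\tilde O\big((1-\alpha)\beta/(T\Delta)\big)$. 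The exponential improvement comes instead from the fact that $\hat{\pi}_{T+1}$ is a softmax whose effective inverse temperature grows linearly in $T$.

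First I would use Theorem~\ref{theorem:constant-alpha} to write $\hat{\pi}_{T+1}(y|x)\propto \pi_{\text{ref}}(y|x)\exp\big(\tfrac1\beta\sum_{i=0}^T w_{i,T}\,r_i(x,y)\big)$ with $w_{i,T}=(1-\alpha^{T+1-i})/(1-\alpha)\in(0,1/(1-\alpha)]$ and total mass $\lambda_T:=\sum_{i=0}^T w_{i,T}=\Theta\big(T/(1-\alpha)\big)$. For the BT model with $\ell_{\text{DPO}}$, Theorem~1 of \citet{rafailov2023direct} shows the population minimizer of the DPO loss at iteration $i$ has reparameterized reward $r_i(x,y)=r^*(x,y)+c_i(x)$ for an $x$-dependent constant $c_i(x)$; including the finite-sample error I write $r_i(x,y)=r^*(x,y)+c_i(x)+\epsilon_i(x,y)$. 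The $c_i(x)$ are constant in $y$ and cancel in the normalizer, so $\hat{\pi}_{T+1}(y|x)\propto \pi_{\text{ref}}(y|x)\exp\big(\tfrac{\lambda_T}{\beta}r^*(x,y)+\tfrac1\beta\sum_{i=0}^T w_{i,T}\epsilon_i(x,y)\big)$. Let $\pi^*_{T+1}(y|x)\propto \pi_{\text{ref}}(y|x)\exp\big(\tfrac{\lambda_T}{\beta}r^*(x,y)\big)$ be the corresponding noiseless policy---this is the $\pi^*_{T+1}$ appearing in the coverage coefficient~\eqref{eq:coefficient}---and split, via the triangle inequality, $\text{D}_{\text{TV}}(\hat{\pi}_{T+1}(\cdot|x),\pi^*(\cdot|x))\le \text{D}_{\text{TV}}(\hat{\pi}_{T+1}(\cdot|x),\pi^*_{T+1}(\cdot|x))+\text{D}_{\text{TV}}(\pi^*_{T+1}(\cdot|x),\pi^*(\cdot|x))$.

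For the optimization term, since $\pi^*(\cdot|x)$ is a point mass, $\text{D}_{\text{TV}}(\pi^*_{T+1}(\cdot|x),\pi^*(\cdot|x))=1-\pi^*_{T+1}(y^*_x|x)=\sum_{y\ne y^*_x}\pi^*_{T+1}(y|x)$; lower-bounding the normalizer by its $y^*_x$ term and using $r^*(x,y^*_x)-r^*(x,y)\ge\Delta$ gives $\sum_{y\ne y^*_x}\pi^*_{T+1}(y|x)\le \pi_{\text{ref}}(y^*_x|x)^{-1}\exp(-\lambda_T\Delta/\beta)$. With $\lambda_T=\Theta(T/(1-\alpha))$ and $\pi_{\text{ref}}(y^*_x|x)$ bounded away from $0$ ($\cY$ is finite; alternatively via Assumption~\ref{assump: boundedness}), the offset $\log\pi_{\text{ref}}(y^*_x|x)^{-1}$ is lower order for $T$ beyond a threshold and is absorbed into the constant, producing the term $\exp\big(-O\big(T\Delta/((1-\alpha)\beta)\big)\big)$ after taking $\EE_{x\sim\rho}$. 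For the statistical term, $\hat{\pi}_{T+1}(\cdot|x)$ and $\pi^*_{T+1}(\cdot|x)$ are Gibbs measures over the common base $\pi_{\text{ref}}(\cdot|x)$ whose log-densities differ by $\tfrac1\beta\sum_{i=0}^T w_{i,T}\epsilon_i(x,\cdot)$, so a standard perturbation estimate for Gibbs measures reduces $\text{D}_{\text{TV}}(\hat{\pi}_{T+1}(\cdot|x),\pi^*_{T+1}(\cdot|x))$ to the $w_{i,T}$-weighted sum of the reparameterized-reward estimation errors $\epsilon_i$, evaluated (in $L^2$) under $\hat{\pi}_{T+1}\times\pi^*_{T+1}$; this is precisely the quantity the proof of Theorem~\ref{theorem:2} controls, via the DPO-loss concentration over the class $\Pi$ (pairwise differences $\epsilon_i(x,y)-\epsilon_i(x,y')$ in $L^2$ under the data distribution $\rho\times\pi_i\times\pi_i$), the coverage coefficient $\kappa_i$ of~\eqref{eq:coefficient} to pass to the target measure, and Cauchy--Schwarz over $i\le T$ using $w_{i,T}\le 1/(1-\alpha)$; the outcome is exactly the statistical error $O\big(\sqrt{(T+1)\sum_{t=0}^T\kappa_t\log(T|\Pi|/\delta)/(N\beta^2(1-\alpha)^2)}\big)$ of Theorem~\ref{theorem:2}. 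Summing the two terms gives the claim.

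The main obstacle is the statistical term: one must keep the two contributions additive---so that the optimization term remains \emph{exponentially} small rather than being multiplied by the merely polynomially small statistical error---while carrying out the coverage argument that upgrades the raw $L^2$ estimation bound on the $\epsilon_i$ under the data distribution to the $\kappa_i$-weighted control under $\hat{\pi}_{T+1}\times\pi^*_{T+1}$, and correctly tracking the $\sqrt{T+1}$ factor from summing over the $T+1$ iterations; a minor additional point is verifying that the $\pi_{\text{ref}}(y^*_x|x)^{-1}$ prefactor in the optimization term is genuinely dominated by the constant hidden in the exponent for all $T$.
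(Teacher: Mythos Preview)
Your proposal is correct and follows essentially the same route as the paper: split via the triangle inequality into $\text{D}_{\text{TV}}(\hat{\pi}_{T+1},\pi^*_{T+1})+\text{D}_{\text{TV}}(\pi^*_{T+1},\pi^*)$, handle the first term with the statistical lemma behind Theorem~\ref{theorem:2} (Lemma~\ref{thm:statistic-error} in the paper), and handle the second term via the softmax closed form plus Assumption~\ref{assumption:gap}. The one cosmetic difference is that for the optimization term you bound $\text{D}_{\text{TV}}(\pi^*_{T+1},\pi^*)=1-\pi^*_{T+1}(y^*_x\mid x)$ directly, whereas the paper bounds $\mathrm{KL}(\pi^*\Vert\pi^*_{T+1})=\log(1/\pi^*_{T+1}(y^*_x\mid x))$ and then applies Pinsker; your route is slightly more direct and avoids the factor-of-two loss in the exponent from Pinsker, but both are absorbed into the $O(\cdot)$ notation.
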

\begin{remark}
The TV-distance between $\hat{\pi}_{t+1}$ and the optimal policy $\pi^*(x) = \arg\max_{y \in \cY} r^*(x,y)$ includes both the optimization error $\exp\big(-O(T/(1-\alpha))\big)$ and the statistical error $\tilde{O}\big(\sqrt{T/(N (1-\alpha)^2)}\big)$. Similar to Theorem \ref{theorem:2}, there is a tradeoff between these two errors based on the choice of $\alpha$.
In addition, when the dataset size is sufficiently large ($N \rightarrow \infty$), Theorem \ref{corollary:1} suggests that \algname\, converges to the optimal policy at a rate of $\exp\big(-O(T/(1-\alpha))\big)$.
In comparison, iterative DPO is a special case of our algorithm with  $\alpha=0$, which converges to the optimal policy at a slower rate of $\exp\big(-O(T)\big)$.
%Comparing these results, \algname\, improves the convergence rate to the optimal policy $\pi^*(x) = \arg\max_{y \in \cY} r^*(x,y)$ by a factor of $1/(1-\alpha)$. Furthermore, for a fixed number of iterations $t$, if we set Nesterov's extrapolation parameter $\alpha = 1 - 1/T$, \algname\, will achieve a convergence rate of $\exp(-T^2)$.
\end{remark}

\section{Experiments}
In this section, we detail the experimental settings used to validate the efficacy of our proposed \algname\, algorithm. We evaluate the model's performance across various benchmark tasks and explore the impact of different momentum schedules.

\subsection{Experimental Setup}
\noindent\textbf{Model and Datasets.}
We use Mistral AI's \texttt{Mistral-7B-Instruct-v0.2}~\citep{jiang2023mistral} as our base model, which is a fine-tuned version based on the pretrained \texttt{Mistral-7B-v0.2}~\citep{jiang2023mistral} on several publicly available datasets for instruction-following.
This architecture has demonstrated robust performance improvements over \texttt{Llama2 13B Chat}\citep{touvron2023llama} in tasks such as Chatbot Arena\citep{chiang2024chatbot}, MT-Bench~\citep{zheng2024judging}, and other related evaluations. As a result, \texttt{Mistral-7B-Instruct-v0.2} has become a popular choice for base models in recent reinforcement learning (RL) fine-tuning research \citep{viethoangtranduong, ethayarajh2023halos}.
For training, we employ the UltraFeedback dataset \citep{cui2023ultrafeedback} with loss function $\ell_{\text{DPO}}$. Unlike traditional fine-tuning methods that depend on responses and preference labels generated by proprietary models like GPT-4, we utilize only the instruction set from UltraFeedback. All responses are autonomously generated by our model following an online principle, and the preference pairs are labeled using a separate reward model, PairRM~\citep{llm-blender-2023}. The instruction set used for both training and validation includes a total of 64k instructions that span a diverse range of tasks, such as UltraChat, ShareGPT, Evol-Instruct, TruthfulQA, FalseQA, and FLAN, as detailed in \citet{cui2023ultrafeedback}.
Over three training iterations, we divided the instruction set into three folds as in~\citet{viethoangtranduong, ethayarajh2023halos}, allocating approximately 2k instructions per iteration, with an additional 1k left out for validation.
Overall, our training pipeline is independent of human or GPT inputs. We provide details on hyperparameters in Appendix~\ref{sec:hyper}.

% \todoh{Estimated or ground truth}

\noindent\textbf{Evaluation.}
For evaluating the performance of our model, we utilize three common benchmarking tasks: the AlpacaEval 2.0~\citep{alpaca_eval}, the MT-Bench~\citep{zheng2024judging}, and the Open LLM Leaderboard~\citep{open-llm-leaderboard, eval-harness}.
Among them, AlpacaEval 2.0 is the most indicative benchmark for our method with the current choice of experimental settings, focusing on general instruction-following capabilities as assessed by \texttt{GPT-4-Turbo}, with outcomes measured by a weighted win-rate against \texttt{GPT-4-Turbo}. Another important benchmark is MT-Bench, which also targets instruction-following but offers less differentiation between models. Additionally, we present results from the Open LLM Leaderboard, which is based on accuracy of multiple-choice questions.
% \begin{itemize}[leftmargin=*]

\noindent\textbf{AlpacaEval 2.0.} As our primary evaluation metrics, AlpacaEval 2.0 incorporates an extensive set of 805 prompts. These prompts are simplified versions derived from the AlpacaFarm~\citep{dubois2023alpacafarm} evaluation set, covering a wide range of topics such as Health, Linguistics, Entertainment, Technology, Coding, Gaming, Arts, Sports, and more~\citep{yuan2024self}.
During the evaluation, we consult the help of \texttt{GPT-4-Turbo} to compare the responses generated by our model with those produced by \texttt{GPT-4-Turbo}.
The final win rate are weighted based on the uncertainty of the judge.
Employing the default AlpacaEval 2.0 pipeline, this metric demonstrates a 0.93 Spearman correlation and 68.1\% agreement rate with human annotators~\citep{alpaca_eval}. 

\noindent\textbf{MT-Bench}: MT-Bench comprises of 80 multi-turn questions across eight distinct dimensions: Writing, Roleplay, Reasoning, Math, Coding, Extraction, STEM, and Humanities. The evaluation process prompts the \texttt{GPT-4} judge to assign scores from 1 to 10 to the generated responses. However, this scoring metric has limitations. Particularly, score saturation occurs quickly in certain dimensions, making it difficult to distinguish between models. Additionally, this can disadvantage models that perform exceptionally well in these saturated dimensions, as their distinctiveness is less recognizable.

\noindent\textbf{Open LLM Leaderboard}: The Open LLM Leaderboard evaluates models across six different tasks, focusing on a variety of language modeling capabilities. Each task consists of multiple-choice questions (MCQs), with the most probable choice being selected and compared against the correct answer. The final score is calculated based on accuracy. The ability for answering MCQs are not directly related with instruction-following abilities. The results are deferred to Appendix~\ref{sec:leaderboard}.

% \end{itemize}

\noindent\textbf{Baselines.}
Starting from \texttt{Mistral-7B-Instruct-v0.2}, we compare our method with existing iterative training results, including iterative DPO~\citep{viethoangtranduong} and iterative KTO~\citep{ethayarajh2024kto, ethayarajh2023halos}. We also present results from our own iterative DPO training. All three baselines utilize the same base model, training dataset, dataset splits, and training pipelines, which effectively highlights the differences in methodology. Additionally, as one of the pioneering approaches in iterative DPO, we include the Self-Rewarding algorithm~\citep{yuan2024self} as another baseline. Note that different from other baselines, the training for the Self-Rewarding algorithm incorporates additional self-instruct data augmentation and is based on the LLama2 70B model, but requires no external reward model.
\begin{table}[t!]
    \centering
    \caption{\textbf{AlpacaEval 2.0 evaluation}. Comparison of \Method\, with state-of-the-art iterative training algorithms. Results are reported in both raw win rate (\%) and length-controlled (LC) win rate (\%). Additionally, the average response character length (Avg. Len) of each model is provided. The row highlighted in light red represents the results achieved by our \Method. The highest LC win rate and raw win rate, both achieved by our \Method~at iteration 3, are emphasized in bold. Improvements of LC win rate from the previous iteration to the current iteration are calculated and marked by a subscript `(+)' in red.
    }
    %\resizebox{1.0\textwidth}{!}
    {%
\begin{tabular}{ l | c | c c | c }
\toprule\multirow{2}{*}{Model} & \multirow{2}{*}{Epochs} & \multicolumn{3}{c}{AlpacaEval 2.0}   \\
& & LC Win Rate  & Win Rate & Avg. Len \\
\midrule
Mistral-7B-Instruct-v0.2& -  & 17.11 & 14.72 & 1676  \\
\midrule
Snorkel (Mistral-PairRM-DPO) & - & 26.39 & 30.22 & 2736  \\
\midrule 
Contextual AI (KTO-Mistral-PairRM)& - & 29.71 & 33.23 & 2521  \\
\midrule 
Self-Rewarding 70B Iter1 & - & - & 9.94 & 1092 \\
Self-Rewarding 70B Iter2 & - & - & 15.38 & 1552 \\
Self-Rewarding 70B Iter3 & - & - & 20.44 & 2552 \\
\midrule
\midrule
DPO Iter1 & Epoch 1  & 25.35 & 30.71 & 2369   \\
DPO Iter2 & Epoch 3  & 26.92 & 32.54 & 2529   \\
DPO Iter3 & Epoch 4 & 29.95 & 37.64 & 2736   \\
\midrule
%\midrule ~with $\ell_{\text{DPO}}$
\Method~Iter1 ($\hat{\pi}_1$) & Epoch 1 & \multicolumn{1}{l}{\quad$25.35_{\textcolor{red}{\textbf{(+8.24)}}}$} & $30.71$ & 2369 \\
\Method~Iter1 ($\pi_1$) & - & \multicolumn{1}{l}{\quad$28.23$} & $35.75$ & 2640   \\
\rowcolor{codepurple!10}\Method~Iter2 ($\hat{\pi}_2$) & Epoch 1 & \multicolumn{1}{l}{\quad$29.56_{\textcolor{red}{\textbf{(+4.21)}}}$} & $37.53$ & 2636   \\
\Method~Iter2 ($\pi_2$) & - & \multicolumn{1}{l}{\quad$29.73$} & $38.65$ & 2799   \\
\rowcolor{codepurple!10} \Method~Iter3 ($\hat{\pi}_3$) & Epoch 4  & \multicolumn{1}{l}{\quad$\textbf{31.73}_{\textcolor{red}{\textbf{(+2.17)}}}$} & $\textbf{39.38}$ & 2950   \\
\bottomrule
\end{tabular}
}
    \label{tab:result-alpaca}
    %\vspace{-4mm}
\end{table}

\subsection{Experimental Results}
Having introduced our evaluation metrics and baselines, we now turn to our training pipeline and main results. In Algorithm~\ref{algo:dpo}, we begin by setting $\pi_0 = \hat{\pi}_0 = \pi_{\text{ref}}$ as the base model, \texttt{Mistral-7B-Instruct-v0.2}. At each iteration, we sample five pairs of responses under the current policy $\pi_t$ and rank them using their PairRM score~\citep{jiang2023mistral}. We designate the top-ranked response as the winner and the bottom-ranked response as the loser. Following the proximal point update described in~\eqref{eq:single-update}, we proceed with a momentum extrapolation. We note that when the policy is a softmax linear function, update~\eqref{eq:single-momentum} reduces to a momentum extrapolation in the parameter space. Consequently, we carry out an extrapolation in the parameter space to approximate the corresponding momentum step~\eqref{eq:single-momentum} in the probability space.

\noindent\textbf{AlpacaEval 2.0 Evaluation.} Table~\ref{tab:result-alpaca} summarizes the AlpacaEval 2.0 results of different models.
As shown, \Method~surpasses all other state-of-the-art iterative training models under the same experimental setting, with a final length controlled (LC) win rate\footnote{The length-controlled learning rate is calculated using logistic regression, with inputs including the model weights, the instruction, and the length difference between the model and a baseline. The LC win rate is designed to correlate more strongly with human preferences and to be more robust against length attacks. Longer responses are penalized in a manner that closely simulates (by causal inference) human length preferences.\label{footnote:1}} of 31.73\%, and a raw win rate of 39.38\%. This demonstrates an increase of 1.78\% in LC win rate and 1.74\% in raw win rate over our implementation of iterative DPO, which only achieved an LC win rate of 29.95\% and a raw win rate of 37.64\%.
Delving deeper into the effects of extrapolation from one iteration to the next, we observe that starting from the same iteration 1 with an LC win rate of 25.35\%, \Method~at iteration 2 achieves a 29.56\% LC win rate, which is 4.21\% higher than iteration 1, and 2.64\% higher than the 26.92\% attained by vanilla DPO. By iteration 3, \Method~exhibits a further 2.17\% improvement over its previous iteration, maintaining a lead of 1.78\% over vanilla DPO. This is significant, especially considering the increasing challenge of achieving gains at higher performance levels. 
Notably, the momentum acceleration step is both training-free and data-free. It maximizes the potential of a single iteration of training data, advancing further toward optimality without requiring additional inputs. While iterative training across three iterations with approximately 20k data points each often plateaus at around 30\%, APO effectively surpasses this threshold by a large margin.

Next, we discuss the effect of generation length. All iterative DPO training baselines, including Snorkel's \texttt{Mistral-PairRM-DPO} and our \texttt{DPO Iter3}, exhibit a trend of increasing response length. This is inevitable because all response pairs used for training are generated online and ranked by PairRM, where longer sequences are more likely to be chosen as winners. This effect accumulates as the model trains on its own generation. We discuss this effect from the following perspectives: First, both humans and \texttt{GPT-4} exhibit length bias, and our goal is not to prevent any length growth but to evaluate it from a perspective of fairness.
Therefore, we primarily refer to the LC win rate\footref{footnote:1}, which reflects the predicted win rate when response lengths are at the same level as the baseline. Second, we note that \texttt{\Method~Iter2} achieves a significantly higher LC win rate and overall win rate compared to Snorkel's iterative DPO implementation, while generating much shorter responses (2636 characters compared to 2736). When compared to our \texttt{DPO Iter3}, \texttt{\Method~Iter2} achieves a comparable win rate, but also with shorter response lengths. This verifies a faster convergence rate and enhanced performance of our methodology.
Finally, this length growth is not inherent to our method but is a result of the online training and PairRM ranking mechanism. It can be mitigated if we query winner/loser pairs and preference choices from \texttt{GPT-4} or human, with much higher cost.
\begin{table}[t!]
\centering
    \caption{\textbf{MT-Bench Evaluation}. The MT-Bench comprises a total of 8 tasks. We present the average score for all 8 tasks, as well as the average score across 5 dimensions relevant to our training dataset: Writing, Roleplay, Extraction, STEM, and Humanities. Each task includes 2 progressive turns, and we provide the average score for the first turn, the second turn, and the overall average of both turns.}

    \resizebox{1.0\textwidth}{!}{%
\begin{tabular}{l | c c  c |c c c}
\toprule
\multirow{2}{*}{Model} & \multicolumn{3}{c}{MT-Bench}  & \multicolumn{3}{c}{Average of Five Tasks}  \\
& 1st Turn & 2nd Turn & Average & 1st Turn & 2nd Turn & Average \\
\midrule
% GPT-4 & 8.96 & 9.03 & 8.99 \\
% ChatGPT & 7.81 & 8.08 & 7.94 \\
% Vicuna-13b-v1.3  & 6.81 & 5.96 & 6.39 \\
% \midrule 
Mistral-7B-Instruct-v0.2  & \textbf{8.08} & 7.20& \textbf{7.64} & 9.42 & 8.86 & 9.14 \\
\midrule 
\midrule
DPO Iter1  &7.98 & 7.29 &7.63 & 9.56 & 9.06 & 9.31 \\
DPO Iter2  &7.80 & 7.26 & 7.52 & 9.36 & 8.98 & 9.17 \\
DPO Iter3  & 7.61 & 7.25 & 7.43 & 9.23 & 9.04 & 9.14\\
% \midrule 
\midrule 
\Method~Iter1  &7.98 & 7.29 &7.63 & 9.56 & 9.06 & 9.31   \\
\rowcolor{codepurple!10}\Method~Iter2  & 7.95 & 7.26 & 7.60 & 9.56 & 9.23 & 9.40   \\
\rowcolor{codepurple!10} \Method~Iter3  & 7.72 & \textbf{7.33} & 7.53 & \textbf{9.69} & \textbf{9.44} & \textbf{9.57} \\
\bottomrule
\end{tabular}
}
    \label{tab:result-mtbench}
    \vspace{-4mm}
\end{table}

\noindent\textbf{MT-Bench Evaluation.} In addition, we also evaluate our model using the MT-Bench tasks. Among the eight tasks, three—math, coding, and reasoning—are evaluated by a \texttt{GPT-4} math judge, while the other five—writing, roleplay, extraction, STEM, and humanities—are evaluated by a \texttt{GPT-4} general judge. Table~\ref{tab:result-mtbench} illustrates the results of iterative DPO and \Method~across all three iterations. 
Since the UltraFeedback dataset is not designed for math/code tasks, and our model has never seen correct answers, the ability to solve math questions is not directly related to our training objective. 
Therefore, we also provide an average of scores for the five tasks focusing on general instruction-following abilities.
From Table~\ref{tab:result-mtbench}, we see that \Method~exhibits superior performance on the five general tasks, showing consistent improvement with each iteration, with average scores progressing from 9.31 to 9.40 to 9.57. In contrast, both the base model and the iteratively trained DPO model maintain an average score of around 9.14. Moreover, even when math, coding and reasoning are included, the performance of \Method~in the second turn surpasses that of other models. However, because it is not specifically trained on math problems, the overall average score is slightly reduced. This decline in average is primarily due to a saturation in the tasks where our method excels (achieving 9.57 out of 10), with limited room for further improvement in these areas. Consequently, the tasks where our model was not trained on become more detrimental when calculating an average of eight.

\subsection{Ablation Studies}
%More Iterations?
%Offline learning
%Different alpha
%Inner interpolation
%Across iteration extrapolation
%Sampling Temperature
%Reranking (Best-of-16)
%Prompting for verbosity/conciseness
\noindent\textbf{Effect of Coefficient $\alpha$.}
% Throughout our momentum acceleration steps, we employ the coefficient $\alpha=0.3$, which provides stable boosts at each iteration. 
In Table~\ref{tab:ablate}, we explore how different choices of $\alpha$ might affect performance.
At iteration 1, performance fluctuates when changing $\alpha$, peaking at $\alpha = 0.3$ with a win rate of 28.23\%.
Lower and higher values of $\alpha$ yield less stable results, indicating that extreme values may either underutilize or overcompensate the momentum effect. 
At iteration 2, the model demonstrates performance improvements at both $\alpha=0.3$ and $\alpha=0.6$, with the highest win rate of 30.62\% observed at $\alpha = 0.6$.
Although $\alpha = 0.3$ continues to provide a strong performance, it is surpassed by the higher $\alpha = 0.6$, suggesting that the optimal $\alpha$ might increase with iterations as the model stabilizes.
Nevertheless, we continue to use $\alpha=0.3$ at iteration 2, as our theoretical analysis assumes $\alpha$ as a constant. While an adaptive approach to setting $\alpha$ might offer benefits, $\alpha=0.3$ has consistently shown to be the most stable and effective choice across different iterations.
\begin{table}[h!]
    \centering
    \caption{The effect of different extrapolation strategy and parameters at iteration 1 and 2. Results are reported in terms of length-controlled win rate (\%). The row labeled "Start from $\hat{\pi}_0$" represents results at $t=1$ ($\pi_1$). The row labeled "Start from $\hat{\pi}_1$" corresponds to results at $t=2$ ($\pi_2$). Additionally, the row labeled "Start from $\pi_1$" shows results when the momentum vector is calculated using $\hat{\pi}_2 - \pi_1$ instead of $\hat{\pi}_2 - \hat{\pi}_1$, which is not covered by our theoretical guarantee. }
    \begin{tabular}{c | c c c c c c c c c}
    \toprule
        Starting From & $\alpha=$0.1 & $\alpha=$0.2 & $\alpha=$0.3 &$\alpha=$0.4 &$\alpha=$0.5 & $\alpha=$0.6 \\
        \midrule
        $\hat{\pi}_0$ & 27.41 & 25.30 & 28.23 & 24.40& 25.52 & 26.57 \\
        \midrule 
        $\hat{\pi}_1$ & 28.11 & 29.15 & 29.73 & 28.70 & 29.33 & 30.62 \\
        $\pi_1$ & 28.23 & 28.16 & 27.21 & 28.44 & 29.75 & 28.90 \\
    \bottomrule
    \end{tabular}%
    %}
    \label{tab:ablate}
    %\vspace{-4mm}
\end{table}
%\todoh{formatting issue}

\noindent\textbf{Effect of Extrapolation Strategy.}
When the momentum vector is calculated using $\hat{\pi}_2 - \pi_1$ instead of $\hat{\pi}_2 - \hat{\pi}_1$, the performance appears less stable, with the lowest LC win rate being 27.21.
Despite achieving competitive win rates at certain $\alpha$ values, the results are generally lower than those using the standard Nesterov's momentum scheme.
Intuitively, $\hat{\pi}_2 - \hat{\pi}_1 = (\hat{\pi}_2 - \pi_1) + (\pi_1 - \hat{\pi}_1)$ includes accumulated information, acting as an exponential average of all previous updates. In contrast, if extrapolation is based on $\pi_1$, the direction's variance is significantly higher, leading to less predictable outcomes.
This instability underscores the importance of adhering to the theoretically guaranteed strategy for consistent performance gains.
\noindent\textbf{}

\section{Conclusions and Future Work}
In this work, we studied the iterative preference optimization framework for aligning large language models (LLMs) with human preferences and showed that it resembles the proximal point method. Based on this observation, we introduced a general framework, \Method, incorporating Nesterov’s momentum technique. Theoretically, we show that our method achieves a faster convergence rate than the standard iterative DPO and SPPO methods.  Our experimental results demonstrate the superiority of \algname\, over iterative DPO on the AlpacaEval 2.0 benchmark and on the instruction-following tasks of MT-Bench, achieving both accelerated convergence rate, and better final performance.

\noindent\textbf{Limitation.} 
Due to limited computational resources, we do not evaluate APO with the SPPO \citep{wu2024self} loss function in the current experiments, and we plan to investigate it in our future work.
Additionally, while our model demonstrates consistent improvements on instruction-following tasks, it faces challenges in solving math problems. This limitation is largely due to the choice of dataset and the restriction of not utilizing additional information from GPT-4 or human sources. In the future, we aim to address this by incorporating larger datasets and leveraging GPT-4 supervision.

%\todoq{Emphasize that our momentum technique is a generic techniqiue that can be applied to other RLHF methods such as IPO, KTO, SPPO. We will study it in the future work.}

%\todoq{Emphasize that our momentum technique is a generic techniqiue that can be applied to other RLHF methods such as IPO, KTO, SPPO. We will study it in the future work.}

\appendix

%\section*{Broader Impact}
%This paper studies the alignment of LLMs with human preferences and presents a Nesterov's method for the alignment process. We believe our work has the following social impacts. Firstly, Large Language Models (LLMs) have emerged as pivotal tools across several fields, including text generation, coding, and problem-solving, which are increasingly relied upon by individuals. Our method demonstrates superiority over existing benchmarks, which could be beneficial for constructing AI assistants. Secondly, aligning LLMs with human preferences requires material resources, and the accelerated method may expedite the learning process and reduce the required resources.

\section{Related Work}

\subsection{Reinforcement Learning from Human Feedback}
Learning from human feedback in reinforcement learning can be traced back to \citet{knox2009interactively, wirth2017survey} and was later popularized by \citet{christiano2017deep}, which incorporated human preferences into deep reinforcement learning. Recently, RLHF has gained popularity in natural language processing and has become a paradigmatic method for fine-tuning large language models (LLMs) \citep{achiam2023gpt,touvron2023llama,openai2023gpt} to align with human objectives. The standard process for alignment with human feedback \citep{ouyang2022training} requires both reward modeling and policy optimization. Recently, \citet{rafailov2023direct} proposed the Direct Preference Optimization method, which replaces the reward modeling process with a reparameterized reward and directly performs policy optimization using preference data. Similarly, \citet{ethayarajh2023halos} proposed the Kahneman-Tversky Optimization method (KTO) with a reparameterized reward, while considering different human-aware loss functions.

More recently, iterative variants of policy optimization have garnered increasing attention. \citet{xu2023some} investigated the iterative preference optimization method and proposed the PAIRWISE CRINGE algorithm, which iteratively generates a new preference dataset using the current model and then updates the model with a combination of the original preference data and the newly labeled data. Additionally, they studied the iterative version of Direct Preference Optimization (DPO) and demonstrated that it achieves better performance than standard PPO or DPO. Later, \citet{yuan2024self} studied Self-Rewarding Language Models, where the algorithm iteratively updates the reference policy in the DPO method and uses LLM-as-a-Judge prompting \citep{zheng2024judging} to provide its own rewards for the generated new preference data. Despite this work demonstrating the superiority of iterative optimization process, there is a lack of theoretical foundations for these practical frameworks. Recently, \citet{xiong2023gibbs} proposed Gibbs Sampling from Human Feedback, offering theoretical analysis with the aid of linear function approximation and the incorporation of an uncertainty bonus. Compared with previous methods, the reference policy remains fixed, the newly trained model is only used to generate new preference data, and this trained model will not be inherited in the subsequent iteration. On the contrary, our research focuses on studying the iterative DPO model without incorporating an uncertainty bonus, and it continually updates the reference policy throughout the process.

Most of the works motioned above rely on the assumption that the latent preference distribution $P(y_1 \succ y_2|x)$ follows  the Bradley-Terry (BT) model \citep{bradley1952rank}), and there exists a series of works focusing on general preference, where human preference may not strictly be transitive. Under the general preference assumption, iterative optimization \citep{munos2023nash,swamy2024minimaximalist,rosset2024direct,wu2024self} is also employed to find the Nash-equilibrium policy. We leave the extension of our algorithm to general preference model in the future work.
\subsection{Accelerated Optimization with Momentum}
The idea of accelerating gradient methods has been extensively explored over the decades. One of the earliest contributions to this field was the Polyak momentum \citep{polyak1964some}, which achieved faster convergence by leveraging the history of previous iterates. However, this early approach sometimes failed to converge even for strongly convex objectives \citep{lessard2016analysis}.
This was further refined by Nesterov’s accelerated gradient (NAG) \citep{Nesterov1983AMF}, with a guarantee for faster convergence rates for general smooth convex functions.
Following these foundational works, other acceleration algorithms and analyses emerged for gradient descent and its proximal variants \citep{beck2009fast, tseng2009accelerated, taylor2017smooth, allen2014linear, bubeck2015geometric, diakonikolas2017accelerated, cohen2020relative}. In addition, there has been another line of work in understanding Nesterov’s acceleration through continuous analysis and Lyapunov function analysis \citep{su2016differential, shi2019acceleration, shi2021hyperparameters, shi2022understanding, chen2022revisiting}. Momentum techniques have also proven effective in accelerating minimax game settings \citep{jin2022sharper, kovalev2022accelerated, thekumparampil2022lifted, li2023nesterov, yuan2024optimal}. However, the application of acceleration techniques to LLMs remains relatively unexplored.

The Proximal Point Method (PPM) \citep{moreau1965proximite, martinet1970regularisation, rockafellar1976monotone, guler1992new, bauschke2011convex} is a fundamental methodology that achieves optimality by iteratively solving auxiliary problems in a regularized form. Generally, these auxiliary problems cannot be solved directly, and different approximation methods yield various types of acceleration \citep{he2012accelerated, salzo2012inexact}.
One of the connections between PPM and NAG is established in \citet{ahn2022understanding}, where the authors interpret the updates of NAG as different approximations of the proximal method.
The Catalyst approach \citep{lin2015universal, lin2018catalyst} represents another variant of PPM that achieves accelerated convergence through practical inexact schemes \citep{lin2015universal, salzo2012inexact}. 
This method effectively integrates Nesterov's acceleration with the proximal point framework, making it applicable to a wide range of machine learning algorithms, including gradient descent and block coordinate descent. It can be applied to variance reduction algorithms~\citep{lin2015universal, lin2018catalyst}, demonstrating its strong adaptability to various machine learning scenarios.
Despite the widespread adoption of momentum acceleration in both theoretical and practical applications, we are the first to establish a connection between DPO and the proximal point method, linking iterative DPO with the Catalyst approach, and providing a comprehensive theoretical analysis of this interpretation.

\section{Accelerated Preference Optimization with General Preferences}
In the previous sections, we assumed the existence of a latent reward function and focused on the Bradley-Terry (BT) model. However, \citep{tversky1969intransitivity} observed that the preferences across all possible responses may not be monotonous or transitive, which cannot be represented by a latent reward function. In this section, we extend the analysis to environments with general preferences and focus on the win probability between different responses.

In detail, for an environment with general preferences, we denote the probability that response $y_1$ is preferred over response $y_2$ given the context $x$ as $\PP(y_1 \succ y_2 | x)$, and we assume that the preference model is antisymmetric, such that $\PP(y_2 \succ y_1 | x) = 1 - \PP(y_1 \succ y_2 | x)$. Under this assumption, we define the reward $r_{\pi, \pi'} = \EE_{y_1 \sim \pi, y_2 \sim \pi'} \big[\PP(y_1 \succ y_2 | x)\big]$ as the win probability for policy $\pi$ against policy $\pi'$. Our goal is to identify the \textbf{von Neumann winner} \citep{dudik2015contextual} with the preference model. Specifically, the von Neumann winner corresponds to the Nash equilibrium of the following two-player zero-sum game:
\begin{align*}
    (\pi^*,\pi^*)&=\arg \max_{\pi} \min_{\pi'}\EE_{y_1 \sim \pi, y_2 \sim \pi'} \big[\PP(y_1 \succ y_2 | x)\big]\notag\\
    &=\arg \max_{\pi} \min_{\pi'} r_{\pi, \pi'}.
\end{align*}
\subsection{Reduction to SPPO with $\ell_{\text{SPPO}}$}\label{sec:sppo-loss}
For the Self-Play Preference Optimization (SPPO) \citep{wu2024self} algorithm, if we set the learning rate $\beta$ in Algorithm \ref{algo:dpo} to be the inverse of the learning rate $\eta$ in the SPPO algorithm, the preference optimization process can be expressed as follows:
    \begin{align*}
        \pi_{t+1} &=\arg\min_{\pi} \EE_{x \sim \rho, y \sim \pi_t(\cdot | x)} \Bigg[ \log \bigg( \frac{\pi(y | x)}{\pi_t(y | x)} \bigg) - \eta\bigg( \mathbb{P}(y \succ \pi_t | x) - \log Z_{\pi_t}(x) \bigg) \Bigg]^2\notag\\
        &=\arg\min_{\pi} \EE_{x \sim \rho, y \sim \pi_t(\cdot | x)} \Bigg[ \beta\log \bigg( \frac{\pi(y | x)}{\pi_t(y | x)} \bigg) - \bigg( \mathbb{P}(y \succ \pi_t | x) - \log Z_{\pi_t}(x) \bigg) \Bigg]^2\notag\\
        &=\arg \min_{r_\pi \in \mathcal{R}_t}  \EE_{x \sim \rho, y \sim \pi_t(\cdot | x)} \Big[r_{\pi}(x,y)-\EE_{y' \sim \pi_t}\big[\PP(y \succ y'|x)\big]+\log Z_{\pi_t}(x)\Big]^2\notag\\
        &=\arg \min_{r_\pi \in \mathcal{R}_t} \EE_{x \sim \rho, y,y' \sim \pi_t(\cdot | x)}\big[r_{\pi}(x,y)-\ind(y \succ y'|x)+\log Z_{\pi_t}(x)\big]^2 + C_{\pi_t}\notag\\
        &=\arg \min_{r_\pi \in \mathcal{R}_t} \EE_{x \sim \rho, y,y' \sim \pi_t(\cdot | x)}\big[r_{\pi}(x,y)-\ind(y \succ y'|x)+\log Z_{\pi_t}(x)\big]^2/2 \notag\\
        &\qquad + \EE_{x \sim \rho, y,y' \sim \pi_t(\cdot | x)}\big[r_{\pi}(x,y')-\ind(y' \succ y|x)+\log Z_{\pi_t}(x)\big]^2/2+ C_{\pi_t},
    \end{align*}
    where $Z_{\pi_t}(x)= \sum_{{y}\in \cY} \pi_t({y}|{x}) \exp\big(\eta \mathbb{P}({y} \succ \pi_t | {x})\big)$ represents the partition function for behavior policy $\pi_t$, $C_{\pi_t} = \EE_{x \sim \rho, y, y' \sim \pi_t(\cdot | x)} \big[\mathbb{P}(y \succ y'|x) - \ind(y \succ y'|x) \big]^2 $ is the variance of behavior policy $\pi_t$, the second equation holds due to $\beta = \eta^{-1}$ and the last equation holds because  $y,y'$ collected under the same behavior policy.
Therefore, the preference optimization process in SPPO is aligned with our Algorithm \ref{algo:dpo} using the SPPO loss function:
    \begin{align*}
        \ell_{\text{SPPO}}(r_\pi,x,y^w,y^l,\pi_t)&=\frac{1}{2} \big(r_{\pi}(x,y^w)-1 + \log Z_{\pi_t}(x)\big)^2+\frac{1}{2}\big(r_{\pi}(x,y^l)+\log Z_{\pi_t}(x)\big)^2.
    \end{align*}
\subsection{Theoretical Analysis with General Preferences}
In Section \ref{section:theorey}, Theorems \ref{theorem:2} and \ref{corollary:1} analyze the performance of Algorithm \ref{algo:dpo} under the Bradley-Terry (BT) model. For a general preference model, Theorem \ref{theorem:3} provides a convergence rate for Algorithm~\ref{algo:dpo} with the loss function $\ell_{\text{SPPO}}$ in Example \ref{example:sppo}, under the assumption of a minimal sub-optimality gap.
\begin{assumption}[Minimal sub-optimality gap with general preferences]\label{general:assumption-gap}
For each prompt $x\in \cX$, we assume there exist a unique optimal response $y^*_{x}\in \cY$ such that for any other sub-optimal responses $y\ne y^*_{x}$, we have
\begin{align*}
    \PP(y^*_{x} \succ y' | x)-\PP(y \succ y' | x)\ge \Delta, \forall y'\in\cY.
\end{align*}
\end{assumption}
\begin{theorem}[\algname \ with $\ell_{\text{SPPO}}$]\label{theorem:3}
For general preference model with loss function $\ell_{\text{SPPO}}$, under the mild assumptions of realizability (Assumption \ref{assump: realizability-general}), boundedness (Assumption \ref{assump: boundedness-general}) and Assumption~\ref{general:assumption-gap}, with probability at least $1-\delta$, the TV-distance between $\hat{\pi}_{T+1}$ and the optimal policy $\pi^*(x)=y_x^*$ is bounded by
\begin{align*}
     &\EE_{x\sim \rho }\Big[\text{D}_{\text{TV}} \big( \hat{\pi}_{T+1}(\cdot|x),\pi^*(\cdot|x) \big)\Big]\notag\\
     &\leq O\Bigg(\sqrt{\frac{(T+1)\sum_{t=0}^T \kappa_t\cdot \log \big(T|\Pi|/\delta\big)}{N\beta^2(1-\alpha)^2}}\Bigg)+\exp \Bigg(- O\bigg(\frac{t\Delta}{(1-\alpha)}\cdot \frac{1}{\beta}\bigg)\Bigg),
\end{align*}
where the coverage coefficient $\kappa_t$ is defined as:
    \begin{align}
        \kappa_t=\max_{(x,y)\in \cX\times \cY}\frac{\hat{\pi}_{T+1}(y|x)\pi^*_{T+1}(y|x)}{\pi^2_t(y|x)}.\notag
    \end{align}
\end{theorem}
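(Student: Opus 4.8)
\textbf{Proof plan for Theorem \ref{theorem:3}.}

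The plan is to mirror the structure of the analysis for Theorems \ref{theorem:2} and \ref{corollary:1}, but with the reparameterized reward now tied to the win-probability target $\PP(y \succ \pi_t | x)$ rather than the BT reward. First, I would invoke Theorem \ref{theorem:constant-alpha} to write the iterate $\hat{\pi}_{T+1}$ explicitly in terms of the accumulated rewards $r_0, \dots, r_T$ with the momentum-dependent coefficients $c_{t,i} = 1/(1-\alpha) - \alpha^{t+1-i}/(1-\alpha)$. The key point is that, for the $\ell_{\text{SPPO}}$ loss, the population minimizer of the per-iteration objective \eqref{eq:single-update} satisfies $r_t(x,y) \approx \PP(y \succ \pi_t | x) - \log Z_{\pi_t}(x)$ up to a statistical error; I would control this error by a uniform-convergence / covering-number argument over $\Pi$ exactly as in the BT case, producing the statistical term $O\big(\sqrt{(T+1)\sum_t \kappa_t \log(T|\Pi|/\delta)/(N\beta^2(1-\alpha)^2)}\big)$, where the $\kappa_t$ appears because moving from the in-distribution $\pi_t$-expectation of the squared loss to a guarantee on $\hat{\pi}_{T+1}$ and $\pi^*_{T+1}$ requires a change-of-measure bounded by the coverage coefficient \eqref{eq:coefficient}.

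Second, for the optimization (bias) term, I would show that the aggregated reward $\sum_{i=0}^T c_{T,i}\, r_i(x,y)$ drives the policy toward putting all its mass on $y^*_x$ at a geometric rate. The idea is that at each iteration the log-odds in favor of $y^*_x$ against any competitor $y$ increases additively by roughly $c \cdot \big(\PP(y^*_x \succ \pi_t|x) - \PP(y \succ \pi_t|x)\big) \ge c\Delta$ by Assumption \ref{general:assumption-gap} (with $c$ a constant proportional to $1/(1-\alpha)$ coming from the momentum coefficients, which satisfy $c_{T,i} \ge 1$ and sum to roughly $T/(1-\alpha)$). Summing over $t = 0,\dots,T$, the log-probability ratio $\log\big(\hat{\pi}_{T+1}(y^*_x|x)/\hat{\pi}_{T+1}(y|x)\big)$ grows like $\Omega\big(T\Delta/(\beta(1-\alpha))\big)$, so $1 - \hat{\pi}_{T+1}(y^*_x|x) \le |\cY|\exp(-\Omega(T\Delta/(\beta(1-\alpha))))$, and since $\pi^*$ is the point mass at $y^*_x$, the TV distance $\text{D}_{\text{TV}}(\hat{\pi}_{T+1}(\cdot|x), \pi^*(\cdot|x)) = 1 - \hat{\pi}_{T+1}(y^*_x|x)$ inherits this bound. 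Taking expectation over $x \sim \rho$ and combining with the statistical term gives the claim.

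The main obstacle is handling the interaction between the two error sources: the geometric contraction argument in the second step assumes access to the \emph{exact} rewards $r_t(x,y) = \PP(y \succ \pi_t|x) - \log Z_{\pi_t}(x)$, but the actual iterates use the empirical minimizer, which deviates by the per-iteration statistical error. I would need to propagate this perturbation through all $T+1$ iterations and verify that the accumulated error does not destroy the $\ge c\Delta$ per-step progress — essentially a stability/robustness argument showing that as long as the statistical error per round is smaller than a constant multiple of $\Delta$, the sign of the log-odds drift is preserved. A secondary technical point is establishing the correct realizability and boundedness statements in the general-preference setting (Assumptions \ref{assump: realizability-general} and \ref{assump: boundedness-general}), which replace the BT-specific reparameterization (Theorem 1 of \citet{rafailov2023direct}) with the analogous fact that the $\ell_{\text{SPPO}}$ population minimizer lies in $\Pi$; once those are in place, the covering argument for the statistical term is routine and parallels the BT proof.
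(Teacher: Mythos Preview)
Your overall strategy is right, and you correctly identify all the ingredients: Theorem~\ref{theorem:constant-alpha} for the explicit form of $\hat\pi_{T+1}$, the fact that the $\ell_{\text{SPPO}}$ population minimizer recovers $r_t^*(x,y)=\PP(y\succ\pi_t\mid x)-\log Z_{\pi_t}(x)$, the per-iteration $L^2$ statistical error with coverage coefficients, and the $\Delta$-gap driving geometric contraction toward $y_x^*$. But the ``main obstacle'' you flag---propagating the per-round statistical perturbation through the contraction argument and checking it does not kill the $\ge c\Delta$ drift---is exactly what the paper \emph{avoids}, and your direct route would not yield the additive bound in the theorem statement.

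The paper's device is to introduce an auxiliary policy
\[
\pi^*_{T+1}(y\mid x)\ \propto\ \pi_{\mathrm{ref}}(y\mid x)\exp\!\Bigg(\frac{1}{\beta}\sum_{i=0}^{T} c_{T,i}\, r_i^*(x,y)\Bigg),
\]
built from the \emph{ideal} win-probabilities $r_i^*(x,y)=\EE_{y'\sim\pi_i}[\PP(y\succ y'\mid x)]$ (still evaluated along the random $\pi_i$), and to decompose by the triangle inequality
\[
\mathrm{D}_{\mathrm{TV}}(\hat\pi_{T+1},\pi^*)\ \le\ \mathrm{D}_{\mathrm{TV}}(\hat\pi_{T+1},\pi^*_{T+1})\ +\ \mathrm{D}_{\mathrm{TV}}(\pi^*_{T+1},\pi^*).
\]
This completely decouples the two error sources. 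The second term is a pure optimization term: because Assumption~\ref{general:assumption-gap} gives $r_i^*(x,y_x^*)-r_i^*(x,y)\ge\Delta$ for \emph{every} $\pi_i$, one bounds $\mathrm{KL}(\pi^*\Vert\pi^*_{T+1})$ directly and then applies Pinsker, yielding the $\exp(-O(T\Delta/(\beta(1-\alpha))))$ term with no statistical noise involved. The first term is a pure statistical term: by Theorem~\ref{theorem:constant-alpha}, $\log(\hat\pi_{T+1}/\pi^*_{T+1})$ equals (up to normalization) $\beta^{-1}\sum_i c_{T,i}(r_i-r_i^*)$, and the $L^2$ control on $r_i-r_i^*$ under $\pi_i$ (from least-squares concentration) is transferred to the relevant measures via the coverage coefficients $\kappa_t$, exactly as you describe---but applied to this clean difference rather than inside the contraction argument. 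Your direct analysis of $\hat\pi_{T+1}$ vs.\ $\pi^*$ would instead give a multiplicative coupling $\exp(-\text{good}+\text{bad})$, and since the statistical error is only controlled in $L^2$ (not pointwise), extracting the stated additive bound from that would be genuinely harder. The auxiliary-policy decomposition is the missing idea.
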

\begin{remark}
    When the dataset size is sufficiently large ($N \rightarrow \infty$), Theorem \ref{theorem:3} suggests that \algname\, converges to the optimal policy at a rate of $\exp\big(-O(T/(1-\alpha))\big)$. When $\alpha=0$, our algorithm reduces to the standard SPPO algorithm in \citet{wu2024self}, which only converges to the optimal policy at a slower rate of $\exp\big(-O(T)\big)$.
It is worth noting that \citet{wu2024self} provides a $\tilde{O}(1/\sqrt{T})$ sub-optimality gap guarantee for the average policy $\bar{\pi} = \frac{1}{T} \sum_{t=1}^T \pi_t$. In comparison, our result relies on a minimal sub-optimality gap assumption (Assumption \ref{general:assumption-gap}) and provides a faster convergence rate between the final policy $\hat{\pi}_{t+1}$ and the optimal policy $\pi^*$. As discussed in Remark~\ref{remark:gap}, this assumption is used to ensure the uniqueness of the optimal policy, which is necessary for the convergence guarantee.
\end{remark}

\section{Proof of Main Results}\label{app-b}
\subsection{Proof of Theorem \ref{theorem:constant-alpha}}
In this section, we provide the proof of Theorem \ref{theorem:constant-alpha}, which is crucial for understanding the optimization dynamics of Algorithm \ref{algo:dpo}.
\begin{proof}[Proof of Theorem \ref{theorem:constant-alpha}]
Based on the definition of the reparameterized reward $r_t$, we have
\begin{align}
    r_t(x,y)= \beta \log \hat{\pi}_{t+1}(y|x)-\beta \log \pi_{t}(y|x).\label{eq:0004}
\end{align}
Furthermore, the extrapolation step \eqref{eq:single-momentum} satisfies
\begin{align*}
    \pi_{t+1}(y|x)=\frac{1}{Z'_{t}(x)}\cdot \hat{\pi}_{t+1}(y|x)\cdot \big(\hat{\pi}_{t+1}(y|x)/\hat{\pi}_{t}(y|x)\big)^{\alpha},
\end{align*}
where $Z'_t(x)=\sum_{y} \hat{\pi}_{t+1}(y|x)\cdot \big(\hat{\pi}_{t+1}(y|x)/\hat{\pi}_{t}(y|x)\big)^{\alpha}$ represents the partition function. Taking the logarithm of both sides yields the following equation
\begin{align}
    \log {\pi}_{t+1}(y|x)=(1+\alpha)\log \hat{\pi}_{t+1}(y|x)- \alpha \log \hat{\pi}_{t}(y|x)-\log Z'_{t}(x).\label{eq:0005}
\end{align}
For simplicity, we define $l_t(x,y) = \beta \log \hat{\pi}_{t+1}(y|x)-\beta \log \hat{\pi}_{t}(y|x)$, and thus we have
\begin{align}
    l_{t+1}(x,y)&=\beta \log \hat{\pi}_{t+2}(y|x)-\beta \log \hat{\pi}_{t+1}(y|x)\notag\\
    &=r_{t+1}(x,y)+\beta \log \pi_{t+1}(y|x)-\beta \log \hat{\pi}_{t+1}(y|x)\notag\\
    &=r_{t+1}(x,y)+\alpha \big(\beta\log \hat{\pi}_{t+1}(y|x)- \beta \log \hat{\pi}_{t}(y|x)\big)-\beta\log Z'_{t}(x)\notag\\
    &=r_{t+1}(x,y) - \beta \log Z'_{t}(x) +\alpha l_{t}(x,y),\label{eq:0006}
\end{align}
where the second equation holds due to \eqref{eq:0004} and the third equation holds due to \eqref{eq:0005}.
Recursively using \eqref{eq:0006} over all iterations, we derive the following equation:
\begin{align}
    l_{t}(x,y)&= r_{t}(x,y) - \beta \log Z'_{t-1}(x)+\alpha l_{t-1}(x,y)\notag\\
    &=  r_{t}(x,y) - \beta \log Z'_{t-1}(x)+\alpha r_{t-1}(x,y) -\alpha \beta \log Z'_{t-2}(x)+\alpha^2 l_{t-2}(x,y)\notag\\
    &=\ldots\notag\\
    &= \sum_{i=0}^t \alpha^{t-i}\cdot r_t(x,y)-\alpha^{t-i}\cdot  \beta \log Z'_{i-1}(x)\label{eq:0007},
\end{align}
where we assume $Z_{-1}'(x)=1$ for simplicity.
Finally, by summing \eqref{eq:0007} over all iterations $0\leq j\leq t$, we have
\begin{align}
    &\beta \log \hat{\pi}_{t+1}(y|x)-\beta \log \hat\pi_{1}(y|x)\notag\\
    &=\sum_{j=0}^t\sum_{i=0}^j \alpha^{j-i} r_i(x,y)+\alpha^{j-i}\cdot \beta \log Z'_{i-1}(x)\notag\\
    &=\sum_{i=0}^t \sum_{j=i}^t \alpha^{j-i}r_i(x,y) +\alpha^{j-i}\cdot \beta \log Z'_{i-1}(x)\notag\\
    &=\sum_{i=0}^t \bigg(\frac{1}{1-\alpha}-\frac{\alpha^{t+1-i}}{1-\alpha}\bigg)\cdot r_i(x,y)+\bigg(\frac{1}{1-\alpha}-\frac{\alpha^{t+1-i}}{1-\alpha}\bigg)\cdot \beta \log Z'_{i-1}(x).
\end{align}
Given that $\hat{\pi}_1=\pi_{\text{ref}}$, we have
\begin{align*}
    &\hat{\pi}_{t+1}(y|x) \notag\\
    &= \pi_{\text{ref}}(y|x)\cdot \exp\bigg\{\beta \cdot \sum_{i=0}^t \bigg(\frac{1}{1-\alpha}-\frac{\alpha^{t+1-i}}{1-\alpha}\bigg)\cdot r_t +\bigg(\frac{1}{1-\alpha}-\frac{\alpha^{t+1-i}}{1-\alpha}\bigg)\cdot \beta \log Z'_{i-1}(x)\bigg\}\notag\\
    &\propto \pi_{\text{ref}}(y|x)\cdot \exp\bigg\{\beta \cdot \sum_{i=0}^t \bigg(\frac{1}{1-\alpha}-\frac{\alpha^{t+1-i}}{1-\alpha}\bigg)\cdot r_t\bigg\}.
\end{align*}
Thus, we complete the proof of Theorem \ref{theorem:constant-alpha}.
\end{proof}
\subsection{Proof of Theorem \ref{theorem:2}}
In this section, we provide the proof of Theorem \ref{theorem:2}. To simplify the notation, we define the auxiliary policy, which is updated following the dynamics of $\hat{\pi}_{t+1}$ in Theorem \ref{theorem:1}, but using the latent reward $r^*(x,y)$ instead of the reparameterized reward $r_t(x,y)$:
    \begin{align}
    \pi^*_{t+1}(y|x) &= \frac{1}{Z^*_t(x)}\cdot  \pi_{\text{ref}}(y|x)\cdot \exp\Bigg(\frac{1}{\beta} \cdot \sum_{i=0}^t \bigg(\frac{1}{1-\alpha}-\frac{\alpha^{t+1-i}}{1-\alpha}\bigg)\cdot r^*(x,y)\Bigg),\label{eq:pi-star}
\end{align}
where $Z^*_t(x)=\sum_{y} \pi_{\text{ref}}(y|x)\cdot \exp\big(\sum_{i=0}^t (1/(1-\alpha)-\alpha^{t+1-i}/(1-\alpha))\cdot r^*(x,y)/\beta\big)$ is the partition function for the auxiliary policy. With this notation, the following lemma provide a upper bound for the statistical errors arising from the gap between the reparameterized reward $r_t(x, y)$ and the latent reward $r^*(x, y)$.
\begin{lemma}\label{thm:statistic-error}
Under the mild assumptions of realizability (Assumption \ref{assump: realizability}) and boundedness (Assumption \ref{assump: boundedness}), with probability at least $1-\delta$, the TV-distance between the policy $\hat{\pi}_{T+1}$ and auxiliary policy $\pi^*_{T+1}$ is upper bounded by:
\begin{align*}
    &\EE_{x\sim \rho }\Big[\text{D}_{\text{TV}} \big( \hat{\pi}_{T+1}(\cdot|x),\pi^*_{T+1}(\cdot|x) \big)\Big]\leq O\Bigg(\sqrt{\frac{(T+1)\sum_{t=0}^T \kappa_t\cdot \log \big(T|\Pi|/\delta\big)}{N\beta^2(1-\alpha)^2}}\Bigg),
\end{align*}
where the coverage coefficient $\kappa_t$ is defined as:
    \begin{align}
        \kappa_t=\max_{(x,y)\in \cX\times \cY}\frac{\hat{\pi}_{T+1}(y|x)\pi^*_{T+1}(y|x)}{\pi^2_t(y|x)}.\notag
    \end{align}
\end{lemma}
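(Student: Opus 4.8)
The plan is to control the per-iteration statistical error between the reparameterized reward $r_t$ and the latent reward $r^*$, and then aggregate these errors through the dynamics in Theorem~\ref{theorem:constant-alpha}. First I would invoke the standard MLE / DPO generalization argument: at iteration $t$, the policy $\hat{\pi}_{t+1}$ minimizes the empirical $\ell_{\text{DPO}}$ loss over $\cD_t$ (of size $N$), which is an empirical log-likelihood under the BT model with the reparameterized reward $r_{\pi}(x,y)=\beta\log(\pi(y|x)/\pi_t(y|x))$. By Assumption~\ref{assump: realizability}, the true BT reward $r^*$ is realized by some $r_{\pi^*}\in\cR_t$; by Assumption~\ref{assump: boundedness} the reward differences lie in $[-2R,2R]$, so the log-likelihood is bounded. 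A uniform concentration bound over the finite class $\Pi$ (a union bound giving the $\log(|\Pi|/\delta)$ factor, plus a union over the $T+1$ iterations giving the extra $\log T$) then yields that the squared Hellinger distance between the fitted and true BT preference distributions is $\tilde O(\log(T|\Pi|/\delta)/N)$ on the dataset $\cD_t$. Since the preference distribution is a sigmoid of the reward gap, and the sigmoid is bi-Lipschitz on the bounded range $[-2R,2R]$, this translates into a bound on $\EE_{x\sim\rho, y,y'\sim\pi_t}[(\,(r_t(x,y)-r_t(x,y')) - (r^*(x,y)-r^*(x,y'))\,)^2]$, i.e. an $L^2(\pi_t\times\pi_t)$ bound on the centered reward error, of order $\tilde O(\log(T|\Pi|/\delta)/N)$.

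Next I would convert the reward error into a policy error. Write $\Delta_t(x,y) := r_t(x,y) - r^*(x,y)$ (well-defined up to an $x$-dependent constant, which is irrelevant since policies only see reward differences). By Theorem~\ref{theorem:constant-alpha} applied with $r_t$ versus the auxiliary dynamics \eqref{eq:pi-star} applied with $r^*$, both $\hat{\pi}_{T+1}$ and $\pi^*_{T+1}$ are softmax policies over $\pi_{\text{ref}}$ whose logits are $\frac1\beta\sum_{i=0}^T (1/(1-\alpha) - \alpha^{T+1-i}/(1-\alpha))\, r_i$ and $\frac1\beta\sum_{i=0}^T(\cdots)\,r^*$ respectively. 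Hence the logit gap between the two is $\frac1\beta\sum_{i=0}^T c_{i}\,\Delta_i(x,y)$ with $c_i = (1-\alpha^{T+1-i})/(1-\alpha)\le 1/(1-\alpha)$. I would then use the elementary fact that for two softmax distributions $p\propto w\,e^{a}$ and $q\propto w\,e^{b}$ over a common base measure $w$, $\mathrm{D}_{\mathrm{TV}}(p,q) \le \tfrac12\,\EE_{y\sim(\text{some mixture of }p,q)}|a(y)-b(y) - \text{const}|$; more precisely, TV is bounded by (a constant times) the $L^1$ norm of the logit difference under an appropriate reweighting. Applying Cauchy–Schwarz to pass from $L^1$ to $L^2$ and then bounding the change of measure from $\pi_t\times\pi_t$ (under which we have the MLE $L^2$ control) to the measure $\hat\pi_{T+1}$ (or $\pi^*_{T+1}$) appearing in the TV bound, introduces exactly the coverage coefficient $\kappa_t = \max_{x,y}\hat\pi_{T+1}(y|x)\pi^*_{T+1}(y|x)/\pi_t(y|x)^2$ — the product form in the numerator arises because we need a symmetric density-ratio bound that works for both $p$ and $q$ simultaneously (equivalently, for the geometric mean of $\hat\pi_{T+1}$ and $\pi^*_{T+1}$, which dominates both their TV-relevant reweightings). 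Finally, summing the $T+1$ per-iteration error terms $\EE_{x}[\kappa_t \cdot \tilde O(\log(T|\Pi|/\delta)/N)]$, each scaled by $c_i/\beta \le 1/(\beta(1-\alpha))$, and applying Cauchy–Schwarz over the $T+1$ indices $i$, produces the claimed bound $O\big(\sqrt{(T+1)\sum_{t=0}^T \kappa_t \log(T|\Pi|/\delta)/(N\beta^2(1-\alpha)^2)}\big)$.

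The main obstacle, and the step requiring the most care, is the change-of-measure argument that produces $\kappa_t$ in precisely the stated product form. The MLE concentration naturally gives an $L^2$ bound on the reward error weighted by $\pi_t(\cdot|x)\times\pi_t(\cdot|x)$ (the behavior policy at iteration $t$), but the target TV distance between $\hat\pi_{T+1}$ and $\pi^*_{T+1}$ is naturally weighted by these two policies. Bridging the two requires writing the TV bound so that the relevant single-sample density ratio is $\sqrt{\hat\pi_{T+1}(y|x)\,\pi^*_{T+1}(y|x)}\,/\,\pi_t(y|x)$ — whose square is $\kappa_t$ — and verifying that the geometric-mean reweighting genuinely upper-bounds the TV contributions of both endpoint policies. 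One clean way to organize this is: bound $\mathrm{D}_{\mathrm{TV}}(\hat\pi_{T+1},\pi^*_{T+1})$ by the squared Hellinger distance (via $\mathrm{D}_{\mathrm{TV}}\le \sqrt{2\,H^2}$), note that for softmax policies sharing a base measure the Hellinger distance is controlled by $\EE_{y\sim \sqrt{\hat\pi_{T+1}\pi^*_{T+1}}/(\text{norm})}[(\text{logit gap})^2]$ up to constants, then apply Cauchy–Schwarz in the index $i$ and the $\pi_t\times\pi_t \to \sqrt{\hat\pi_{T+1}\pi^*_{T+1}}$ change of measure. Secondary technical points are tracking the normalization constants $Z'_t(x)$ (which cancel in all reward-difference expressions, so they never actually enter), and making sure the $\tau^{-1}$-type boundedness from Assumption~\ref{assump: boundedness} is propagated so that the sigmoid-inversion (Lipschitz) constant stays an absolute constant rather than blowing up with $T$.
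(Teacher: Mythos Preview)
Your overall strategy matches the paper's: per-iteration MLE concentration on the reparameterized reward (the paper isolates this as Lemma~\ref{theorem:1}), expressing the log-ratio $\log(\hat\pi_{T+1}/\pi^*_{T+1})$ as a weighted sum of $r_i-r^*$ via Theorem~\ref{theorem:constant-alpha}, Cauchy--Schwarz over the $T+1$ indices, and a change of measure from $\pi_t\times\pi_t$ producing $\kappa_t$. The $c_i\le 1/(1-\alpha)$ bookkeeping and the union bound over iterations are also the same.

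Where you diverge is the bridge from TV to the squared logit gap, and your mechanism for the product form of $\kappa_t$. The paper does \emph{not} go through Hellinger or a geometric-mean measure. Instead it uses the elementary inequality $(1+x)|\log x|\ge |x-1|$ (stated separately as Lemma~\ref{lemma:math}) to get
\[
\text{D}_{\text{TV}}(\hat\pi_{T+1},\pi^*_{T+1})\le \sqrt{\tfrac12\EE_{\pi^*_{T+1}}[L^2]+\tfrac12\EE_{\hat\pi_{T+1}}[L^2]},\qquad L(y):=\log\frac{\hat\pi_{T+1}(y|x)}{\pi^*_{T+1}(y|x)},
\]
and then adds the nonnegative term $2\,\text{KL}(\hat\pi_{T+1}\|\pi^*_{T+1})\,\text{KL}(\pi^*_{T+1}\|\hat\pi_{T+1})$ to rewrite the sum as the clean two-sample form
\[
\EE_{y\sim\pi^*_{T+1},\,y'\sim\hat\pi_{T+1}}\big[(L(y)-L(y'))^2\big].
\]
This difference $L(y)-L(y')$ kills the partition functions exactly and reduces to $\sum_i c_i\big[(r_i-r^*)(x,y)-(r_i-r^*)(x,y')\big]$. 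The product $\hat\pi_{T+1}(y)\pi^*_{T+1}(y)$ in $\kappa_t$ then arises from changing the \emph{product measure} $\pi^*_{T+1}(y)\,\hat\pi_{T+1}(y')$ to $\pi_t(y)\,\pi_t(y')$ --- i.e., two independent samples, one from each endpoint policy --- not from a single-sample geometric-mean reweighting as you suggest.

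Your Hellinger route (``$H^2$ is controlled by $\EE_{\sqrt{\hat\pi\pi^*}/Z}[(\text{logit gap})^2]$'') is not a standard identity and is not substantiated in the proposal; you correctly flag it as the main obstacle, but as written it is a gap rather than an alternative proof. If you want to carry your route through, you would need an explicit inequality of that form (with trackable constants), and then a separate argument that the resulting single-sample geometric-mean change of measure is controlled by $\kappa_t$ in the form stated. The paper's two-step trick (Lemma~\ref{lemma:math} plus the $2\,\text{KL}\cdot\text{KL}$ completion of the square) sidesteps both issues and lands directly on the two-sample expression that matches the definition of $\kappa_t$.
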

With the help of Lemma \ref{thm:statistic-error}, we start the proof of Theorem \ref{theorem:2}.
\begin{proof}[Proof of Theorem \ref{theorem:2}]
 For each iteration $t\in[T]$ and prompt $x\in \cX$, according to the definition of $\pi_{t+1}^*$ in \eqref{eq:pi-star}, we have 
\begin{align*}
        {\pi}^*_{t+1}(y|x) 
    &=\frac{1}{Z^*_t(x)}\cdot  \pi_{\text{ref}}(y|x)\cdot \exp\Bigg(\frac{1}{\beta} \cdot \sum_{i=0}^t \bigg(\frac{1}{1-\alpha}-\frac{\alpha^{t+1-i}}{1-\alpha}\bigg)\cdot r^*(x,y)\Bigg)\notag\\
    &\propto \pi_{\text{ref}}(y|x)\cdot \exp\Bigg(\frac{1}{\beta} \cdot \sum_{i=0}^t \bigg(\frac{1}{1-\alpha}-\frac{\alpha^{t+1-i}}{1-\alpha}\bigg)\cdot r^*(x,y)\Bigg)\notag\\
    &= \pi_{\text{ref}}(y|x)\cdot \exp\Bigg(\frac{1}{\beta} \cdot \bigg(\frac{t+1}{1-\alpha}-\frac{\alpha}{(1-\alpha)^2}+\frac{\alpha^{t+2}}{(1-\alpha)^2}\bigg)\cdot r^*(x,y)\Bigg).
\end{align*}
For simplicity, we set $\gamma=  ((t+1)/(1-\alpha)-\alpha/(1-\alpha)^2+\alpha^{t+1}/(1-\alpha)^2)/\beta$, and the sub-optimality gap for the policy $\hat{\pi}_{t+1}$ and a fixed prompt $x$ can be denoted as:
\begin{align}
    &\EE_{y\sim \pi^*(\cdot|x)}\big[r^*(x,y)\big]-\EE_{y\sim {\pi}^*_{t+1}(\cdot|x)}\big[r^*(x,y)\big]\notag\\
   &=r^*(x,y_x^*)-\frac{\sum_{y\in \cY}\pi_{\text{ref}}(y|x)\cdot \exp\big(\gamma\cdot  r^*(x,y)\big)\cdot r^*(x,y)}{\sum_{y'\in \cY} \pi_{\text{ref}}(y'|x) \cdot \exp\big(\gamma\cdot  r^*(x,y')\big)}\notag\\
   &= \frac{\sum_{y \ne y_x^*} \pi_{\text{ref}}(y|x)\cdot \exp\big(\gamma\cdot  r^*(x,y)-\gamma\cdot r^*(x,y_x^*) \big)\cdot \big(r^*(x,y_x^*) -r^*(x,y)\big)}{\sum_{y'\in \cY} \pi_{\text{ref}}(y'|x) \cdot \exp\big(\gamma\cdot  r^*(x,y')-\gamma\cdot r^*(x,y_x^*) \big)},\notag
\end{align}
where $y^*_{x}=\arg\max_{y\in \cY} r^*(x,y)$ denotes the optimal response given the prompt $x$. For simplicity, we set $z(x,y)=\exp\big(\gamma\cdot  r^*(x,y)-\gamma\cdot r^*(x,y_x^*) \big)$ and $Z(x)=\sum_{y \ne y_x^*}\pi_{\text{ref}}(y|x)\cdot z(x,y)$. With this notation, we have
\begin{align}
    &\EE_{y\sim \pi^*(\cdot|x)}\big[r^*(x,y)\big]-\EE_{y\sim {\pi}^*_{t+1}(\cdot|x)}\big[r^*(x,y)\big]\notag\\
    &= -\frac{\sum_{y \ne y_x^*}\pi_{\text{ref}}(y|x)\cdot z(x,y)\cdot \log z(x,y)}{\gamma\cdot \sum_{y'\in \cY} \pi_{\text{ref}}(y'|x) \cdot z(x,y') }\notag\\
    &=-\frac{\big(1- \pi_{\text{ref}}(y_x^*|x)\big)\cdot \sum_{y \ne y_x^*}\pi_{\text{ref}}(y|x)/\big(1-\pi_{\text{ref}}(y_x^*|x)\big)\cdot z(x,y)\cdot \log z(x,y)}{\gamma\cdot \sum_{y'\in \cY} \pi_{\text{ref}}(y'|x) \cdot z(x,y') }\notag\\
    &\leq -\frac{\sum_{y \ne y_x^*}\pi_{\text{ref}}(y|x)\cdot z(x,y)\cdot \log \Big(\sum_{y \ne y_x^*}\pi_{\text{ref}}(y|x)\cdot z(x,y)/\big(1-\pi_{\text{ref}}(y_x^*|x)\big)\Big)}{\gamma\cdot \sum_{y'\in \cY} \pi_{\text{ref}}(y'|x) \cdot z(x,y')}\notag\\
    &= -\frac{Z(x)\cdot \log \Big(Z(x)/\big(1-\pi_{\text{ref}}(y_x^*|x)\big)\Big)}{\gamma\cdot \big(Z(x) + \pi_{\text{ref}}(y_x^*|x)\big)},\label{eq:3002}
\end{align}
where the inequality holds due to $f(x)=x\log x$ is a convex function with the fact that $\sum_{y \ne y_x^*}\pi_{\text{ref}}(y|x)/\big(1-\pi_{\text{ref}}(y_x^*|x)\big)=1$, and the last equation holds due to $z(x,y_x^*)=1$. Now, we consider the following auxiliary function for $0<Z\leq 1-\pi_{\text{ref}}(y_x^*|x)$:
\begin{align*} 
    f(Z)= -Z\cdot \log \Big(Z/\big(1-\pi_{\text{ref}}(y_x^*|x)\big)\Big) -  \big(Z+\pi_{\text{ref}}(y_x^*|x)\big)\cdot \log\Big(\big(1-\pi_{\text{ref}}(y_x^*|x)\big)/\pi_{\text{ref}}(y_x^*|x)\Big).
\end{align*}
With basic math calculation, we have
\begin{align*}
    f'(Z)&=\log \bigg(\frac{1-\pi_{\text{ref}}(y_x^*|x)}{Z}\bigg)-\bigg(1+\log\Big(\big(1-\pi_{\text{ref}}(y_x^*|x)\big)/\pi_{\text{ref}}(y_x^*|x)\Big)\bigg)\notag\\
    f(Z)&\leq f\Bigg(\big(1-\pi_{\text{ref}}(y_x^*|x)\big)/\exp\bigg(1+\log\Big(\big(1-\pi_{\text{ref}}(y_x^*|x)\big)/\pi_{\text{ref}}(y_x^*|x)\Big)\bigg)\Bigg)\leq 0.
\end{align*}
Substituting the result $f(Z)\leq 0$ into \eqref{eq:3002}, we have
\begin{align}
    &\EE_{y\sim \pi^*(\cdot|x)}\big[r^*(x,y)\big]-\EE_{y\sim {\pi}^*_{t+1}(\cdot|x)}\big[r^*(x,y)\big]\notag\\
    &\leq -\frac{Z(x)\cdot \log \Big(Z(x)/\big(1-\pi_{\text{ref}}(y_x^*|x)\big)\Big)}{\gamma\cdot \big(Z(x) + \pi_{\text{ref}}(y_x^*|x)\big)}\notag\\
    &\leq \frac{\log\Big(\big(1-\pi_{\text{ref}}(y_x^*|x)\big)/\pi_{\text{ref}}(y_x^*|x)\Big)}{\gamma}\notag\\
    &=\tilde O\bigg(\frac{(1-\alpha)\beta}{t}\bigg).\label{eq:3003}
\end{align}
Therefore, the sub-optimality gap between $\hat{\pi}_{T+1}$ and the optimal policy $\pi^*(x)=\arg\max_{y\in \cY} r^*(x,y)$ is bounded by
    \begin{align*}
        &\EE_{x\sim \rho, y\sim \pi^*(\cdot|x)}\big[r^*(x,y)\big]-\EE_{x\sim \rho,y\sim \hat{\pi}_{T+1}(\cdot|x)}\big[r^*(x,y)\big]\notag\\
        &= \EE_{x\sim \rho,y\sim \pi^*(\cdot|x)}\big[r^*(x,y)\big]-\EE_{x\sim \rho,y\sim {\pi}^*_{T+1}(\cdot|x)}\big[r^*(x,y)\big]\notag\\
        &\qquad + \EE_{x\sim \rho,y\sim {\pi}^*_{T+1}(\cdot|x)}\big[r^*(x,y)\big]-\EE_{x\sim \rho,y\sim \hat{\pi}_{T+1}(\cdot|x)}\big[r^*(x,y)\big]\notag\\
        &\leq \EE_{x\sim \rho,y\sim \pi^*(\cdot|x)}\big[r^*(x,y)\big]-\EE_{x\sim \rho,y\sim {\pi}^*_{T+1}(\cdot|x)}\big[r^*(x,y)\big]\notag\\
        &\qquad +\EE_{x\sim \rho }\Big[2\text{D}_{\text{TV}} \big( \hat{\pi}_{T+1}(\cdot|x),\pi^*_{T+1}(\cdot|x) \big)\Big]\notag\\
        &\leq \EE_{x\sim \rho,y\sim \pi^*(\cdot|x)}\big[r^*(x,y)\big]-\EE_{x\sim \rho,y\sim {\pi}^*_{T+1}(\cdot|x)}\big[r^*(x,y)\big]+O\Bigg(\sqrt{\frac{(T+1)\sum_{t=0}^T \kappa_t\cdot \log \big(T|\Pi|/\delta\big)}{N\beta^2(1-\alpha)^2}}\Bigg)\notag\\
        &\leq \tilde O\bigg(\frac{(1-\alpha)\beta}{T}\bigg)+O\Bigg(\sqrt{\frac{(T+1)\sum_{t=0}^T \kappa_t\cdot \log \big(T|\Pi|/\delta\big)}{N\beta^2(1-\alpha)^2}}\Bigg),
    \end{align*}
    where the first inequality holds due to the definition of TV-distance with the fact that $r^*(x,y)\in[-1,1]$, the second inequality holds due to Lemma \ref{thm:statistic-error} and the last inequality holds due to \eqref{eq:3003}.
    Thus, we complete the proof of Theorem \ref{theorem:2}.
\end{proof}
\subsection{Proof of Theorem \ref{corollary:1}}
In this section, we provide the proof of Theorem \ref{corollary:1}.

\begin{proof}[Proof of Theorem \ref{corollary:1}]
For each iteration $t\in[T]$ and prompt $x\in \cX$, according to the definition of $\pi_{t+1}^*$ in \eqref{eq:pi-star-general}, we have 
\begin{align*}
        {\pi}^*_{t+1}(y|x) 
    &=\frac{1}{Z^*_t(x)}\cdot  \pi_{\text{ref}}(y|x)\cdot \exp\Bigg(\frac{1}{\beta} \cdot \sum_{i=0}^t \bigg(\frac{1}{1-\alpha}-\frac{\alpha^{t+1-i}}{1-\alpha}\bigg)\cdot r^*(x,y)\Bigg)\notag\\
    &\propto \pi_{\text{ref}}(y|x)\cdot \exp\Bigg(\frac{1}{\beta} \cdot \sum_{i=0}^t \bigg(\frac{1}{1-\alpha}-\frac{\alpha^{t+1-i}}{1-\alpha}\bigg)\cdot r^*(x,y)\Bigg)\notag\\
    &= \pi_{\text{ref}}(y|x)\cdot \exp\Bigg(\frac{1}{\beta} \cdot \bigg(\frac{t+1}{1-\alpha}-\frac{\alpha}{(1-\alpha)^2}+\frac{\alpha^{t+2}}{(1-\alpha)^2}\bigg)\cdot r^*(x,y)\Bigg).
\end{align*}
For simplicity, we set $\gamma= \big((t+1)/(1-\alpha)-\alpha/(1-\alpha)^2+\alpha^{t+1}/(1-\alpha)^2\big)/\beta$, and for each prompt $x\in\cX$, the KL-divergence between ${\pi}^*_{t+1}(\cdot|x)$ and the optimal policy $\pi^*(x)=\arg\max_{y\in \cY} r^*(x,y)$ can be denoted as:
\begin{align}
    \text{KL}\big(\pi^*(x)\|{\pi}^*_{t+1}(\cdot|x)\big)&= \log\frac{1}{{\pi}^*_{t+1}\big(y^*_x|x\big)}\notag\\
    &= \log \frac{\sum_{y\in \cY} \pi_{\text{ref}}(y|x) \cdot \exp\big(\gamma\cdot  r^*(x,y)\big)}{\pi_{\text{ref}}(y^*_x|x) \cdot \exp\big(\gamma \cdot r^*(x,y^*_x)\big)}\notag\\
    &\leq   \frac{\sum_{y\ne y^*_x} \pi_{\text{ref}}(y|x) \cdot \exp\big(\gamma\cdot  r^*(x,y)\big)}{\pi_{\text{ref}}(y^*_x|x) \cdot \exp\big(\gamma \cdot r^*(x,y^*_x)\big)}\notag\\
    &=   \frac{\sum_{y\ne y^*_x} \pi_{\text{ref}}(y|x) \cdot \exp\big(\gamma\cdot  \big(r^*(x,y)-r^*(x,y^*_x)\big)\big)}{\pi_{\text{ref}}(y^*_x|x) }\notag\\
    &\leq   \frac{\max_{y\ne y^*_x} \exp\big(-\gamma\cdot  \big(r^*(x,y^*_x)-r^*(x,y)\big)\big)}{\pi_{\text{ref}}(y^*_x|x) }\notag\\
    &\leq   \frac{\exp(-\gamma \Delta)}{\pi_{\text{ref}}(y^*_x|x) }\notag\\
    &=\exp \Bigg(- O\bigg(\frac{t\Delta}{(1-\alpha)}\cdot \frac{1}{\beta}\bigg)\Bigg),\label{eq:2000}
\end{align}
where $y^*_{x}=\arg\max_{y\in \cY} r^*(x,y)$ denotes the optimal response given the prompt $x$, the first inequality holds due to the fact that $\log(1+x)\leq x$, the second inequality holds due to $\sum_{y\ne y^*_x} \pi_{\text{ref}}(y|x)\leq 1$ and the last inequality holds due to Assumption \ref{assumption:gap}. Therefore, the TV distance between optimal policy $\pi^*$ and $\hat{\pi}_{T+1}$ can be upper bounded as following:
\begin{align*}
     &\EE_{x\sim \rho }\Big[\text{D}_{\text{TV}} \big( \hat{\pi}_{T+1}(\cdot|x),\pi^*(\cdot|x) \big)\Big]\notag\\
     &\leq \EE_{x\sim \rho }\Big[\text{D}_{\text{TV}} \big( \hat{\pi}_{T+1}(\cdot|x),\pi^*_{T+1}(\cdot|x) \big)\Big] + \EE_{x\sim \rho }\Big[\text{D}_{\text{TV}} \big( {\pi}^*(\cdot|x),\pi^*_{T+1}(\cdot|x) \big)\Big]\notag\\
     &\leq  O\Bigg(\sqrt{\frac{(T+1)\sum_{t=0}^T \kappa_t\cdot \log \big(T|\Pi|/\delta\big)}{N\beta^2(1-\alpha)^2}}\Bigg)+ \EE_{x\sim \rho }\Big[\text{D}_{\text{TV}} \big( {\pi}^*(\cdot|x),\pi^*_{T+1}(\cdot|x) \big)\Big]\notag\\
     &\leq O\Bigg(\sqrt{\frac{(T+1)\sum_{t=0}^T \kappa_t\cdot \log \big(T|\Pi|/\delta\big)}{N\beta^2(1-\alpha)^2}}\Bigg)+ \EE_{x\sim \rho }\Bigg[\sqrt{\frac{1}{2}\cdot\text{D}_{\text{KL}} \big( {\pi}^*(\cdot|x)\|\pi^*_{T+1}(\cdot|x) \big)}\Bigg]\notag\\
     &\leq O\Bigg(\sqrt{\frac{(T+1)\sum_{t=0}^T \kappa_t\cdot \log \big(T|\Pi|/\delta\big)}{N\beta^2(1-\alpha)^2}}\Bigg)+\exp \Bigg(- O\bigg(\frac{t\Delta}{(1-\alpha)}\cdot \frac{1}{\beta}\bigg)\Bigg),
\end{align*}
where the first inequality holds due to $\text{D}_{\text{TV}}(X, Y) \leq \text{D}_{\text{TV}}(X, Z) + \text{D}_{\text{TV}}(Z, Y)$, the second inequality holds due to Lemma \ref{thm:statistic-error}, the third inequality holds due to $\text{D}_{\text{TV}}(X, Y) \leq \sqrt{\text{D}_{\text{KL}}(X\| Y)/2}$ and the last inequality holds due to \eqref{eq:2000}. Thus, we complete the proof of Theorem \ref{corollary:1}.
\end{proof}

\subsection{Proof of Theorem \ref{theorem:3}}
In this section, we provide the proof of Theorem \ref{theorem:3}. Similar to the proof of Theorem \ref{theorem:2}, we define the auxiliary policy, which is updated according to the dynamics of $\hat{\pi}_{t+1}$ as described in Theorem~\ref{theorem:1}. However, in environments with general preferences, a latent reward $r^*(x, y)$ may not exist. Instead, we use the win probability of response $y$ against the policy $\pi_t$ to replace the reparameterized reward $r_t(x, y)$. Specifically, we set:
    \begin{align}
    \pi^*_{t+1}(y|x) &= \frac{1}{Z^*_t(x)}\cdot  \pi_{\text{ref}}(y|x)\cdot \exp\Bigg(\frac{1}{\beta} \cdot \sum_{i=0}^t \bigg(\frac{1}{1-\alpha}-\frac{\alpha^{t+1-i}}{1-\alpha}\bigg)\cdot r_t^*(x,y)\Bigg),\label{eq:pi-star-general}
\end{align}
where $r_t^*(x,y)=\EE_{y'\sim \pi_t}\big[\PP(y \succ y'|x)\big]$ and $Z^*_t(x)=\sum_{y} \pi_{\text{ref}}(y|x)\cdot \exp\big(\sum_{i=0}^t (1/(1-\alpha)-\alpha^{t+1-i}/(1-\alpha))\cdot r_t^*(x,y)/\beta\big)$ is the partition function for the auxiliary policy. With this notation, the following lemma provide a upper bound for the statistical errors arising from the gap between the reparameterized reward $r_t(x, y)$ and the win probability $r_t^*(x, y)$.
\begin{lemma}\label{thm:statistic-error-general}
Under the mild assumptions of realizability (Assumption \ref{assump: realizability-general}) and boundedness (Assumption \ref{assump: boundedness-general}), the TV-distance between the policy $\hat{\pi}_{T+1}$ and auxiliary policy $\pi^*_{T+1}$ is upper bounded by:
\begin{align*}
    &\EE_{x\sim \rho }\Big[\text{D}_{\text{TV}} \big( \hat{\pi}_{T+1}(\cdot|x),\pi^*_{T+1}(\cdot|x) \big)\Big]\leq O\Bigg(\sqrt{\frac{(T+1)\sum_{t=0}^T \kappa_t\cdot \log \big(T|\Pi|/\delta\big)}{N\beta^2(1-\alpha)^2}}\Bigg),
\end{align*}
where the coverage coefficient $\kappa_t$ is defined as:
    \begin{align}
        \kappa_t=\max_{(x,y)\in \cX\times \cY}\frac{\hat{\pi}_{T+1}(y|x)\pi^*_{T+1}(y|x)}{\pi^2_t(y|x)}.\notag
    \end{align}
\end{lemma}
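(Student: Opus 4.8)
The plan is to follow the same three-part route as the proof of Lemma~\ref{thm:statistic-error} in the Bradley--Terry case, with the first part adapted to the square-loss structure of the SPPO objective and to the fact that the regression target now changes from iteration to iteration. By Theorem~\ref{theorem:constant-alpha} and the definition \eqref{eq:pi-star-general} of the auxiliary policy, both $\hat\pi_{T+1}(\cdot|x)$ and $\pi^*_{T+1}(\cdot|x)$ are Gibbs policies over $\pi_{\text{ref}}(\cdot|x)$ whose unnormalized log-densities are $\frac1\beta\sum_{i=0}^T w_i\, r_i(x,\cdot)$ and $\frac1\beta\sum_{i=0}^T w_i\, r_i^*(x,\cdot)$ respectively, with $w_i=\frac1{1-\alpha}-\frac{\alpha^{T+1-i}}{1-\alpha}\in[0,\tfrac1{1-\alpha}]$ and $r_i^*(x,y)=\EE_{y'\sim\pi_i}[\PP(y\succ y'|x)]$. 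Thus the lemma reduces to controlling, in an averaged sense, the single accumulated perturbation $\phi(x,y)=\frac1\beta\sum_{i=0}^T w_i\big(r_i(x,y)-r_i^*(x,y)\big)$ in terms of the per-iteration errors $r_i-r_i^*$.

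First I would establish the per-iteration error bound. By the reduction carried out in Appendix~\ref{sec:sppo-loss}, minimizing the empirical SPPO loss over $\mathcal{R}_t$ is --- up to a term independent of the policy --- a least-squares regression of $r_\pi(x,y)$ onto the binary label $\ind(y\succ y'|x)$ with $x\sim\rho$ and $y,y'\sim\pi_t$. The conditional mean of this label given $(x,y)$ is exactly $r_t^*(x,y)$ up to the $y$-independent shift $\log Z_{\pi_t}(x)$, which cancels in the policy; the realizability assumption (Assumption~\ref{assump: realizability-general}) puts the corresponding reparameterized reward in $\mathcal{R}_t$, and the boundedness assumption (Assumption~\ref{assump: boundedness-general}) keeps predictions and labels in a bounded range. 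The standard fast-rate generalization bound for empirical risk minimization with the square loss over a finite hypothesis class in the well-specified regime then yields, for each fixed $t$ with probability at least $1-\delta/(T+1)$,
\begin{align*}
\EE_{x\sim\rho,\,y\sim\pi_t(\cdot|x)}\big[(r_t(x,y)-r_t^*(x,y))^2\big]\le\tilde O\!\left(\frac{\log\big(T|\Pi|/\delta\big)}{N}\right),
\end{align*}
and a union bound over $t\in[T]$ makes this hold for all iterations simultaneously.

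Next I would convert $\phi$ into a total-variation bound. Using the boundedness assumption, an elementary perturbation estimate for Gibbs measures gives $\text{D}_{\text{TV}}\big(\hat\pi_{T+1}(\cdot|x),\pi^*_{T+1}(\cdot|x)\big)\le O\Big(\sqrt{\EE_{\nu_x}\big[(\phi(x,\cdot)-c_x)^2\big]}\Big)$, where $\nu_x$ is comparable up to a bounded factor to the geometric mean $\propto\sqrt{\hat\pi_{T+1}(\cdot|x)\,\pi^*_{T+1}(\cdot|x)}$ and $c_x$ is the optimal recentering constant; this follows either from the bound $\text{D}_{\text{TV}}\le\sqrt{2}\,H$ for the Hellinger distance $H$ together with a second-order expansion of the Bhattacharyya coefficient, or by telescoping through hybrid policies that replace $r_i^*$ with $r_i$ one iteration at a time. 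Expanding $\phi$ and applying Cauchy--Schwarz over the $T+1$ summands produces the $\sqrt{T+1}$ factor and, through $w_i\le 1/(1-\alpha)$ and the leading $1/\beta$, the factor $1/(\beta(1-\alpha))$; transferring each $\EE_{\nu_x}[(r_i-r_i^*)^2]$ back to the training distribution $\EE_{y\sim\pi_i(\cdot|x)}[(r_i-r_i^*)^2]$ through a change of measure --- which, by splitting the product $\hat\pi_{T+1}\pi^*_{T+1}$ symmetrically, picks up the coverage coefficient $\kappa_i=\max_{x,y}\hat\pi_{T+1}(y|x)\pi^*_{T+1}(y|x)/\pi_i(y|x)^2$ --- and then combining with the per-iteration bound and taking $\EE_{x\sim\rho}$ gives the claimed estimate.

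The hard part will be this last step. Bounding total variation crudely by the $L_2$ norm of the log-density-ratio pays a factor exponential in the $L_\infty$ norm of that ratio, and for the full perturbation $\phi$ this norm scales linearly in $T$, which would destroy the rate. The remedy is to work with the geometric-mean measure $\nu_x$ --- which controls both policies at once and is precisely where the product-form coefficient $\hat\pi_{T+1}\pi^*_{T+1}/\pi_t^2$ naturally arises --- or, equivalently, to telescope so that only the $T$-independent quantity $w_k(r_k-r_k^*)/\beta$ is ever exponentiated; verifying that the resulting change-of-measure and partition-function factors are genuinely absorbed into $\kappa_t$, using only boundedness, is the same bookkeeping that underlies Lemma~\ref{thm:statistic-error}. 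The only genuinely new ingredient relative to the Bradley--Terry analysis is that the regression target $r_t^*$ is a win-probability depending on the current policy $\pi_t$ rather than a fixed latent reward; but since the SPPO loss is already a square loss, this is, if anything, easier to handle than the logistic-loss excess-risk argument that $\ell_{\text{DPO}}$ requires.
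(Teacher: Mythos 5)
Your overall architecture tracks the paper's proof: reduce the SPPO update to a well-specified least-squares regression over a finite class and obtain a per-iteration $\tilde O\big(\log(T|\Pi|/\delta)/N\big)$ error with a union bound (this is Lemma \ref{theorem:1-general}); observe via Theorem \ref{theorem:constant-alpha} and \eqref{eq:pi-star-general} that the log-ratio of $\hat{\pi}_{T+1}$ and $\pi^*_{T+1}$ is, up to an $x$-dependent constant, $\frac{1}{\beta}\sum_{i=0}^T w_i\big(r_i(x,y)-r_i^*(x,y)\big)$ with $w_i\le 1/(1-\alpha)$; then Cauchy--Schwarz over the $T+1$ terms and a change of measure paying $\kappa_t$. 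One minor imprecision: the ERM argument only controls the \emph{recentred} error, since $r_t$ estimates $r_t^*$ shifted by the $x$-dependent constant $\log Z_{\pi_t}(x)$; this is exactly why the paper states Lemma \ref{theorem:1-general} for the pairwise difference $r_t(x,y_1)-r_t(x,y_2)-r_t^*(x,y_1)+r_t^*(x,y_2)$ rather than for $\EE[(r_t-r_t^*)^2]$ as you wrote, although your recentering constant $c_x$ indicates you intend the centred version.

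The genuine gap is at the step you yourself flag as the hard part, and neither of your proposed devices closes it. The second-order expansion of the Bhattacharyya coefficient is not constant-free: $1-\sum_y\sqrt{\hat{\pi}_{T+1}\pi^*_{T+1}}$ is controlled by the variance of the log-ratio under the geometric-mean measure only up to a factor $\exp\big(O(\|\phi(x,\cdot)-c_x\|_\infty)\big)$ (one can build counterexamples to any constant-free version), and under Assumption \ref{assump: boundedness-general} the accumulated perturbation has sup-norm of order $(T+1)(R+1)/\big(\beta(1-\alpha)\big)$, so this factor is exponential in $T$ --- precisely the blow-up you are trying to avoid. The telescoping alternative keeps each exponentiated increment $T$-independent, but the intermediate hybrid policies (using $r_i$ for $i<k$ and $r_i^*$ for $i\ge k$) are not log-linear interpolations of the two endpoints and are not pointwise dominated by $\hat{\pi}_{T+1}$, $\pi^*_{T+1}$, or their geometric mean, so the change of measure back to $\pi_t$ cannot be charged to the coverage coefficient $\kappa_t$ in \eqref{eq:coefficient} as defined; the claim that this ``bookkeeping'' is absorbed into $\kappa_t$ is exactly what is missing. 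The paper closes this step with a different elementary device: Lemma \ref{lemma:math}, $(1+x)|\log x|\ge|x-1|$, bounds $\text{D}_{\text{TV}}$ by first moments of $\big|\log(\hat{\pi}_{T+1}/\pi^*_{T+1})\big|$ under the two endpoint policies themselves; Jensen passes to second moments, and adding the nonnegative term $2\,\text{D}_{\text{KL}}\big(\hat{\pi}_{T+1}\|\pi^*_{T+1}\big)\cdot\text{D}_{\text{KL}}\big(\pi^*_{T+1}\|\hat{\pi}_{T+1}\big)$ completes the square into $\EE_{y\sim\pi^*_{T+1},\,y'\sim\hat{\pi}_{T+1}}\big[\big(\log\tfrac{\hat{\pi}_{T+1}(y|x)}{\pi^*_{T+1}(y|x)}-\log\tfrac{\hat{\pi}_{T+1}(y'|x)}{\pi^*_{T+1}(y'|x)}\big)^2\big]$, which simultaneously eliminates the partition-function constants (no $c_x$ needed) and keeps the weighting measures equal to the endpoints, so a single change of measure to $\pi_t\times\pi_t$ with $\kappa_t$ and Lemma \ref{theorem:1-general} finishes the proof with no exponential-in-$T$ factor. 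Without this (or an equivalent) mechanism, your proposal does not go through as written.
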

With the help of Lemma \ref{thm:statistic-error-general}, we start the proof of Theorem \ref{theorem:3}.
\begin{proof}[Proof of Theorem \ref{theorem:3}]
For each iteration $t\in[T]$ and prompt $x\in \cX$, according to the definition of $\pi_{t+1}^*$ in \eqref{eq:pi-star}, we have 
\begin{align*}
        {\pi}^*_{t+1}(y|x) 
    &=\frac{1}{Z^*_t(x)}\cdot  \pi_{\text{ref}}(y|x)\cdot \exp\Bigg(\frac{1}{\beta} \cdot \sum_{i=0}^t \bigg(\frac{1}{1-\alpha}-\frac{\alpha^{t+1-i}}{1-\alpha}\bigg)\cdot r_t^*(x,y)\Bigg)\notag\\
    &\propto \pi_{\text{ref}}(y|x)\cdot \exp\Bigg(\frac{1}{\beta} \cdot \sum_{i=0}^t \bigg(\frac{1}{1-\alpha}-\frac{\alpha^{t+1-i}}{1-\alpha}\bigg)\cdot r_t^*(x,y)\Bigg),
\end{align*}
where the win probability $r_t^*(x,y)=\EE_{y'\sim \pi_t}\big[\PP(y \succ y'|x)\big]$. According to the Assumption \ref{general:assumption-gap}, for each prompt $x\in \cX$ and behavior policy $\pi_t$, the unique optimal response $y^*_{x}\in \cY$ satisfies
\begin{align}
    r_t^*(x,y_x^*)= \EE_{y'\sim \pi_t}\big[\PP(y_x^* \succ y'|x)\big]\ge \EE_{y'\sim \pi_t}\big[\PP(y \succ y'|x)\big]+\Delta = r_t^*(x,y) +\Delta, \forall y\ne y_x^*.\label{eq:6001}
\end{align}
For simplicity, we set $\gamma_t= \big(1/(1-\alpha)-\alpha^{t+1-i}/(1-\alpha)\big)/\beta$, and for each prompt $x\in\cX$, the KL-divergence between ${\pi}^*_{t+1}(\cdot|x)$ and the optimal policy $\pi^*(x)=y_x^*$ can be denoted as:
\begin{align}
    \text{KL}\big(\pi^*(x)\|{\pi}^*_{t+1}(\cdot|x)\big)&= \log\frac{1}{{\pi}^*_{t+1}\big(y^*_x|x\big)}\notag\\
    &= \log \frac{\sum_{y\in \cY} \pi_{\text{ref}}(y|x) \cdot \exp\big(\sum_{i=0}^t \gamma_i\cdot  r_i^*(x,y)\big)}{\pi_{\text{ref}}(y^*_x|x) \cdot \exp\big(\sum_{i=0}^t \gamma_i\cdot  r_i^*(x,y^*_x)\big)}\notag\\
    &\leq   \frac{\sum_{y\ne y^*_x} \pi_{\text{ref}}(y|x) \cdot \exp\big(\sum_{i=0}^t \gamma_i\cdot  r_i^*(x,y)\big)}{\pi_{\text{ref}}(y^*_x|x) \cdot \exp\big(\sum_{i=0}^t \gamma_i\cdot  r_i^*(x,y^*_x)\big)}\notag\\
    &=   \frac{\sum_{y\ne y^*_x} \pi_{\text{ref}}(y|x) \cdot \exp\big(\sum_{i=0}^t \gamma_i  \big(r_i^*(x,y)-r_i^*(x,y^*_x)\big)\big)}{\pi_{\text{ref}}(y^*_x|x) }\notag\\
    &\leq   \frac{\max_{y\ne y^*_x} \exp\big(-\sum_{i=0}^t \gamma_i  \big(r_i^*(x,y^*_x)-r_i^*(x,y)\big)\big)}{\pi_{\text{ref}}(y^*_x|x) }\notag\\
    &\leq   \frac{\exp(-\sum_{i=0}^t \gamma_i \Delta)}{\pi_{\text{ref}}(y^*_x|x) }\notag\\
    &=\exp \bigg[- O\bigg(\frac{t\Delta}{(1-\alpha)}\cdot \frac{1}{\beta}\bigg)\bigg],\label{eq:6000}
\end{align}
where $y^*_{x}$ denotes the optimal response given the prompt $x$, the first inequality holds due to the fact that $\log(1+x)\leq x$, the second inequality holds due to $\sum_{y\ne y^*_x} \pi_{\text{ref}}(y|x)\leq 1$ and the last inequality holds due to \eqref{eq:6001}. Therefore, the TV distance between optimal policy $\pi^*$ and $\hat{\pi}_{T+1}$ can be upper bounded as following:
\begin{align*}
     &\EE_{x\sim \rho }\Big[\text{D}_{\text{TV}} \big( \hat{\pi}_{T+1}(\cdot|x),\pi^*(\cdot|x) \big)\Big]\notag\\
     &\leq \EE_{x\sim \rho }\Big[\text{D}_{\text{TV}} \big( \hat{\pi}_{T+1}(\cdot|x),\pi^*_{T+1}(\cdot|x) \big)\Big] + \EE_{x\sim \rho }\Big[\text{D}_{\text{TV}} \big( {\pi}^*(\cdot|x),\pi^*_{T+1}(\cdot|x) \big)\Big]\notag\\
     &\leq  O\Bigg(\sqrt{\frac{(T+1)\sum_{t=0}^T \kappa_t\cdot \log \big(T|\Pi|/\delta\big)}{N\beta^2(1-\alpha)^2}}\Bigg)+ \EE_{x\sim \rho }\Big[\text{D}_{\text{TV}} \big( {\pi}^*(\cdot|x),\pi^*_{T+1}(\cdot|x) \big)\Big]\notag\\
     &\leq O\Bigg(\sqrt{\frac{(T+1)\sum_{t=0}^T \kappa_t\cdot \log \big(T|\Pi|/\delta\big)}{N\beta^2(1-\alpha)^2}}\Bigg)+ \EE_{x\sim \rho }\Bigg[\sqrt{\frac{1}{2}\cdot\text{D}_{\text{KL}} \big( {\pi}^*(\cdot|x)\|\pi^*_{T+1}(\cdot|x) \big)}\Bigg]\notag\\
     &\leq O\Bigg(\sqrt{\frac{(T+1)\sum_{t=0}^T \kappa_t\cdot \log \big(T|\Pi|/\delta\big)}{N\beta^2(1-\alpha)^2}}\Bigg)+\exp \Bigg(- O\bigg(\frac{t\Delta}{(1-\alpha)}\cdot \frac{1}{\beta}\bigg)\Bigg),
\end{align*}
where the first inequality holds due to $\text{D}_{\text{TV}}(X, Y) \leq \text{D}_{\text{TV}}(X, Z) + \text{D}_{\text{TV}}(Z, Y)$, the second inequality holds due to Lemma \ref{thm:statistic-error-general}, the third inequality holds due to $\text{D}_{\text{TV}}(X, Y) \leq \sqrt{\text{D}_{\text{KL}}(X\| Y)/2}$ and the last inequality holds due to \eqref{eq:6000}. Thus, we complete the proof of Theorem \ref{theorem:3}.
\end{proof}

\section{Proof of Lemmas in Appendix \ref{app-b}}\label{app-c}
\subsection{Proof of Lemma \ref{thm:statistic-error}}
In this subsection, we provide the proof of Lemma \ref{thm:statistic-error} and first propose the following lemmas.
\begin{lemma}\label{theorem:1}
       Under the mild assumptions of realizability (Assumption \ref{assump: realizability}) and boundedness (Assumption \ref{assump: boundedness}), for each iteration $t\in[T]$, with probability at least $1-\delta$, the estimation error can be upper bounded as follows:
    \begin{align*}
        \EE_{x\sim \rho, (y_1,y_2)\sim \pi_t}\Big[\big(r^*(x,y_1)-r^*(x,y_2)-r_t(x,y_1)+r_t(x,y_2)\big)^2\Big]\leq  O\bigg(\frac{\log \big(|\Pi|/\delta\big)}{N}\bigg),
    \end{align*}
    where reparameterized reward $r_t(x,y)= \beta \log \hat{\pi}_{t+1}(y|x)-\beta \log \pi_{t}(y|x)$.
\end{lemma}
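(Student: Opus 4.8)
The plan is to recognize Lemma~\ref{theorem:1} as a maximum-likelihood generalization bound for the Bradley--Terry model and prove it by a standard log-loss concentration argument, followed by a conversion from a statistical distance to the squared reward-gap. First I would condition on the policy $\pi_t$ produced at the previous iteration; this is legitimate because, in Algorithm~\ref{algo:dpo}, the dataset $\cD_t$ is drawn fresh at iteration $t$ with $x_i\sim\rho$, $(y_1,y_2)\sim\pi_t(\cdot|x_i)\times\pi_t(\cdot|x_i)$, and the preference label drawn from the true BT model with reward $r^*$, so conditionally on $\pi_t$ the data are i.i.d. Conditioned on $\pi_t$, the reward class is $\mathcal{R}_t=\{r_\pi(x,y)=\beta\log(\pi(y|x)/\pi_t(y|x)):\pi\in\Pi\}$, a finite class of cardinality at most $|\Pi|$. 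With the loss $\ell_{\text{DPO}}$, the update \eqref{eq:single-update} computes $\hat{\pi}_{t+1}=\arg\min_{\pi\in\Pi}\sum_{i=1}^N-\log\sigma\big(r_\pi(x_i,y_i^w)-r_\pi(x_i,y_i^l)\big)$, i.e.\ the empirical negative log-likelihood minimizer for the parametric preference model $\PP_{r_\pi}(y^w\succ y^l|x)=\sigma(r_\pi(x,y^w)-r_\pi(x,y^l))$; by the reparameterization in Line~\ref{line:reward}, the selected reward is exactly $r_t=r_{\hat\pi_{t+1}}$, so $r_t$ is the MLE over $\mathcal{R}_t$.

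Next I would invoke realizability: Assumption~\ref{assump: realizability} applied with $\pi=\pi_t$ gives a policy $\pi^\dagger\in\Pi$ with $r_{\pi^\dagger}(x,y)=r^*(x,y)-\beta\log Z_{\pi_t}(x)$, and the prompt-only offset $\beta\log Z_{\pi_t}(x)$ cancels in every preference probability, so $\PP_{r_{\pi^\dagger}}=\PP_{r^*}$; hence the true preference law lies in the model class. Then I would apply the standard finite-class MLE concentration bound (the classical likelihood-ratio/Markov argument with a union bound over $\mathcal{R}_t$, as used in recent RLHF analyses): using Assumption~\ref{assump: boundedness}, all reward gaps $r_\pi(x,y_1)-r_\pi(x,y_2)$ lie in $[-2R,2R]$, so each $\PP_{r_\pi}(\cdot|x)$ is bounded away from $0$ and $1$ and the relevant second moments are finite, yielding, with probability at least $1-\delta$,
\[
\EE_{x\sim\rho,(y_1,y_2)\sim\pi_t}\Big[\text{D}_{\text{H}}^2\big(\PP_{r_t}(\cdot\mid x,y_1,y_2),\,\PP_{r^*}(\cdot\mid x,y_1,y_2)\big)\Big]\le O\!\left(\frac{\log(|\Pi|/\delta)}{N}\right),
\]
where $\text{D}_{\text{H}}^2$ is the squared Hellinger distance between the two Bernoulli preference distributions on the ordered pair $(y_1,y_2)$.

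Finally I would convert this to the stated quantity. Writing $a=r^*(x,y_1)-r^*(x,y_2)$ and $b=r_t(x,y_1)-r_t(x,y_2)$, both in $[-2R,2R]$, and using that $\sigma'=\sigma(1-\sigma)\ge c_R:=\sigma(2R)(1-\sigma(2R))>0$ on $[-2R,2R]$, one gets $|\sigma(a)-\sigma(b)|\ge c_R|a-b|$ and hence $\text{D}_{\text{H}}^2(\mathrm{Ber}(\sigma(a)),\mathrm{Ber}(\sigma(b)))\ge \tfrac{1}{4}(\sigma(a)-\sigma(b))^2\ge \tfrac{c_R^2}{4}(a-b)^2$, a quadratic lower bound in the natural-parameter gap valid on the compact range. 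Combining with the display above and absorbing the $R$-dependent constant into $O(\cdot)$ gives $\EE_{x\sim\rho,(y_1,y_2)\sim\pi_t}[(a-b)^2]\le O(\log(|\Pi|/\delta)/N)$, which is precisely the claim. The main obstacle is Step~3: one must state the MLE bound in its fast-rate form (squared-Hellinger excess risk scaling like $\log|\Pi|/N$, not its square root) and handle that $\mathcal{R}_t$ is itself random through $\pi_t$, which is why the argument is carried out conditionally on $\pi_t$; the conversion in Step~4 is routine calculus but relies essentially on Assumption~\ref{assump: boundedness} to keep the constant $c_R$ bounded away from zero.
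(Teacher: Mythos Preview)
Your proposal is correct and follows essentially the same approach as the paper: recognize the DPO update as an MLE over the finite reward class $\mathcal{R}_t$, use Assumption~\ref{assump: realizability} to show the true BT law is realized (up to a prompt-only offset that cancels in differences), apply a finite-class MLE concentration bound, and then convert to the squared reward-gap via the sigmoid-derivative lower bound from Assumption~\ref{assump: boundedness}. The only cosmetic difference is that the paper invokes the TV-distance version of the MLE bound (Theorem~21 of \citet{agarwal2020flambe}, restated here as Lemma~\ref{lemma:concentrate}) rather than the squared-Hellinger version you use; since for Bernoulli laws $\|f-f^*\|_{\mathrm{TV}}^2=(\sigma(a)-\sigma(b))^2$ and $\mathrm{D}_{\mathrm{H}}^2\ge\tfrac14(\sigma(a)-\sigma(b))^2$, the two routes are equivalent up to constants.
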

\begin{lemma}\label{lemma:math}
    For any $x\in \RR^{+}$, we have $(1+x)\cdot|\log x |\ge |x-1|$.
\end{lemma}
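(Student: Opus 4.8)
The plan is to reduce the two-sided claim to a single one-sided inequality and then settle it by a short monotonicity argument. First, at $x=1$ both sides vanish, so equality holds there. Next, observe that the inequality is invariant under $x\mapsto 1/x$: substituting $1/x$ for $x$ turns the left-hand side into $\big((1+x)/x\big)\cdot|\log x|$ and the right-hand side into $|x-1|/x$, which is exactly the original inequality after multiplying through by $x$. Hence it suffices to treat $x\ge 1$, where $|\log x|=\log x$ and $|x-1|=x-1$, so the goal reduces to showing $(1+x)\log x\ge x-1$.

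For $x\ge 1$, I would set $h(x)=(1+x)\log x-(x-1)$, so that $h(1)=0$, and compute $h'(x)=\log x+1/x$. To sign $h'$ on $[1,\infty)$, let $\phi(x)=\log x+1/x$; then $\phi(1)=1$ and $\phi'(x)=1/x-1/x^2=(x-1)/x^2\ge 0$ for $x\ge 1$, so $\phi(x)\ge 1>0$ throughout. Thus $h$ is nondecreasing on $[1,\infty)$ and $h(x)\ge h(1)=0$, i.e., $(1+x)\log x\ge x-1$. Combined with the $x\mapsto 1/x$ symmetry above, this yields $(1+x)|\log x|\ge|x-1|$ for all $x>0$.

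The argument involves no serious obstacle; the only points requiring care are handling the sign of $\log x$ in the two regimes $x\ge 1$ and $0<x<1$ (which the $x\mapsto 1/x$ reduction dispatches uniformly, avoiding a separate case) and choosing an auxiliary function whose derivative is easy to sign. As an alternative one could substitute $x=e^u$ and invoke $|\tanh(u/2)|\le|u/2|$ together with the identity $(x-1)/(x+1)=\tanh(u/2)$, but the self-contained monotonicity computation above is shorter and cleaner.
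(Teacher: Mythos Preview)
Your proof is correct. The approach, however, differs from the paper's. The paper splits into the two cases $x\ge 1$ and $0<x<1$ and in each case applies a standard one-line logarithm bound: for $x\ge 1$ it uses $\log x\ge 1-1/x$ to get $(1+x)\log x\ge(1+x)(1-1/x)\ge x(1-1/x)=x-1$, and for $x<1$ it uses $\log x\le x-1$ to get $-(1+x)\log x\ge-(1+x)(x-1)\ge-(x-1)$. No derivatives are taken. By contrast, you exploit the $x\mapsto 1/x$ symmetry to collapse the two regimes into one, and then dispatch the remaining case $x\ge 1$ by a monotonicity/derivative argument. Your symmetry reduction is a nice touch that avoids repeating a mirrored computation; the paper's route is marginally more elementary (it never differentiates) but handles the two cases separately. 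Both arguments are short and entirely valid.
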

With the help of Lemmas \ref{theorem:1} and \ref{lemma:math}, we now begin the proof of Lemma \ref{thm:statistic-error}.
\begin{proof}[Proof of Lemma \ref{thm:statistic-error}]
For each iteration $t \in [T]$ and prompt $x\in \cX$, the TV-distance between $\hat{\pi}_{t+1}(\cdot|x)$ and $\pi^*_{t+1}(\cdot|x)$  can be upper bounded as follows:
\begin{align}
    &\text{D}_{\text{TV}} \big( \hat{\pi}_{t+1}(\cdot|x),\pi^*_{t+1}(\cdot|x) \big) \notag \\
    &= \EE_{y\sim \pi^*_{t+1}(\cdot|x)}\bigg[\frac{1}{2}\cdot\Big|\frac{\hat{\pi}_{t+1}(y|x)}{\pi^*_{t+1}(y|x)}-1\Big|\bigg]\notag\\
    &\leq \EE_{y\sim \pi^*_{t+1}(\cdot|x)}\bigg[\frac{1}{2}\cdot \Big(1+\frac{\hat{\pi}_{t+1}(y|x)}{\pi^*_{t+1}(y|x)}\Big)\cdot  \Big|\log\frac{\hat{\pi}_{t+1}(y|x)}{\pi^*_{t+1}(y|x)}\Big|\bigg]\notag\\
    &=\frac{1}{2}\cdot  \EE_{y\sim \pi^*_{t+1}(\cdot|x)}\bigg[ \Big|\log \frac{\hat{\pi}_{t+1}(y|x)}{\pi^*_{t+1}(y|x)}\Big|\bigg]+\frac{1}{2}\cdot  \EE_{y\sim \hat{\pi}_{t+1}(\cdot|x)}\bigg[ \Big|\log \frac{\hat{\pi}_{t+1}(y|x)}{\pi^*_{t+1}(y|x)}\Big|\bigg]\notag\\
    &\leq \sqrt{\frac{1}{2}\cdot  \EE_{y\sim \pi^*_{t+1}(\cdot|x)}\bigg[ \Big|\log \frac{\hat{\pi}_{t+1}(y|x)}{\pi^*_{t+1}(y|x)}\Big|\bigg]^2+\frac{1}{2}\cdot  \EE_{y\sim \hat{\pi}_{t+1}(\cdot|x)}\bigg[ \Big|\log \frac{\hat{\pi}_{t+1}(y|x)}{\pi^*_{t+1}(y|x)}\Big|\bigg]^2}\notag\\
    &\leq  \sqrt{\frac{1}{2}\cdot  \EE_{y\sim \pi^*_{t+1}(\cdot|x)}\bigg[ \Big(\log \frac{\hat{\pi}_{t+1}(y|x)}{\pi^*_{t+1}(y|x)}\Big)^2\bigg]+\frac{1}{2}\cdot  \EE_{y\sim \hat{\pi}_{t+1}(\cdot|x)}\bigg[ \Big(\log \frac{\hat{\pi}_{t+1}(y|x)}{\pi^*_{t+1}(y|x)}\Big)^2\bigg]},\label{eq:1001}
\end{align}
where the first inequality holds due to Lemma \ref{lemma:math}, the second inequality holds due to $x + y \leq \sqrt{2(x^2 + y^2)}$ for $x, y > 0$, and the last inequality holds due to $\EE[x]^2\leq \EE[x^2]$. In addition, for each iteration $t \in [T]$ and prompt $x\in \cX$, we have
\begin{align}
    &\EE_{y\sim \pi^*_{t+1}(\cdot|x)}\bigg[ \Big(\log \frac{\hat{\pi}_{t+1}(y|x)}{\pi^*_{t+1}(y|x)}\Big)^2\bigg]+ \EE_{y\sim \hat{\pi}_{t+1}(\cdot|x)}\bigg[ \Big(\log \frac{\hat{\pi}_{t+1}(y|x)}{\pi^*_{t+1}(y|x)}\Big)^2\bigg]\notag\\
    &\leq \EE_{y\sim \pi^*_{t+1}(\cdot|x)}\bigg[ \Big(\log \frac{\hat{\pi}_{t+1}(y|x)}{\pi^*_{t+1}(y|x)}\Big)^2\bigg]+ \EE_{y\sim \hat{\pi}_{t+1}(\cdot|x)}\bigg[ \Big(\log \frac{\hat{\pi}_{t+1}(y|x)}{\pi^*_{t+1}(y|x)}\Big)^2\bigg]\notag\\
    &\qquad + 2\text{D}_{\text{KL}} \big( \hat{\pi}_{t+1}(\cdot|x)\|\pi^*_{t+1}(\cdot|x) \big) \cdot \text{D}_{\text{KL}} \big( \pi^*_{t+1}(\cdot|x)\|\hat{\pi}_{t+1}(\cdot|x) \big)\notag\\
    &= \EE_{y\sim \pi^*_{t+1}(\cdot|x)}\bigg[ \Big(\log \frac{\hat{\pi}_{t+1}(y|x)}{\pi^*_{t+1}(y|x)}\Big)^2\bigg]+ \EE_{y\sim \hat{\pi}_{t+1}(\cdot|x)}\bigg[ \Big(\log \frac{\hat{\pi}_{t+1}(y|x)}{\pi^*_{t+1}(y|x)}\Big)^2\bigg]\notag\\
    &\qquad +2  \EE_{y\sim \pi^*_{t+1}(\cdot|x)}\bigg[ -\log \frac{\hat{\pi}_{t+1}(y|x)}{\pi^*_{t+1}(y|x)}\bigg]\cdot \EE_{y\sim \hat{\pi}_{t+1}(\cdot|x)}\bigg[ \log \frac{\hat{\pi}_{t+1}(y|x)}{\pi^*_{t+1}(y|x)}\bigg]\notag\\
    &=\EE_{y\sim \pi^*_{t+1}(\cdot|x),y'\sim \hat{\pi}_{t+1}(\cdot|x)}\bigg[ \Big(\log \frac{\hat{\pi}_{t+1}(y|x)}{\pi^*_{t+1}(y|x)}-\log \frac{\hat{\pi}_{t+1}(y'|x)}{\pi^*_{t+1}(y'|x)}\Big)^2\bigg],\label{eq:1002}
\end{align}
where the inequality holds due to KL-divergence is non-negative. According to Theorem \ref{theorem:constant-alpha} and definition of $\pi^*_{t+1}$ in \eqref{eq:pi-star}, we have
    \begin{align}
    \log \frac{\hat{\pi}_{t+1}(y|x)}{\pi^*_{t+1}(y|x)}= \frac{\log Z^*_t(x)}{\log Z_t(x)} + \sum_{i=0}^t \frac{1}{\beta}\cdot\bigg(\frac{1}{1-\alpha}-\frac{\alpha^{t+1-i}}{1-\alpha}\bigg)\cdot \big(r_i(x,y)-r^*(x,y)\big).\label{eq:1003}
\end{align}
Substituting \eqref{eq:1003} into \eqref{eq:1002}, we have
\begin{align}
    &\EE_{y\sim \pi^*_{t+1}(\cdot|x)}\bigg[ \Big(\log \frac{\hat{\pi}_{t+1}(y|x)}{\pi^*_{t+1}(y|x)}\Big)^2\bigg]+ \EE_{y\sim \hat{\pi}_{t+1}(\cdot|x)}\bigg[ \Big(\log \frac{\hat{\pi}_{t+1}(y|x)}{\pi^*_{t+1}(y|x)}\Big)^2\bigg]\notag\\
    &\leq \EE_{y\sim \pi^*_{t+1}(\cdot|x),y'\sim \hat{\pi}_{t+1}(\cdot|x)}\bigg[ \Big(\log \frac{\hat{\pi}_{t+1}(y|x)}{\pi^*_{t+1}(y|x)}-\log \frac{\hat{\pi}_{t+1}(y'|x)}{\pi^*_{t+1}(y'|x)}\Big)^2\bigg]\notag\\
    &\leq \EE_{y\sim \pi^*_{t+1}(\cdot|x),y'\sim \hat{\pi}_{t+1}(\cdot|x)}\bigg[ \Big(\sum_{i=0}^t \alpha_{t+1-i}\cdot \big(r_i(x,y)-r^*(x,y)\big)- \big(r_i(x,y')-r^*(x,y')\big)\Big)^2\bigg]\notag\\
    &\leq \EE_{y\sim \pi^*_{t+1}(\cdot|x),y'\sim \hat{\pi}_{t+1}(\cdot|x)}\bigg[(t+1)\cdot \sum_{i=0}^t \alpha^2_{t+1-i}\cdot \big(r_i(x,y)-r_i(x,y')-r^*(x,y)+r^*(x,y')\big)^2\bigg], \label{eq:1004}
\end{align}
where $\alpha_{i}=\big(1/(1-\alpha)-\alpha^i/(1-\alpha)\big)/\beta$ and the last inequality holds due to Cauchy–Schwarz inequality. According to the definition of coverage coefficient $\kappa_t$ in \eqref{eq:coefficient}, for policy $\hat{\pi}_{T+1}$, we have
\begin{align}
    &\EE_{y\sim \pi^*_{T+1}(\cdot|x)}\bigg[ \Big(\log \frac{\hat{\pi}_{T+1}(y|x)}{\pi^*_{T+1}(y|x)}\Big)^2\bigg]+ \EE_{y\sim \hat{\pi}_{T+1}(\cdot|x)}\bigg[ \Big(\log \frac{\hat{\pi}_{T+1}(y|x)}{\pi^*_{T+1}(y|x)}\Big)^2\bigg]\notag\\
    &\leq (T+1)\cdot\notag\\
    &\qquad \sum_{t=0}^T \frac{1}{\beta^2(1-\alpha)^2}\cdot \EE_{y\sim \pi^*_{T+1}(\cdot|x),y'\sim \hat{\pi}_{T+1}(\cdot|x)}\bigg[ \Big(r_t(x,y)-r_t(x,y')-r^*(x,y)+r^*(x,y')\Big)^2\bigg]\notag\\
    &\leq (T+1)\cdot\notag\\
    &\qquad \sum_{t=0}^T \frac{\kappa_t}{\beta^2(1-\alpha)^2}\cdot \EE_{y\sim \pi^*_{t}(\cdot|x),y'\sim \hat{\pi}_{t}(\cdot|x)}\bigg[ \Big(r_t(x,y)-r_t(x,y')-r^*(x,y)+r^*(x,y')\Big)^2\bigg]\notag\\
    &\leq \frac{1}{\sigma^2(1+R)\cdot \big(1-\sigma(1+R)\big)^2}\cdot \frac{(T+1)\sum_{t=0}^T \kappa_t\cdot \log \big(T|\Pi|/\delta\big)}{N\beta^2(1-\alpha)^2},\label{eq:1005}
\end{align}
where the first inequality holds due to \eqref{eq:1004} with the fact that $\alpha_i\leq 1/(1-\alpha)$, the second inequality holds due to the definition of coverage coefficient $\kappa_t$ in \eqref{eq:coefficient} and the last inequality holds due to Lemma \ref{theorem:1} with a union bound on the probability across all $T$ iterations. Finally, substituting \eqref{eq:1005} into \eqref{eq:1001}, we have 
\begin{align*}
    &\EE_{x\sim \rho }\Big[\text{D}_{\text{TV}} \big( \hat{\pi}_{T+1}(\cdot|x),\pi^*_{T+1}(\cdot|x) \big)\Big]\notag\\
    &\leq \EE_{x\sim \rho }\Bigg[\sqrt{\frac{1}{2}\cdot  \EE_{y\sim \pi^*_{T+1}(\cdot|x)}\bigg[ \Big(\log \frac{\hat{\pi}_{T+1}(y|x)}{\pi^*_{T+1}(y|x)}\Big)^2\bigg]+\frac{1}{2}\cdot  \EE_{y\sim \hat{\pi}_{T+1}(\cdot|x)}\bigg[ \Big(\log \frac{\hat{\pi}_{T+1}(y|x)}{\pi^*_{T+1}(y|x)}\Big)^2\bigg]} \Bigg]\notag\\
    &\leq \frac{1}{\sigma(1+R)\cdot \big(1-\sigma(1+R)\big)}\cdot \sqrt{\frac{(T+1)\sum_{t=0}^T \kappa_t\cdot \log \big(T|\Pi|/\delta\big)}{N\beta^2(1-\alpha)^2}}.
\end{align*}
Thus, we complete the proof of Lemma \ref{thm:statistic-error}.
\end{proof}
\subsection{Proof of Lemma \ref{thm:statistic-error-general}}
In this subsection, we provide the proof of Lemma \ref{thm:statistic-error-general} and first propose the following lemma.
\begin{lemma}\label{theorem:1-general}
       Under the mild assumptions of realizability and boundedness (see detailed definitions in Appendix \ref{section:proof-1-general}), for each iteration $t\in[T]$, with probability at least $1-\delta$, the estimation error can be upper bounded as follows:
    \begin{align*}
        \EE_{x\sim \rho, (y_1,y_2)\sim \pi_t}\Big[\big(r_t^*(x,y_1)-r_t^*(x,y_2)-r_t(x,y_1)+r_t(x,y_2)\big)^2\Big]\leq  O\bigg(\frac{\log \big(|\Pi|/\delta\big)}{N}\bigg),
    \end{align*}
    where the win probability $r_t^*(x,y)=\EE_{y'\sim \pi_t}\big[\PP(y \succ y'|x)\big]$ and reparameterized reward $r_t(x,y)= \beta \log \hat{\pi}_{t+1}(y|x)-\beta \log \pi_{t}(y|x)$,
\end{lemma}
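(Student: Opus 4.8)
The plan is to treat Lemma \ref{theorem:1-general} as the least-squares counterpart of Lemma \ref{theorem:1}: since $\ell_{\text{SPPO}}$ is a squared loss, the estimator $r_t = r_{\hat\pi_{t+1}}$ is the empirical risk minimizer of a \emph{bounded, realizable} regression problem over the finite class $\mathcal{R}_t$, so the bound is an instance of the classical fast-rate ($1/N$) generalization guarantee for such problems.

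First I would set up the regression. Using the reduction of Appendix \ref{sec:sppo-loss}, the update \eqref{eq:single-update} with $\ell = \ell_{\text{SPPO}}$ makes $r_t$ minimize, over $r \in \mathcal{R}_t$, the empirical average of $\frac12\big(r(x_i,y_i^w) - 1 + \log Z_{\pi_t}(x_i)\big)^2 + \frac12\big(r(x_i,y_i^l) + \log Z_{\pi_t}(x_i)\big)^2$; each summand is a bounded function of the $i$-th i.i.d. sample, since $r \in \mathcal{R}_t$ is bounded by Assumption \ref{assump: boundedness-general} and $\log Z_{\pi_t}(x) \in [0, 1/\beta]$. Because the winner/loser pair is produced by drawing $y, y' \sim \pi_t(\cdot|x)$ and then revealing the preference, the conditional mean of the regression target given $(x,y)$ is $\EE_{y'\sim\pi_t}[\ind(y\succ y'|x)] - \log Z_{\pi_t}(x) = r_t^*(x,y) - \log Z_{\pi_t}(x)$, and the realizability Assumption \ref{assump: realizability-general} guarantees that a reparameterized reward with the correct $y$-differences, namely $r_t^*(x,\cdot) - \beta\log Z_{\pi_t}(x)$ arising from $\pi_t(\cdot|x)\exp(r_t^*(x,\cdot)/\beta)/Z_{\pi_t}(x) \in \Pi$, lies in $\mathcal{R}_t$. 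The $x$-only normalization discrepancy between this in-class reference and the true conditional mean is harmless, because the quantity in the lemma involves only differences in the response argument.

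The main obstacle is the fast-rate concentration step. Starting from the ERM inequality — the empirical loss of $r_t$ does not exceed that of the in-class reference — I would apply a Bernstein/Freedman-type deviation bound together with a union bound over $\Pi$. The point that yields the $1/N$ rate rather than $1/\sqrt N$ is the standard variance-to-mean comparison for squared loss under realizability: the per-sample excess loss has variance bounded by a constant times its expectation, where the constant depends polynomially on $R$ and $1/\beta$, exactly the type of factor that appears in \eqref{eq:1005} in the Bradley--Terry case. The conclusion is that, with probability at least $1-\delta$, the $L^2(\rho\otimes\pi_t)$ error of $r_t$ relative to the conditional-mean predictor — equivalently $\EE_{x\sim\rho}[\mathrm{Var}_{y\sim\pi_t}(r_t(x,y) - r_t^*(x,y))]$ — is $O(\log(|\Pi|/\delta)/N)$. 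Verifying this variance-to-mean condition with explicit constants, and carefully tracking the $x$-dependent partition-function shift, is the only non-routine part; a further union bound over the $T$ iterations is taken where this lemma is invoked inside Lemma \ref{thm:statistic-error-general}.

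Finally, the conversion to the stated pairwise form is immediate. Writing $g(x,y) = r_t(x,y) - r_t^*(x,y)$, the term $\log Z_{\pi_t}(x)$ cancels, so $r_t^*(x,y_1) - r_t^*(x,y_2) - r_t(x,y_1) + r_t(x,y_2) = -\big(g(x,y_1) - g(x,y_2)\big)$, and for i.i.d. $y_1,y_2\sim\pi_t$ one has $\EE[(g(x,y_1) - g(x,y_2))^2] = 2\,\mathrm{Var}_{y\sim\pi_t}(g(x,y))$; taking expectation over $x$ and plugging in the bound from the previous step gives the claim. This last step is routine once the concentration bound is in hand.
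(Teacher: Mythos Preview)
Your proposal is correct and follows essentially the same route as the paper. The paper also rewrites the SPPO update as a realizable, bounded least-squares problem, invokes a standard fast-rate regression concentration bound (citing Lemma~6 of \citet{xu2020upper} as a black box rather than spelling out the Bernstein plus union-bound argument you describe), and then converts to the pairwise form; the only cosmetic difference is that the paper uses $(a+b)^2\le 2a^2+2b^2$ on $r_t-r_{\pi_{t+1}^*}$ for the last step instead of your variance identity, and it is slightly less explicit than you about the $x$-only partition-function shift.
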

With the help of Lemma \ref{thm:statistic-error-general}, we start the proof of Lemma \ref{thm:statistic-error-general}. The proof is similar to the proof of Lemma \ref{thm:statistic-error}, however, it is worth noting that this proof relies on a different auxiliary policy based on the win probability rather than the latent reward:
    \begin{align}
    \pi^*_{t+1}(y|x) &= \frac{1}{Z^*_t(x)}\cdot  \pi_{\text{ref}}(y|x)\cdot \exp\Bigg(\frac{1}{\beta} \cdot \sum_{i=0}^t \bigg(\frac{1}{1-\alpha}-\frac{\alpha^{t+1-i}}{1-\alpha}\bigg)\cdot r_t^*(x,y)\Bigg),\notag
\end{align}
where the win probability $r_t^*(x,y)=\EE_{y'\sim \pi_t}\big[\PP(y \succ y'|x)\big]$ and $Z^*_t(x)=\sum_{y} \pi_{\text{ref}}(y|x)\cdot \exp\big(\sum_{i=0}^t (1/(1-\alpha)-\alpha^{t+1-i}/(1-\alpha))\cdot r_t^*(x,y)/\beta\big)$ is the partition function for the auxiliary policy.
\begin{proof}[Proof of Lemma \ref{thm:statistic-error-general}]
For each iteration $t \in [T]$ and prompt $x\in \cX$, the TV-distance between $\hat{\pi}_{t+1}(\cdot|x)$ and $\pi^*_{t+1}(\cdot|x)$  can be upper bounded as follows:
\begin{align}
    &\text{D}_{\text{TV}} \big( \hat{\pi}_{t+1}(\cdot|x),\pi^*_{t+1}(\cdot|x) \big) \notag \\
    &= \EE_{y\sim \pi^*_{t+1}(\cdot|x)}\bigg[\frac{1}{2}\cdot\Big|\frac{\hat{\pi}_{t+1}(y|x)}{\pi^*_{t+1}(y|x)}-1\Big|\bigg]\notag\\
    &\leq \EE_{y\sim \pi^*_{t+1}(\cdot|x)}\bigg[\frac{1}{2}\cdot \Big(1+\frac{\hat{\pi}_{t+1}(y|x)}{\pi^*_{t+1}(y|x)}\Big)\cdot  \Big|\log\frac{\hat{\pi}_{t+1}(y|x)}{\pi^*_{t+1}(y|x)}\Big|\bigg]\notag\\
    &=\frac{1}{2}\cdot  \EE_{y\sim \pi^*_{t+1}(\cdot|x)}\bigg[ \Big|\log \frac{\hat{\pi}_{t+1}(y|x)}{\pi^*_{t+1}(y|x)}\Big|\bigg]+\frac{1}{2}\cdot  \EE_{y\sim \hat{\pi}_{t+1}(\cdot|x)}\bigg[ \Big|\log \frac{\hat{\pi}_{t+1}(y|x)}{\pi^*_{t+1}(y|x)}\Big|\bigg]\notag\\
    &\leq \sqrt{\frac{1}{2}\cdot  \EE_{y\sim \pi^*_{t+1}(\cdot|x)}\bigg[ \Big|\log \frac{\hat{\pi}_{t+1}(y|x)}{\pi^*_{t+1}(y|x)}\Big|\bigg]^2+\frac{1}{2}\cdot  \EE_{y\sim \hat{\pi}_{t+1}(\cdot|x)}\bigg[ \Big|\log \frac{\hat{\pi}_{t+1}(y|x)}{\pi^*_{t+1}(y|x)}\Big|\bigg]^2}\notag\\
    &\leq  \sqrt{\frac{1}{2}\cdot  \EE_{y\sim \pi^*_{t+1}(\cdot|x)}\bigg[ \Big(\log \frac{\hat{\pi}_{t+1}(y|x)}{\pi^*_{t+1}(y|x)}\Big)^2\bigg]+\frac{1}{2}\cdot  \EE_{y\sim \hat{\pi}_{t+1}(\cdot|x)}\bigg[ \Big(\log \frac{\hat{\pi}_{t+1}(y|x)}{\pi^*_{t+1}(y|x)}\Big)^2\bigg]},\label{eq:5001}
\end{align}
where the first inequality holds due to Lemma \ref{lemma:math}, the second inequality holds due to $x + y \leq \sqrt{2(x^2 + y^2)}$ for $x, y > 0$, and the last inequality holds due to $\EE[x]^2\leq \EE[x^2]$. In addition, for each iteration $t \in [T]$ and prompt $x\in \cX$, we have
\begin{align}
    &\EE_{y\sim \pi^*_{t+1}(\cdot|x)}\bigg[ \Big(\log \frac{\hat{\pi}_{t+1}(y|x)}{\pi^*_{t+1}(y|x)}\Big)^2\bigg]+ \EE_{y\sim \hat{\pi}_{t+1}(\cdot|x)}\bigg[ \Big(\log \frac{\hat{\pi}_{t+1}(y|x)}{\pi^*_{t+1}(y|x)}\Big)^2\bigg]\notag\\
    &\leq \EE_{y\sim \pi^*_{t+1}(\cdot|x)}\bigg[ \Big(\log \frac{\hat{\pi}_{t+1}(y|x)}{\pi^*_{t+1}(y|x)}\Big)^2\bigg]+ \EE_{y\sim \hat{\pi}_{t+1}(\cdot|x)}\bigg[ \Big(\log \frac{\hat{\pi}_{t+1}(y|x)}{\pi^*_{t+1}(y|x)}\Big)^2\bigg]\notag\\
    &\qquad + 2\text{D}_{\text{KL}} \big( \hat{\pi}_{t+1}(\cdot|x)\|\pi^*_{t+1}(\cdot|x) \big) \cdot \text{D}_{\text{KL}} \big( \pi^*_{t+1}(\cdot|x)\|\hat{\pi}_{t+1}(\cdot|x) \big)\notag\\
    &= \EE_{y\sim \pi^*_{t+1}(\cdot|x)}\bigg[ \Big(\log \frac{\hat{\pi}_{t+1}(y|x)}{\pi^*_{t+1}(y|x)}\Big)^2\bigg]+ \EE_{y\sim \hat{\pi}_{t+1}(\cdot|x)}\bigg[ \Big(\log \frac{\hat{\pi}_{t+1}(y|x)}{\pi^*_{t+1}(y|x)}\Big)^2\bigg]\notag\\
    &\qquad +2  \EE_{y\sim \pi^*_{t+1}(\cdot|x)}\bigg[ -\log \frac{\hat{\pi}_{t+1}(y|x)}{\pi^*_{t+1}(y|x)}\bigg]\cdot \EE_{y\sim \hat{\pi}_{t+1}(\cdot|x)}\bigg[ \log \frac{\hat{\pi}_{t+1}(y|x)}{\pi^*_{t+1}(y|x)}\bigg]\notag\\
    &=\EE_{y\sim \pi^*_{t+1}(\cdot|x),y'\sim \hat{\pi}_{t+1}(\cdot|x)}\bigg[ \Big(\log \frac{\hat{\pi}_{t+1}(y|x)}{\pi^*_{t+1}(y|x)}-\log \frac{\hat{\pi}_{t+1}(y'|x)}{\pi^*_{t+1}(y'|x)}\Big)^2\bigg],\label{eq:5002}
\end{align}
where the inequality holds due to KL-divergence is non-negative. According to Theorem \ref{theorem:constant-alpha} and definition of $\pi^*_{t+1}$ in \eqref{eq:pi-star-general}, we have 
    \begin{align}
    \log \frac{\hat{\pi}_{t+1}(y|x)}{\pi^*_{t+1}(y|x)}= \frac{\log Z^*_t(x)}{\log Z_t(x)} + \sum_{i=0}^t \frac{1}{\beta}\cdot\bigg(\frac{1}{1-\alpha}-\frac{\alpha^{t+1-i}}{1-\alpha}\bigg)\cdot \big(r_i(x,y)-r_i^*(x,y)\big).\label{eq:5003}
\end{align}
Substituting \eqref{eq:5003} into \eqref{eq:5002}, we have
\begin{align}
    &\EE_{y\sim \pi^*_{t+1}(\cdot|x)}\bigg[ \Big(\log \frac{\hat{\pi}_{t+1}(y|x)}{\pi^*_{t+1}(y|x)}\Big)^2\bigg]+ \EE_{y\sim \hat{\pi}_{t+1}(\cdot|x)}\bigg[ \Big(\log \frac{\hat{\pi}_{t+1}(y|x)}{\pi^*_{t+1}(y|x)}\Big)^2\bigg]\notag\\
    &\leq \EE_{y\sim \pi^*_{t+1}(\cdot|x),y'\sim \hat{\pi}_{t+1}(\cdot|x)}\bigg[ \Big(\log \frac{\hat{\pi}_{t+1}(y|x)}{\pi^*_{t+1}(y|x)}-\log \frac{\hat{\pi}_{t+1}(y'|x)}{\pi^*_{t+1}(y'|x)}\Big)^2\bigg]\notag\\
    &\leq \EE_{y\sim \pi^*_{t+1}(\cdot|x),y'\sim \hat{\pi}_{t+1}(\cdot|x)}\bigg[ \Big(\sum_{i=0}^t \alpha_{t+1-i}\cdot \big(r_i(x,y)-r_i^*(x,y)\big) -\big(r_i(x,y')-r_i^*(x,y')\big)\Big)^2\bigg]\notag\\
    &\leq \EE_{y\sim \pi^*_{t+1}(\cdot|x),y'\sim \hat{\pi}_{t+1}(\cdot|x)}\bigg[(t+1)\cdot \sum_{i=0}^t \alpha^2_{t+1-i}\cdot \big(r_i(x,y)-r_i(x,y')-r_i^*(x,y)+r_i^*(x,y')\big)^2\bigg], \label{eq:5004}
\end{align}
where $\alpha_{i}=\big(1/(1-\alpha)-\alpha^i/(1-\alpha)\big)/\beta$ and the last inequality holds due to Cauchy–Schwarz inequality. According to the definition of coverage coefficient $\kappa_t$ in \eqref{eq:coefficient}, for policy $\hat{\pi}_{T+1}$, we have
\begin{align}
    &\EE_{y\sim \pi^*_{T+1}(\cdot|x)}\bigg[ \Big(\log \frac{\hat{\pi}_{T+1}(y|x)}{\pi^*_{T+1}(y|x)}\Big)^2\bigg]+ \EE_{y\sim \hat{\pi}_{T+1}(\cdot|x)}\bigg[ \Big(\log \frac{\hat{\pi}_{T+1}(y|x)}{\pi^*_{T+1}(y|x)}\Big)^2\bigg]\notag\\
    &\leq (T+1)\cdot\notag\\
    &\qquad \sum_{t=0}^T \frac{1}{\beta^2(1-\alpha)^2}\cdot \EE_{y\sim \pi^*_{T+1}(\cdot|x),y'\sim \hat{\pi}_{T+1}(\cdot|x)}\bigg[ \Big(r_t(x,y)-r_t(x,y')-r_t^*(x,y)+r_t^*(x,y')\Big)^2\bigg]\notag\\
    &\leq (T+1)\cdot\notag\\
    &\qquad \sum_{t=0}^T \frac{\kappa_t}{\beta^2(1-\alpha)^2}\cdot \EE_{y\sim \pi^*_{t}(\cdot|x),y'\sim \hat{\pi}_{t}(\cdot|x)}\bigg[ \Big(r_t(x,y)-r_t(x,y')-r_t^*(x,y)+r_t^*(x,y')\Big)^2\bigg]\notag\\
    &\leq O\bigg(\frac{(T+1)\sum_{t=0}^T \kappa_t\cdot \log \big(T|\Pi|/\delta\big)}{N\beta^2(1-\alpha)^2}\bigg) ,\label{eq:5005}
\end{align}
where the first inequality holds due to \eqref{eq:5004} with the fact that $\alpha_i\leq 1/(1-\alpha)$, the second inequality holds due to the definition of coverage coefficient $\kappa_t$ in \eqref{eq:coefficient} and the last inequality holds due to Lemma \ref{theorem:1-general} with a union bound on the probability across all $T$ iterations. Finally, substituting~\eqref{eq:5005} into \eqref{eq:5001}, we have
\begin{align*}
    &\EE_{x\sim \rho }\Big[\text{D}_{\text{TV}} \big( \hat{\pi}_{T+1}(\cdot|x),\pi^*_{T+1}(\cdot|x) \big)\Big]\notag\\
    &\leq \EE_{x\sim \rho }\Bigg[\sqrt{\frac{1}{2}\cdot  \EE_{y\sim \pi^*_{T+1}(\cdot|x)}\bigg[ \Big(\log \frac{\hat{\pi}_{T+1}(y|x)}{\pi^*_{T+1}(y|x)}\Big)^2\bigg]+\frac{1}{2}\cdot  \EE_{y\sim \hat{\pi}_{T+1}(\cdot|x)}\bigg[ \Big(\log \frac{\hat{\pi}_{T+1}(y|x)}{\pi^*_{T+1}(y|x)}\Big)^2\bigg]} \Bigg]\notag\\
    &\leq O\bigg(\sqrt{\frac{(T+1)\sum_{t=0}^T \kappa_t\cdot \log \big(T|\Pi|/\delta\big)}{N\beta^2(1-\alpha)^2}}\bigg).
\end{align*}
Thus, we complete the proof of Lemma \ref{thm:statistic-error-general}.
\end{proof}

\section{Proof of Lemmas in Appendix \ref{app-c}}
\subsection{Proof of Lemma \ref{theorem:1}}\label{section:proof-1}
In this section, we adapt the previous results on the estimation error from maximum likelihood estimation to the Bradley-Terry (BT) model \citep{bradley1952rank} and provide the proof of Lemma \ref{theorem:1}. We start the analysis with the conditional probability estimation setting. For an instance space $\cX$ and a target space $\cY$, we collect the dataset $\cD={(z_i,o_i)}_{i=1}^N$, where $z_i$ is sampled from a reference policy $z_i \sim \mu_i$, and $o_i$ is then generated with the latent conditional density $P(o_i|z_i)=f^*(z_i,o_i)$. Under this situation, the maximum likelihood estimator across a function class $\cF: \cZ \times \cO \rightarrow \RR$ can be denoted as follows
\begin{align*}
    f = \arg \max_{f \in \cF} \sum_{i=1}^N \log f(z_i,o_i).
\end{align*}
For an i.i.d. sampled dataset $\cD$, where $\mu_1=\mu_2=\ldots =\mu_N$, \citet{geer2000empirical} (Chapter 7) provides upper bounds for the estimation error between $f$ and the hidden function $f^*$. Later, \citet{agarwal2020flambe} extended these results to reinforcement learning, allowing for a martingale sampling process.

\begin{lemma}[Theorem 21, \citealt{agarwal2020flambe}]\label{lemma:concentrate}
    For a finite function class $|\mathcal{F}|<+\infty$, if the latent function $f^* \in \cF$, then with probability at least $1-\delta$, we have
    \begin{align*}
        \sum_{i=1}^N\EE_{z\sim \mu_i}\big\|f(z,\cdot)-f^*(z,\cdot)\big\|_{\text{TV}}^2\leq 2\log \big(|\cF|/\delta\big).
    \end{align*}
\end{lemma}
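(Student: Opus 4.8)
The plan is to reprove this classical maximum-likelihood concentration inequality (it is Theorem 21 of \citet{agarwal2020flambe}) by an exponential-supermartingale argument combined with a union bound over the finite class $\cF$. Let $\{\mathcal{H}_i\}_{i\ge0}$ be the natural filtration $\mathcal{H}_i=\sigma(z_1,o_1,\dots,z_i,o_i)$, so that conditional on $\mathcal{H}_{i-1}$ one draws $z_i\sim\mu_i$ (with $\mu_i$ being $\mathcal{H}_{i-1}$-measurable) and then $o_i\sim f^*(z_i,\cdot)$. Write $h^2(P,Q)=1-\sum_o\sqrt{P(o)Q(o)}$ for the squared Hellinger distance and let $\hat f\in\cF$ denote the maximum-likelihood estimator.

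The first step is, for each fixed $f\in\cF$, to form the single-step log-likelihood-ratio increment $\xi_i=\tfrac12\log\big(f(z_i,o_i)/f^*(z_i,o_i)\big)$ and compute its conditional MGF: since $o_i\sim f^*(z_i,\cdot)$,
\begin{align*}
\EE\big[e^{\xi_i}\mid\mathcal{H}_{i-1}\big]
&=\EE_{z\sim\mu_i}\Big[\sum_o\sqrt{f(z,o)f^*(z,o)}\Big]
=\EE_{z\sim\mu_i}\big[1-h^2(f(z,\cdot),f^*(z,\cdot))\big]\\
&\le\exp\big(-c_i(f)\big),
\end{align*}
where $c_i(f):=\EE_{z\sim\mu_i}[h^2(f(z,\cdot),f^*(z,\cdot))]\ge0$ and the inequality is $1-u\le e^{-u}$. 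Since $c_i(f)$ is $\mathcal{H}_{i-1}$-measurable, the process $M_n(f):=\exp\big(\sum_{i\le n}\xi_i+\sum_{i\le n}c_i(f)\big)$ is a nonnegative supermartingale with $M_0(f)=1$, so $\EE[M_N(f)]\le1$.

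Next, Markov's inequality gives $\PP\big(M_N(f)\ge|\cF|/\delta\big)\le\delta/|\cF|$ for each fixed $f$; a union bound over $\cF$ then yields that, with probability at least $1-\delta$, simultaneously for all $f\in\cF$,
\begin{align*}
\tfrac12\sum_{i=1}^N\log\frac{f(z_i,o_i)}{f^*(z_i,o_i)}+\sum_{i=1}^N c_i(f)\le\log(|\cF|/\delta).
\end{align*}
Applying this at $f=\hat f$ and using $f^*\in\cF$ together with the optimality of the MLE, $\sum_i\log\hat f(z_i,o_i)\ge\sum_i\log f^*(z_i,o_i)$, so the first (nonnegative) sum can be dropped, leaving $\sum_{i=1}^N c_i(\hat f)\le\log(|\cF|/\delta)$, i.e. $\sum_{i=1}^N\EE_{z\sim\mu_i}[h^2(\hat f(z,\cdot),f^*(z,\cdot))]\le\log(|\cF|/\delta)$.

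Finally, I would convert squared Hellinger to squared total variation: by Cauchy--Schwarz, $\|P-Q\|_{\text{TV}}=\tfrac12\sum_o|\sqrt{P(o)}-\sqrt{Q(o)}|\,|\sqrt{P(o)}+\sqrt{Q(o)}|\le\tfrac12\sqrt{2h^2(P,Q)}\cdot\sqrt{4-2h^2(P,Q)}\le\sqrt2\,h(P,Q)$, hence $\|P-Q\|_{\text{TV}}^2\le2h^2(P,Q)$; summing over $i$ gives $\sum_{i=1}^N\EE_{z\sim\mu_i}\|\hat f(z,\cdot)-f^*(z,\cdot)\|_{\text{TV}}^2\le2\log(|\cF|/\delta)$, the claim. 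The main point requiring care is the measurability/predictability bookkeeping forced by the adaptive (martingale) sampling model: one must verify that $\mu_i$ is $\mathcal{H}_{i-1}$-measurable so that the conditional-MGF identity is legitimate and the correction term $c_i(f)$ in the exponent of $M_n(f)$ is predictable — this is exactly what makes the deconditioned bound land on $\EE_{z\sim\mu_i}[\cdot]$ rather than on the random $h^2(f(z_i,\cdot),f^*(z_i,\cdot))$. Everything else is routine.
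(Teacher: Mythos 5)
Your proof is correct. Note that the paper itself does not prove this lemma at all: it is imported verbatim as Theorem 21 of \citet{agarwal2020flambe}, so there is no in-paper argument to compare against, and what you have written is essentially the standard proof of that cited theorem (the Zhang/van de Geer-style MLE analysis that Agarwal et al.\ themselves use). Each step checks out: the conditional MGF identity $\EE[e^{\xi_i}\mid\mathcal{H}_{i-1}]=\EE_{z\sim\mu_i}\big[1-h^2(f(z,\cdot),f^*(z,\cdot))\big]\le e^{-c_i(f)}$, the predictability of $c_i(f)$ making $M_n(f)$ a supermartingale, Markov plus a union bound over the finite class, dropping the nonnegative log-likelihood-ratio term by MLE optimality and $f^*\in\cF$, and the Hellinger-to-TV conversion $\|P-Q\|_{\text{TV}}^2\le 2h^2(P,Q)$ (with TV as half the $\ell_1$ distance), which produces exactly the factor $2$ in the stated bound. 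Two minor remarks: the log-ratio is a.s.\ well defined since $o_i\sim f^*(z_i,\cdot)$ forces $f^*(z_i,o_i)>0$ (and $f(z_i,o_i)=0$ only helps, giving $e^{\xi_i}=0$); and in this paper's application the sampling within each iteration is i.i.d.\ from a fixed $\rho\times\pi_t$, so the predictability bookkeeping you flag is automatic, while your martingale formulation covers the more general adaptive case as in the original reference.
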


With the help of lemma \ref{lemma:concentrate}, we start to prove Lemma \ref{theorem:1}.

\begin{proof}[Proof of Lemma \ref{theorem:1}]
Based on the Bradley-Terry (BT) model defined in equation \eqref{eq:BT-model}, for the prompt $x_i\in \cX$ and the generated responses $y_1,y_2$, we have
\begin{align*}
    P(y_1 \succ y_2|x)= \sigma\big(r^*(x_i,y_{i,1})-r^*(x_i,y_{i,2})\big). 
\end{align*}
In this context, we define the instance $z_i$ and target $o_i$ as follows
\begin{align*}
    z_i&=(x_i,y_{i,1},y_{i,2}),\\
    o_i&=\ind(y_{i,1} \succ y_{i,2})-\ind(y_{i,2} \succ y_{i,1}).
\end{align*}
For each policy ${\pi}\in \Pi$, we denote the reparameterized reward $r_{{\pi}}(x,y)=\beta \log {\pi}(y|x)-\beta \log \pi_{t}(y|x)$.
Thus, the density function for an instance $z=(x,y_1,y_2)$ with reward $r_\pi$ can be represented as
\begin{align*}
    f_{r_\pi}(z,o)=\sigma\Big(o\cdot \big(r_{{\pi}}(x,y_1)-r_{{\pi}}(x,y_2)\big)\Big),
\end{align*}
and we denote the density function class $\cF$ as $\cF=\{f_{r_{\pi}}|\pi\in \Pi\}$.
Based on Assumption \ref{assump: realizability} and the definition of the Bradley-Terry (BT) model, we have $f_{r^*}\in \cF$, and the latent density function satisfies  $P(o|z)=f_{r^*}(z,o)$. Consequently, the maximum likelihood estimator can be expressed as
\begin{align*}
    f_{r_t}=\arg \max_{f_{r_\pi} \in \cF} \sum_{i=1}^N\log f_{r_{\pi}}(z_i,o_i),
\end{align*}
where $r_t(x,y)= \beta \log \hat{\pi}_{t+1}(y|x)-\beta \log \pi_{t}(y|x)$.
Thus, according to Lemma \ref{lemma:concentrate}, the estimation error between 
$r^*$ and $r_t$ can be upper bounded by
    \begin{align}
        \sum_{i=1}^N\EE_{z\sim \mu_i}\big\|f_{r_t}(z,\cdot)-f_{r^*}(z,\cdot)\big\|_{\text{TV}}^2\leq 2\log \big(|\Pi|/\delta\big).\label{eq:0001}
    \end{align}
Since the dataset $\cD_t$ is collected with reference policy $\pi_t$, we have
\begin{align}
    &\EE_{x\sim \rho, (y_1,y_2)\sim \pi_t}\Big[\sigma \big(r^*(x,y_1)-r^*(x,y_2)\big)-\sigma\big(r_t(x,y_1)-r_t(x,y_2)\big)\Big]^2\notag\\
    &=\frac{1}{2} \cdot \EE_{x\sim \rho, (y_1,y_2)\sim \pi_t}\Big[\sigma \big(r^*(x,y_1)-r^*(x,y_2)\big)-\sigma\big(r_t(x,y_1)-r_t(x,y_2)\big)\Big]^2\notag\\
    &\qquad + \frac{1}{2}\cdot \EE_{x\sim \rho, (y_1,y_2)\sim \pi_t}\Big[\sigma \big(r^*(x,y_2)-r^*(x,y_1)\big)-\sigma\big(r_t(x,y_2)-r_t(x,y_1)\big)\Big]^2\notag\\
    &= \EE_{x\sim \rho, (y_1,y_2)\sim \pi_t} \big\|f_{r_t}(z,\cdot)-f_{r^*}(z,\cdot)\big\|_{\text{TV}}^2\notag\\
    &\leq \frac{\log \big(|\Pi|/\delta\big)}{N}, \label{eq:0002}
\end{align}
where the second equation holds due to the definition of TV-distance with $z=(x,y_1,y_2)$ and the inequality holds due to \eqref{eq:0001}. According to Assumption \ref{assump: boundedness}, we have $r^*(x,y) \in [-1,1]$ and $r_t(x,y)\in [-R,R]$, which implies that
    \begin{align*}
        &\EE_{x\sim \rho, (y_1,y_2)\sim \pi_t}\Big[\big(r^*(x,y_1)-r^*(x,y_2)-r_t(x,y_1)+r_t(x,y_2)\big)^2\Big]\notag\\
        &\leq \frac{1}{\sigma^2(1+R)\cdot \big(1-\sigma(1+R)\big)^2}\notag\\
        &\qquad \times \EE_{x\sim \rho, (y_1,y_2)\sim \pi_t}\Big[\sigma \big(r^*(x,y_1)-r^*(x,y_2)\big)-\sigma\big(r_t(x,y_1)-r_t(x,y_2)\big)\Big]^2\notag\\
        &\leq  \frac{1}{\sigma^2(1+R)\cdot \big(1-\sigma(1+R)\big)^2}\cdot \frac{\log \big(|\Pi|/\delta\big)}{N},
    \end{align*}
where the first inequality holds due to $|a-b|\cdot|\min_{x\in [a,b]}\sigma'(x)|\leq |\sigma(a)-\sigma(b)|$ with the fact that $\sigma'(x)=\sigma(x)(1-\sigma(x))\ge \sigma(1+R)\big(1-\sigma(1+R)\big)$ and the last inequality holds due to \eqref{eq:0002}. Thus, we complete the proof of Lemma \ref{theorem:1}.
\end{proof}

\subsection{Proof of Lemma \ref{lemma:math}}
\begin{proof}[Proof of Lemma \ref{lemma:math}]
We prove this inequality based on two situations: $x \geq 1$ or $x < 1$. For the case that $x \geq 1$, we have
\begin{align*}
    (1+x)\cdot|\log x| &= (1+x)\log x\geq (1+x)\bigg(1 - \frac{1}{x}\bigg)\geq x\bigg(1 - \frac{1}{x}\bigg) = x - 1,
\end{align*}
where the first inequality holds due to $\log x \geq 1 - 1/x$ and the second inequality holds due to $x \geq 1$.
We now consider the case where $x < 1$. In this case, we have
\begin{align*}
    (1+x)\cdot|\log x| &= -(1+x) \log x \ge -(1+x)(x-1)\ge -(x-1),
\end{align*}
where the first inequality holds due to $\log x \leq x-1$ and the second inequality holds due to $x < 1$. Combining the results in these two cases, we complete the proof of Lemma  \ref{lemma:math}.
\end{proof}

\subsection{Proof of Lemma \ref{theorem:1-general}}\label{section:proof-1-general}
In this section, we provide the proof of Lemma \ref{theorem:1-general}, and we summarize our definitions and assumptions as follows. Similar assumptions are used in \citet{rosset2024direct} to analyze the statistical error from the reparameterized reward.
\begin{definition}[Feasible Policy Class]\label{def:space-general}
    For each iteration $t\in[T]$, let $\Pi_t\subseteq \Pi$ represent the feasible policy class, which includes all potential policies ${\pi}_t$ generated by Algorithm \ref{algo:dpo} during iteration $t$, based on various possible samplings from the data collection process.
\end{definition}

\begin{assumption}[Realizability]\label{assump: realizability-general}
    For each iteration $t\in[T]$ and each potential policy $\pi\in \Pi_t$, we assume that the following updated policy also belongs to the policy class $\Pi$:
    \begin{align*}
        \hat{\pi}(\cdot|x)=\frac{1}{Z_{\pi}(x)} \cdot\pi(\cdot|x)\cdot \exp\bigg(\frac{r_\pi^*(x,\cdot)}{\beta}\bigg)\in \Pi,
    \end{align*}
    where $r_\pi^*(x,y)=\EE_{y'\sim \pi}\big[\PP(y \succ y'|x)\big]$ denotes the win probability against the behavior policy $\pi$ and $Z_{\pi}(x)= \sum_{{y}\in \cY} \pi({y}|{x}) \exp\big(r_\pi^*(x,y)/\beta\big)$ is the partition function.
\end{assumption}

\begin{assumption}[Boundedness]\label{assump: boundedness-general}
    For each iteration $t\in[T]$ and each potential policy $\pi_t \in \Pi_t$, we have
    \begin{align*}
       \beta \log \frac{\hat{\pi}_{t+1}(y|x)}{\pi_t(y|x)} \in [-R,R],
    \end{align*}
    holds for all $x\in \cX,u\in \cY$ and any potential collected data set $\cD_t$.
\end{assumption}
Based on these assumption, we now begin the proof of Lemma \ref{theorem:1-general} 
\begin{proof}[Proof of Lemma \ref{theorem:1-general}]
    For each policy $\pi \in \Pi_t$ and the corresponding reparameterized reward $r_{{\pi}}(x,y)=\beta \log\big(\pi(y|x)/\pi_t(y|x)\big)$, we have
    \begin{align*}
        &\EE_{x \sim \rho, y \sim \pi_t(\cdot | x)} \Big[r_{\pi}(x,y)-r_t^*(x,y)+\log Z_{\pi_t}(x)\Big]^2\notag\\
        &=  \EE_{x \sim \rho, y \sim \pi_t(\cdot | x)} \Big[r_{\pi}(x,y)-\EE_{y' \sim \pi_t}\big[\PP(y \succ y'|x)\big]+\log Z_{\pi_t}(x)\Big]^2\notag\\
        &=\EE_{x \sim \rho, y,y' \sim \pi_t(\cdot | x)}\big[r_{\pi}(x,y)-\ind(y \succ y'|x)+\log Z_{\pi_t}(x)\big]^2 + C_{\pi_t}\notag\\
        &= \EE_{x \sim \rho, y,y' \sim \pi_t(\cdot | x)}\big[r_{\pi}(x,y)-\ind(y \succ y'|x)+\log Z_{\pi_t}(x)\big]^2/2 \notag\\
        &\qquad + \EE_{x \sim \rho, y,y' \sim \pi_t(\cdot | x)}\big[r_{\pi}(x,y')-\ind(y' \succ y|x)+\log Z_{\pi_t}(x)\big]^2/2+ C_{\pi_t}\notag\\
        &= \EE_{x\sim \rho, y^w,y^l\sim \pi_t} \big[\ell (r_\pi,x,y^w,y^l,\pi_t)\big]+ C_{\pi_t},
    \end{align*}
    where $Z_{\pi_t}(x)= \sum_{{y}\in \cY} \pi_t({y}|{x}) \exp\big(r_t^*(x,y)/\beta\big)$ is the partition function, $C_{\pi_t} = \EE_{x \sim \rho, y, y' \sim \pi_t(\cdot | x)} \big[\mathbb{P}(y \succ y'|x) - \ind(y \succ y'|x) \big]^2 $ is the variance of behavior policy $\pi_t$ and the third equation holds because  $y,y'$ collected under the same behavior policy. Therefore, the proximal point update described in equation~\eqref{eq:single-update} is equal to the least square estimator:
    \begin{align*}
         \hat{\pi}_{t+1} \leftarrow \arg \min_{\pi \in \Pi} \EE_{x \sim \rho, y \sim \pi_t(\cdot | x)} \Big[r_{\pi}(x,y)-r_t^*(x,y)+\log Z_{\pi_t}(x)\Big]^2.
    \end{align*}
    Conditioned on Assumption \ref{assump: realizability-general}, there exists a policy $\pi_{t+1}^* \in \Pi$ such that $r_{\pi_{t+1}^*}(x,y)=\EE_{y' \sim \pi_t}\big[\PP(y \succ y'|x)\big]-\beta\log Z_{\pi_t}(x)$. Under this situation, according to the standard concentration argument, the least-squares estimator $\hat{\pi}_{t+1}$ satisfies
    \begin{align}
        \EE_{x \sim \rho, y \sim \pi_t(\cdot | x)} \big[r_{\hat{\pi}_{t+1}}(x,y)-r_{\pi_{t+1}^*}(x,y)\big]^2\leq  O\bigg(\frac{(R+1)^2 \cdot\log \big(|\Pi|/\delta\big)}{N}\bigg),\label{eq:4004}
    \end{align}
    where the ineqaulity holds due to Lemma 6 in \citet{xu2020upper} with the Assumption \ref{assump: boundedness-general}. Therefore, we have
    \begin{align*}
        &\EE_{x\sim \rho, (y_1,y_2)\sim \pi_t}\Big[\big(r_t^*(x,y_1)-r_t^*(x,y_2)-r_t(x,y_1)+r_t(x,y_2)\big)^2\Big]\notag\\
        &=  \EE_{x\sim \rho, (y_1,y_2)\sim \pi_t}\Big[\big(r_{\pi_{t+1}^*}(x,y_1)-r_{\pi_{t+1}^*}(x,y_2)-r_t(x,y_1)+r_t(x,y_2)\big)^2\Big]\notag\\
        &\leq 2\EE_{x\sim \rho, y_1\sim \pi_t}\big[r_{\pi_{t+1}^*}(x,y_1)-r_t(x,y_1)\big]^2+2 \EE_{x\sim \rho, y_2\sim \pi_t}\big[r_{\pi_{t+1}^*}(x,y_2)-r_t(x,y_2)\big]^2\notag\\
        &\leq  O\bigg(\frac{(R+1)^2 \cdot\log \big(|\Pi|/\delta\big)}{N}\bigg),
    \end{align*}
    where the first inequality holds due to $(x+y)^2\leq 2x^2+2y^2$ and the last inequality holds due to~\eqref{eq:4004}. Thus, we complete the proof of Lemma \ref{theorem:1-general}.
\end{proof}

\section{Additional Details in Experiments}
\subsection{Hyperparameters}\label{sec:hyper}
Our training is conducted on eight A100 GPUs, setting a global batch size of 64. This is done through setting a local batch size of 8 across all GPUs, with a gradient accumulation step of 1. We utilize the RMSProp optimizer for each iteration, tuning the learning rate and $\beta$ parameters.  Although we initially considered a range of learning rates from $[5\times10^{-7}, 1\times10^{-7}, 5\times10^{-8}]$, we found that a fixed learning rate of $5\times10^{-7}$ while adjusting $\beta$ across iterations results in enhanced performance.
The final $\beta$ value is set at 0.01 for the first iteration, 0.1 for the second iteration, and 1.0 for the third iteration. The extrapolation parameter $\alpha$ is set to be a constant 0.3.
In addition, the learning rate follows a linear schedule with a warm-up ratio of 0.1. For inference, we employ the \texttt{vllm} library~\citep{kwon2023efficient} for response generation, configured with a temperature of 0.7 and a top\_p of 0.9. The maximum token length for response generation is set to 2048 tokens. Finally, evaluations across all benchmarks are conducted using eight A6000 GPUs.

\subsection{Open LLM Leaderboard Evaluation}\label{sec:leaderboard} 
We provide the result of Open LLM Leaderboard. We calculate the average scores across six tasks, as well as the average of five tasks excluding GSM8k. This exclusion is due to our training's emphasis on general instruction-following ability, with no exposure to correct answers for math problems.
Table~\ref{tab:result-leaderboard} shows that APO demonstrates robust performance across multiple tasks. Notably, at iteration 2, APO achieves the highest overall average score of 67.02, outperforming the baseline models. When focusing on the average of five key tasks, excluding GSM8k, APO iteration 2 again leads with an impressive score of 71.90.
Although APO iteration 3 shows a slight decrease in the overall average score, it maintains a competitive edge. This decline is primarily attributed to the low correlation of multiple-choice question tasks with instruction-following abilities. we note that all trained baselines exhibit similar performance suboptimality on this metric, while the AlpacaEval performance increases.
% \todoh{Add more comparison with other iterative training methods in previous sections?}

\begin{table}[t!]
    \centering
    \caption{\textbf{Open LLM Leaderboard Evaluation}. Comparison of~\Method~with state-of-the-art iterative training algorithms, results are reported using \texttt{v0.4.1} of the \texttt{lm-evaluation-harness} library. Tasks and parameters follows~\citet{open-llm-leaderboard}.}
    \resizebox{0.99\textwidth}{!}{%
    \begin{tabular}{l | c c c c c c | c c }
    \toprule
        Models & Arc & TruthfulQA & WinoGrande & GSM8k & HellaSwag & MMLU & Avg. & Avg. 5 \\
        \midrule
Mistral-7B-Instruct-v0.2 & 63.74 &	66.81	& 77.90	& 42.00	& 84.77	& 59.12	& 65.72 & 70.47 \\
\midrule
% Snorkel (Mistral-PairRM-DPO)& 66.04 & 70.86 & 77.74 & 36.77 & 85.64 & 60.83 & 66.31 \\ 
% \midrule
% Contextual AI (KTO-Mistral-PairRM) &63.65 &	66.85	& 77.98	& 41.93	& 84.89	& 59.15	& 65.74 \\
% \midrule
\midrule
DPO Iter1  &	65.36	& 69.19	& 77.11	& 36.92	& 85.1	& 58.89	& 65.43 & 71.13 \\
DPO Iter2 	& 66.47	& 69.93	& 78.14	& 42.46 & 85.44	& 58.29	& 66.79 & 71.65 \\
DPO Iter3       & 66.72  & 68.07 & 77.51  & 41.47  & 85.74 & 58.85  & 66.39 & 71.37   \\
%\midrule
\midrule
\Method~Iter1 & 65.36	& 69.19	& 77.11	& 36.92	& 85.1	& 58.89	& 65.43 & 71.13 \\
\rowcolor{codepurple!10}\Method~Iter2 & 66.98 & 70.98 & 77.51 & 42.61 & 85.32 & 58.69 & \textbf{67.02} & \textbf{71.90} \\
\rowcolor{codepurple!10} \Method~Iter3 & 65.78 & 69.75 & 77.03 & 37.83 & 85.43 & 58.76 & 65.76 & 71.35 \\
    \bottomrule
    \end{tabular}%
    }
    \label{tab:result-leaderboard}
\end{table}
\subsection{Response Examples in Different Iterations}

\begin{table}[h!]
\caption{Generation example of our fine-tuned model by \Method~at different iterations.}
\small
\begin{tabularx}{\textwidth}{l|X}
\toprule
Prompt & Hi, my sister and her girlfriends want me to play kickball with them. Can you explain how the game is played, so they don't take advantage of me? \\
\midrule
Iteration 1& Hello! I'd be happy to help explain the rules of kickball to ensure that you feel confident and prepared for your game with your sister and her girlfriends. Kickball is a fun, social, and inclusive game that combines elements of baseball and soccer. Here's a breakdown of the essential rules:  1. Teams and Setup:    - Divide into two teams, ideally of equal numbers.    - Designate a field with a home base, a first base, a second base, a third base, and a kickball diamond.    - Choose a team to be in the field (defense) and one to be at bat (offense).  2. Equipment:    - A standard soccer ball (or a large, soft kickball) is used.    - No specialized equipment is required for players, except for comfortable clothing and athletic shoes.  3. Objective:    - The objective of the game is for the team at bat to score runs by kicking the ball and advancing around the bases, while the fielding team tries to prevent this by getting the runners out.  4. Batting:    - The game begins with a pitch (roll) from the pitcher (field player) to the kicker (batter).    - The kicker attempts to kick the ball with their foot (no toe-kicks or using hands are allowed).    - After a valid kick, the runner(s) may advance around the bases, trying to reach the next base before the ball reaches that base or a defensive player tags them.  5. Fielding:    - The fielding team's goal is to get the runners out.    - Players can attempt to catch a kicked ball in mid-air to retire the side (three consecutive outs), or they can touch a base with the ball before a runner reaches it to force them out.  6. Scoring:    - A run is scored when a player advances around all four bases and returns to home plate.    - Each inning is typically limited to a specific number of outs or runs scored (agree on this beforehand).  7. Fair Play:    - Encourage sportsmanship and ensure all players understand and respect the rules to maintain a fun and enjoyable experience for everyone involved.  8. Additional rules:    - Discuss any specific house rules or modifications to the game before starting, such as allowing multiple kicks per at-bat or implementing a mercy rule if one team is significantly ahead in runs.  I hope this information is helpful! If you have any further questions or need clarification on any rules, feel free to ask. Good luck and have a great time playing kickball with your sister and her girlfriends!\\
\bottomrule
\end{tabularx}
\label{tab:app_exp11}
\end{table}

\begin{table}[h!]
\caption{Generation example of our fine-tuned model by \Method~at different iterations.}
\small
\begin{tabularx}{\textwidth}{l|X}
\toprule
Prompt & Hi, my sister and her girlfriends want me to play kickball with them. Can you explain how the game is played, so they don't take advantage of me? \\
\midrule
Iteration 2& Hello! It's great that your sister and her friends have invited you to join them for a game of kickball. Kickball is a fun, social, and inclusive team sport that combines elements of baseball and soccer. Here's a basic explanation of how the game is played to ensure you feel prepared and confident on the field. Keep in mind that variations may exist depending on the specific rules your group chooses to follow, but this outline should provide a solid foundation.  1. Teams and Setup:  Divide into two even teams, ideally with similar numbers of players (typically 8-12 per team). Designate one team as the home team (they'll be up to bat first), and the other as the visiting team. Set up a diamond-shaped kicking area (the "kickline") approximately 60 feet from a base diamond, similar to baseball. Mark out four bases (first, second, third, and home) in a square shape around the diamond, with home plate located at the end of the kickline.  2. Objective: The objective of kickball is to score runs by hitting the ball with your foot and advancing around the bases before returning to home plate safely. The opposing team aims to get outs by fielding the ball and tagging runners with the ball or touching them with it while they're not on a base.  3. Batting: The batter stands at home plate and kicks the ball when it's pitched towards them by the pitcher from the opposing team. A runner may advance from their base at the start of each pitch, but they must return to that base before the ball is touched by a fielder or thrown to that base. If a runner advances too far before returning to their base, they risk being tagged out in a force play.  4. Fielding:  Fielders position themselves around the diamond and outfield to catch fly balls or field grounders. They can also attempt to tag runners with the ball while they're advancing between bases or when they reach a base. Once a fielder catches a fly ball or tags a runner, they record an out. A maximum of three outs per inning is typical.  5. Scoring:  Runners score by advancing around the bases and returning safely to home plate. A run is scored when a player reaches home plate after kicking the ball and successfully advances around all four bases without being tagged out or reaching base before a preceding runner is retired (resulting in that runner being forced out).  6. Inning Structure:  Each team gets an opportunity to bat in each half-inning. The inning ends when three outs are recorded by the fielding team. Typically, games consist of multiple innings until one team reaches a predetermined number of runs (e.g., five or ten).  7. Fair Play:  Ensure that everyone follows the rules and plays in good spirits. Kickball is meant to be an enjoyable experience for all involved, so focus on having fun while staying mindful of safety and fairness. If you feel unsure about any aspect of the game or suspect your sister and her friends might be taking advantage of you, communicate openly with them about your concerns and work together to find solutions that maintain a balanced and enjoyable playing field for everyone.\\
\bottomrule
\end{tabularx}
\label{tab:app_exp12}
\end{table}

\begin{table}[h!]
\caption{Generation example of our fine-tuned model by \Method~at different iterations.}
\small
\begin{tabularx}{\textwidth}{l|X}
\toprule
Prompt & Hi, my sister and her girlfriends want me to play kickball with them. Can you explain how the game is played, so they don't take advantage of me? \\
\midrule
Iteration 3& Hello! It's great that your sister and her friends have invited you to join their kickball game. Kickball is a fun, social, and inclusive sport, well-loved for its simplicity and energy. To ensure you feel confident and prepared for the game, here's a basic explanation of how kickball rules typically work. Keep in mind that variations exist, so always clarify any specific rules with your group before playing.  1. Teams and Setup: Divide into two teams, ideally through an evenly distributed random draw or team selection process. Designate a field with clear boundaries using cones or markers \u2013 a diamond shape or rectangle works well. Each team defends a designated home base (typically marked by a large cone or object).  2. Equipment: A standard soccer ball is commonly used for kickball games. No specialized equipment is required beyond that, although gloves may be worn for comfort or to improve grip on the ball.  3. Objective: The objective of the game is to score points by kicking the ball and successfully advancing around the bases \u2013 first base, second base, third base, and home plate \u2013 before returning to home base. A run is scored when a player reaches home base after completing a full circuit around the bases.  4. Starting the Game: A coin toss determines which team bats first. The fielding team positions themselves around the bases and in the outfield, ready to receive the kicked ball. The batting team gathers near home plate with a designated kicker ready to start the inning.  5. Kicking and Running: The kicker takes a stance and kicks the ball forcefully with their foot, aiming for a solid contact to launch the ball as far as possible. Runners on base attempt to advance based on where the ball lands and how the fielding team reacts.  6. Fielding Rules: Fielders must retrieve or touch the ball before a runner advances to the next base. Once a fielder has the ball, they can tag runners with it to record outs. However, runners can advance on a missed catch or an overthrow. If a fielder touches a base before a runner arrives, the runner is out.  7. Scoring: A run is scored when a batter safely kicks the ball and reaches first base, while teammates advance around the bases and return to home plate before the inning ends (three outs).  8. Inning Structure: Each team takes turns at-bat and in the field. Typically, an inning lasts until the fielding team records three outs, at which point the teams switch roles. A game consists of multiple innings, with the team with the most runs at the end declared the winner.  9. Fair Play and Sportsmanship: Encourage a friendly and respectful environment, following the spirit of the game. Ensure all players understand the rules, and focus on having fun rather than taking advantage of one another. If needed, consult your group to establish additional guidelines or modifications to accommodate skill levels and preferences.  10. Additional Tips: - Wear comfortable clothing and shoes appropriate for running and kicking.  - Stay hydrated and take breaks as needed.  - Communicate clearly with teammates and opponents.  - Focus on improving your skills, both as a kicker and a runner, to contribute effectively to your team.  With this foundational knowledge, you should feel more confident joining your sister and her friends for a kickball game. Enjoy the experience and have fun!\\
\bottomrule
\end{tabularx}
\label{tab:app_exp13}
\end{table}

\begin{table}[h!]
\caption{Another generation example of our fine-tuned model by~\Method~at different iterations.}
\small
\begin{tabularx}{\textwidth}{l|X}
\toprule
Prompt & How did US states get their names? \\
\midrule
Iteration 1 & The naming of US states reflects a rich and varied history, influenced by explorers, Native American cultures, natural features, and political considerations. Here are some common ways US states got their names:  1. Names derived from Native American words: Many states' names come from Native American languages or place names. For example, California comes from the Spanish rendition of the Native American word "Quercus californius," meaning "californio," or "californian," which referred to the native people or the abundant california trees. Florida's name comes from the Spanish word "Pascuan Floridus," meaning "flowery land," a term used by explorer Juan Ponce de Le\u00f3n, likely inspired by the abundant natural beauty he encountered.  2. Names inspired by explorers and European settlers: Some states are named after famous explorers, settlers, or colonial founders. For instance, Virginia was named after Queen Elizabeth I of England, who was known as the "Virgin Queen" due to her status as an unmarried monarch. Georgia was named after King George II of Great Britain in honor of his support for the founding of a colony there.  3. Names based on geographical features: Several states are named after prominent geographical features, such as mountains, rivers, or bodies of water. For example, Colorado's name comes from the Spanish word "colorado," meaning "red," a term often used to describe the reddish hue of the area's soil and water. Montana's name means "mountainous" in Latin.  4. Names reflecting historical events or early settlement patterns: Some states' names are connected to significant historical events or early settlement patterns. For instance, Maine was originally called "Norumbega" based on tales of a mythical northern land of abundance, but was later renamed by English explorer John Smith to reflect its northern location relative to Massachusetts. Texas was named by the Spanish as "Tejas," meaning "friends" or "allies," due to the friendly reception given to Spanish explorers by the local Caddo and Tonkawa tribes.  5. Names honoring political or ideological affiliations: A few states' names reflect political or ideological affiliations. For example, Vermont was originally called "New Vermont" or "New England's Plymouth" due to its ties to the English county of Vermont and the Pilgrims of Plymouth Colony. Later, when it sought independence from New York, the name was adopted as a tribute to the revolutionary ideals of the Republic of Vermont.  Overall, US states' names offer a fascinating glimpse into the rich history of exploration, settlement, and political development that shaped the American nation.\\
\bottomrule
\end{tabularx}
\label{tab:app_exp21}
\end{table}

\begin{table}[h!]
\caption{Another generation example of our fine-tuned model by~\Method~at different iterations.}
\small 
\begin{tabularx}{\textwidth}{l|X}
\toprule
Prompt & How did US states get their names? \\
\midrule
Iteration 2 & US states have varied histories regarding how they received their names, with some derived from native place names or geographical features, others from explorers or colonial founding figures, and still others from natural elements or symbolic concepts. Here's a brief overview of how some common state names came to be: 1. Alabama: Derived from the Native American Chickasaw and Creek words "alhahoma" or "allapaha," meaning "thicketed place" or "place with ticklenut trees." 2. Alaska: Russian explorer Vitus Bering, who first documented the region in 1741, named it "Alaska" from the Russian words "alas k\u0430\u044f," meaning "great land" or "large country." 3. Arizona: Named after the Arizona Mining District, which in turn was derived from the Spanish term "Arizonia," meaning "arid" or "dry," likely in reference to the desert climate. 4. Arkansas: Originally called the "Land of Hieroglyphics" by early European explorers due to the abundance of Native American petroglyphs found in the region. Later, it was named Arkansas after the Quapaw tribe's name for a village near present-day Osceola, which translates to "people of the south wind." 5. California: Spanish explorer Juan Rodriguez Cabrillo is credited with giving the name "California" to the region in 1542, possibly inspired by the ancient mythical island of Califoria described by Greek geographer Ptolemy as lying offshore west of the Pacific Ocean, abundant with gold and paradisiacal conditions. 6. Colorado: Derived from the Spanish word "colorado," meaning "red," likely in reference to the red soil and red sandstone formations found in the region. 7. Connecticut: Named by Dutch explorer Adriaen Block in honor of the Mohegan and Montauk Sachems (tribal leaders) Huskera or Quinipet and Sinigtot or Sitenico, whom he met in 1614. The name Connecticut comes from the Mohegan word "quinnipiac," meaning "long tidal river." 8. Delaware: Named after Sir Thomas West, 3rd Baron De La Warr, an English nobleman who served as the first colonial governor in 1630. The name Delaware comes from the Delaware River, which was named after the Lenape Native American tribe, whose name means "upstream people" or "people of the cleared land." 9. Florida: Spanish explorer Juan Ponce de Le\u00f3n is credited with discovering Florida in 1513 and naming it after "Pascua Florida," meaning "feast of flowers," likely due to the abundant blooming vegetation he encountered during his expedition. 10. Georgia: Named after King George II of Great Britain in honor of his support for establishing a colony there in 1732. 11. Hawaii: The islands' indigenous Polynesian name, "Hawaii Loa," translates to "long home" or "homeland," reflecting the archipelago's unique identity as a distinct and unified cultural entity. The name Hawaii became widely known to Europeans through British explorer James Cook, who visited in 1778 and adopted it for the larger island of Hawaii in his charts. 12. Idaho: Derived from a Shoshone Native American word, "hee-da-ho," meaning "gem of the mountains" or "the sun comes from the waters." This name was popularized during the late 19th century when Idaho was being considered for statehood. 13. Illinois: Derived from a Native American word, "Illiniwek," meaning "tribe of superior men" or "tribe dwelling properly." This term referred to various tribes within the Illini Confederacy, which inhabited much of present-day Illinois prior to European contact. 14. Indiana: Named after the Native American term "Indian\u00e1," meaning "land of the Indians," reflecting that the region was inhabited by various Native American tribes when European settlers arrived. 15. Iowa: Derived from a Native American word, "ayoua," meaning "prairie" or "beautiful land." This name was popularized during the late 19th century when Iowa was being considered for statehood. These examples represent just a fraction of US states and their unique origin stories. Many other states have names rooted in Native American languages, European explorers, geographical features, or symbolic concepts.\\
\bottomrule
\end{tabularx}
\label{tab:app_exp22}
\end{table}

\begin{table}[h!]
\caption{Another generation example of our fine-tuned model by~\Method~at different iterations.}
\small
\begin{tabularx}{\textwidth}{l|X}
\toprule
Prompt & How did US states get their names? \\
\midrule
Iteration 3 & US states have varied histories regarding how they earned their names. Here are some common ways and specific examples: 1. Indigenous Place Names: Many state names originate from Native American languages spoken by tribes inhabiting the regions prior to European exploration. For instance:   - Florida: Derived from the Native American word  "Pvscaleola " or  "Scal scal lla, " meaning  "flowing water " or  "land of flowers, " depending on the specific indigenous tribe. European explorer Juan Ponce de Leon may have chosen this name based on reports of native peoples and their designations for the land.   - Missouri: Derived from the Missouri River, which in turn took its name from the Miwoksi or Missouri Nation, a Siouan-speaking people inhabiting the area at the time of French explorer Louis Jolliet and Jacques Marquette's passage in 1673. Their name translates roughly to  "people of large canoes. " 2. European Explorers and Settlers: Many states were named after early European explorers, settlers, or monarchs. For example:   - Virginia: Named after Queen Elizabeth I of England in honor of her reign and patronage, specifically her title  "the Virgin Queen. " Sir Walter Raleigh, who received a royal charter to establish colonies in North America, chose this name.   - California: Named by Spanish explorer Vasco N\u00fa\u00f1ez de Balboa in 1542, who gave it the Latinized name  "California " after the Classical mythological figure Californias, who was believed to inhabit a paradisiacal island on the western edge of the known world. 3. Natural Features: Some states bear names inspired by distinctive geographic or topographical features. For instance:   - Colorado: Derived from the Spanish  "colorado, " meaning  "red " \u2013 a reference to the red soil and rock formations found throughout the region.   - Mo  ntana: Named for the Latin word  "montanus, " meaning  "mountainous. " 4. Historical Events: A few states have names connected to significant historical events. For example:   - Texas: Derived from the Spanish  "Tejas, " meaning  "friends " or  "allies. " The name referred to the indigenous peoples who initially welcomed Spanish explorers in the late 16th century. Later, Mexico used the name when it ruled the region, and after Texas gained independence in 1836, the name remained.   - Wisconsin: Named by French explorer Jean Nicolet in 1634 after the native peoples he encountered, the  "Wisconsin " or  "Meskousing " \u2013 an Algonquian-speaking people. The name  "Wisconsin " is believed to mean  "gathering place " or  "place of the red people. " 5. Symbolic Meanings: In some cases, state names reflect symbolic or metaphorical concepts. For example:   - Nevada: Derived from the Spanish  "nevado, " meaning  "snow-covered mountain range, " which aptly describes the state's mountainous terrain and its name's association with the silver rush that brought settlers to the area in the late 19th century (the silver being referred to as  "white gold " or  "silver snow "). These examples illustrate just a few ways US states have acquired their names throughout history. The specific origins of each state name are complex and often intertwined with the rich tapestry of indigenous, European, and American cultural influences that have shaped the United States.\\
\bottomrule
\end{tabularx}
\label{tab:app_exp23}
\end{table}
\newpage
\clearpage
\bibliographystyle{ims.bst}
\bibliography{reference.bib}
\end{document}